\newtheorem{thm}{Theorem}
\newtheorem{lem}[thm]{Lemma}
\newtheorem{prop}[thm]{Proposition}
\newtheorem{assumption}[thm]{Assumption}
\newtheorem{defn}[thm]{Definition}
\newtheorem{remark}[thm]{Remark}
\newtheorem{target}[thm]{Target}
\def \l{\left}
\def \r{\right}
\def \a{\alpha}
\def \R{\mathbb{R}}
\def \ol{\overline}
\def \g{\gamma}
\def \wt{\widetilde}
\def \s{\sigma}
\def \der{\operatorname{d}\hspace{-1pt}} 
\def \Der{\operatorname{D}\hspace{-1pt}}
\def \A{\mathcal{A}}
\def \P{\mathcal{P}}
\def \F{\text{face}}
\def \P{\mathcal{P}}
\def \dist{\operatorname{dist}}
\def\D{\Delta}
\def\d{\delta}
\def\O{\mathcal{O}}
\def\range{\mathrm{range}}
\def\rank{\mathrm{rank}}
\def\T{\mathcal{T}}
\def\M{\mathcal{M}}
\def \n{\sharp}
\def\int{\mathrm{int}}
\def\trace{\mathrm{tr}}
\def\F{\mathrm{F}}
\def\N{\mathcal{N}}
\def\T{\mathcal{T}}
\def\ker{\mathrm{null}}
\def\Der{\mathrm{D}}
\def\cl{\mathrm{cl}}
\def\c{\color{black}}
\newcommand*\circled[1]{\tikz[baseline=(char.base)]{
             \node[shape=circle,draw,inner sep=.8pt] (char) {#1};}}
\definecolor{darkblue}{rgb}{0.0, 0.0, 0.55}
\definecolor{navyblue}{rgb}{0.0, 0.0, 0.5}
\definecolor{darkcerulean}{rgb}{0.03, 0.27, 0.49}
\title{{Over-Parametrized Matrix Factorization\\ in the Presence of\\ Spurious Stationary Points}} 
\author{Armin Eftekhari\thanks{AE is with the Department of Mathematics and Mathematical Statistics at Umea University, Sweden. E-mail: armin.eftekhari@umu.se. AE is indebted to Nicolas Boumal, Konstantinos Zygalakis and Florentin Goyens for their invaluable feedback on an early version of this manuscript, and for related discussions that inspired this work. This work was partially supported by the Wallenberg AI, Autonomous Systems and Software Program (WASP) funded by the Knut and Alice Wallenberg Foundation.  } }
\begin{document}

\maketitle

\begin{abstract} 
M{{\color{black}otivated by {\c the} emerging role {\c of interpolating machines in signal processing and machine learning, this work considers} the computational aspects of over-parametrized 
matrix factorization.}}

{{\c In this context,} the optimization landscape may contain spurious stationary points (SSPs), which are proved to be  full-rank matrices.
The presence of these SSPs means that it is impossible to hope for any global guarantees in over-parametrized matrix factorization. {\c For example,} when initialized at an SSP, the gradient flow will be trapped there forever.

{\c Nevertheless, despite these SSPs,} we establish in this work that the gradient flow of the {\c corresponding} \emph{merit function} converges to a global minimizer, provided that its initialization is rank-deficient and sufficiently close to the \emph{feasible set} {\c of the optimization problem}. We numerically observe that a heuristic discretization of the proposed {\c gradient} flow, inspired by primal-dual algorithms,  is successful when initialized randomly.

{\c Our} result is in sharp contrast with the local refinement methods which require an initialization close to the \emph{optimal set} of the optimization problem. 
More specifically, we successfully avoid} the traps set by the SSPs because {\c the gradient flow} remains rank-deficient at all times, and \emph{not} because there are no SSPs nearby. The latter is the case for the local refinement methods. Moreover, the widely-used restricted isometry property plays no role in our main result.


\end{abstract}

\begin{IEEEkeywords}
r{ank-constrained matrix factorization,  Burer-Monteiro factorization, over-parametrization, interpolation, nonconvex optimization, stationary points}
\end{IEEEkeywords}

\section{Introduction}


{Rank-constrained matrix  factorization from limited data}
is {central to} various applications in signal processing and machine learning~\cite{davenport2016overview}, {and}
{more recently} 
{has} also {served} as a platform to gain theoretical insight into {unexplained} phenomena in deep  neural networks~\cite{gunasekar2017implicit,li2018algorithmic,eftekhari2020implicit}. 

Despite recent strides, we  face key theoretical questions   {in matrix factorization}, 
particularly in {the} emerging areas {that are motivated by the study of neural networks}. The aim of this work is to take a  step towards answering one such question. 



To be concrete, we are interested in the computational {(rather than statistical)} aspects of solving the 
problem
\begin{align}
    \min_{U\in \R^{d\times p}} \, \| U\|_{\F}^2 \,\, \text{subject to}\,\, \A(UU^\top) = b, \,  \|U\|\le \xi,
    \label{eq:bm0}
\end{align}
where $\A:\R^{d\times d}\rightarrow \R^m$ is a linear operator and $b\in \R^{m}$. Above, we have limited the spectral norm of $U$ to $\xi>0$ for technical convenience. By setting $\xi$ sufficiently large, we can practically ignore the constraint $\|U\|\le \xi$ {in~\eqref{eq:bm0}}. 

{Even though our interest in problem~\eqref{eq:bm0} is purely computational,} the {statistical} significance of problem~\eqref{eq:bm0} can be motivated 
{as follows.}
\begin{itemize}[leftmargin=*,wide]
\item {\c The (often nonconvex)} problem~\eqref{eq:bm0} {\c corresponds to}  {the Burer-Monteiro factorization~\cite{burer2005local} of} the {(convex)} nuclear norm minimization problem. {\c This latter convex problem is} at the heart of low-rank matrix sensing and completion, phase retrieval and quadratic sensing, and blind deconvolution, to name a few~\cite{chi2019nonconvex}. 

Compared to its convex analogue, {which has~$d(d+1)/2$ optimization variables},  problem~\eqref{eq:bm0} has~$pd$ variables. {\c Therefore, solving problem~\eqref{eq:bm0}  can} offer computational {gains} {when~$p$ is sufficiently small}~\cite{chi2019nonconvex,boumal2016non}. 

\item {\c Solving} {\c problem~\eqref{eq:bm0} is equivalent to  minimizing the empirical risk of the shallow linear network $x\rightarrow UU^\top x$ with weight decay regularization. The corresponding  training data is} collected in {the vector}~$b$~\cite{arora2018convergence}. {Understanding linear neural networks, such as this one, is a necessary first step in studying (nonlinear) neural networks~\cite{eftekhari2020training}.}

\end{itemize}

{\textbf{Kurzgesagt.} In a nutshell, the  contribution of this work is a gradient flow that provably solves problem~\eqref{eq:bm0} in the over-parametrized regime ($p \gtrsim m/d$), whenever this flow is initialized at a rank-deficient matrix~$U_0$ 
that is sufficiently close to the feasible set of problem~\eqref{eq:bm0}.   

{\c In the over-parametrized regime,} note that we cannot possibly hope for a global scheme for solving problem~\eqref{eq:bm0} because the feasible set of problem~\eqref{eq:bm0} may contain spurious stationary points. {\c These spurious points} can trap the gradient flow, when initialized arbitrarily.

It is then necessary to restrict the initialization~$U_0$ in some way, and our particular choice of a ``sufficiently feasible''~$U_0$ has several precedents within the  nonconvex optimization literature~\cite{eftekhari2020implicit,boumal2016non,sahin2019inexact}. 


We will also describe, albeit less formally, how one can find  an appropriate initialization for the gradient flow.

At the same time, our contribution is fundamentally different from the local refinement methods, within the signal processing literature: {\c These methods} rely on  local strong convexity in a small neighborhood of an isolated global minimizer~\cite[Chapter 5]{chi2019nonconvex}. 

{\c To be specific}, even though our gradient flow requires an initialization~$U_0$  that is sufficiently close to the feasible set of problem~\eqref{eq:bm0}, 
this initialization might be \emph{far} from any global minimizer of problem~\eqref{eq:bm0}. 
Moreover, a small neighborhood of the feasible {\c set of problem~\eqref{eq:bm0}} contains \emph{all} spurious stationary points of problem~\eqref{eq:bm0}, whereas a small neighborhood of {\c an isolated} global minimizer contains \emph{no} other stationary points.


This work appears to be the first to study problem~\eqref{eq:bm0} in the over-parametrized regime ($p\gtrsim m/d$). Therefore, to better motivate our  contributions,
let us first discuss  below the concepts of interpolation and under/over-parametrization, both central to this work. Even though we are solely interested in the computational aspects of solving problem~\eqref{eq:bm0}, whenever possible, we will also motivate the statistical significance of these concepts to enrich the presentation. }

{\subsection{Interpolation}}

{A distinct feature of problem~\eqref{eq:bm0} is its {interpolation} property. That is,} by design, any global minimizer $\ol{U}$ of problem~\eqref{eq:bm0} satisfies the equality constraints  $\A(\ol{U}\,\ol{U}^\top) = b$. {For example, if we interpret~$x\rightarrow \ol{U}\, \ol{U}^\top x$ as a 
shallow linear network~\cite{eftekhari2020training}, this network perfectly interpolates its training data~$b$ and {\c thus} achieves zero training error.} 

This interpolation property has {attracted growing attention within machine learning, in part because modern deep neural networks appear to satisfy this property~\cite{zhang2016understanding}.}
{\c Consequently,} better understanding the {statistical and computational strengths of} interpolating 
learning machines has 
become a major research target~\cite{belkin2018understand,hastie2019surprises}. 
{For instance,} stochastic gradient descent provably exhibits {built-in} variance reduction under the interpolation property~\cite{vaswani2019painless}. 

{To summarize, motivated by the  success of modern neural networks, our interest in problem~\eqref{eq:bm0} partly stems from the need to better understand the computational aspects of interpolating learning machines, such as problem~\eqref{eq:bm0}. }

{Moreover, through the lens of signal processing, the interpolation property of
problem~\eqref{eq:bm0} indicates a subtle but important shift in statistical perspective,  described next.

In the context of low-rank matrix recovery~\cite{davenport2016overview}, let~$b:=\A(X^\natural)+e$ for a hidden model~$X^\natural\in\R^{d\times d}$ and measurement noise vector~$e\ne 0$.  Alternatively, we can {\c also} let~$b:=\A(X^\natural+E)$, where the matrix~$E\ne 0$ represents the model mismatch.   In both cases, the equality constraints~$\A(UU^\top)=b$ in problem~\eqref{eq:bm0} replace the relaxed constraints~$\|\A(UU^\top)-b\|\le \epsilon$,
which is more common in signal processing~\cite{candes2010matrix}. 

This shift in statistical perspective imitates the success of deep neural networks, which often interpolate their training data and yet generalize well on test data~\cite{zhang2016understanding}.
Furthermore, whenever reliable prior knowledge about the probability distributions of the noise~$e$ and  mismatch~$E$ is lacking, it  might be wise to opt for problem~\eqref{eq:bm0}, instead of incorporating the relaxed constraints
with a {\c (possibly)} incorrect choice of norm~$\|\cdot\|$ and~$\epsilon$.}

\subsection{{Under- and Over-Parametrization}}

By counting the number of optimization variables {in problem~\eqref{eq:bm0}}, we can 
distinguish two regimes: {Under- and over-parametrized.}

{The focus of this work is on the over-parametrized regime of problem~\eqref{eq:bm0}, which  has never been studied before, to the best of our knowledge. To better appreciate the computational challenges of the over-parametrized regime, both regimes are 
juxtaposed together below.} 
    
    


\begin{enumerate}[leftmargin=*,wide]
\item ({\sc Under-parametrized regime})
When {$p \lesssim m/d$}, the landscape of problem~\eqref{eq:bm0} is often benign {in the sense that  problem~\eqref{eq:bm0}, even though nonconvex,} has no spurious stationary points~\cite{chi2019nonconvex}. {\c Therefore, in this regime, problem~\eqref{eq:bm0}} can be solved to global optimality {with} a {range} of algorithms, {including the} gradient descent {algorithm and} its perturbed variants~\cite{panageas2016gradient,jin2017escape}.

{More specifically, often in the under-parametrized regime, \emph{every} feasible point of problem~\eqref{eq:bm0} is a global minimizer. That is, the target function~$\|U\|_{\F}^2$ in problem~\eqref{eq:bm0} is  redundant and it  suffices to minimize the feasibility gap~$\|\A(UU^\top)-b\|_2^2$. The latter can be done successfully with, for example, the gradient descent algorithm~\cite{chi2019nonconvex,li2019non,panageas2016gradient}.}

{As a toy example,  in the left panel of Figure~\ref{fig:vis}, the (discretized) gradient flow of the feasibility gap converges to the only feasible point of problem~\eqref{eq:bm0}, which is unique up to a sign. This limit point is also evidently the global minimizer of problem~\eqref{eq:bm0} in this toy example.}



\vspace{2pt}
{From a signal processing perspective, the under-parametrized regime of problem~\eqref{eq:bm0} is particularly well suited for low-rank matrix recovery in the absence of noise:  Let~$b:=\A(X^\natural)$ for a rank-$r$ matrix~$X^\natural\in \R^{d\times d}$, and suppose that~$\mathcal{A}$ is a generic  linear operator. Then, with the choice of~$p=r$, solving problem~\eqref{eq:bm0} uniquely recovers the true model~$X^\natural$, provided that~$m\gtrsim pd = rd$. {\c Here, we have ignored} logarithmic factors for simplicity~\cite{chi2019nonconvex}.     }

\item ({\sc Over-parametrized regime}) 
{In contrast, }when~{$p \gtrsim m/d$},  the feasible set of problem~\eqref{eq:bm0} may contain spurious stationary points which can trap first- or second-order optimization algorithms, {including the} gradient descent. The presence of spurious stationary points means that we \emph{cannot}  hope to globally solve problem~\eqref{eq:bm0}. For example, {\c if we initialize any first-order optimization algorithm} at a spurious stationary point of problem~\eqref{eq:bm0}, {\c the algorithm} will be trapped there forever!

{\c In the over-parametrized regime, we can visualize the computational challenges of solving problem~\eqref{eq:bm0} with a numerical example:}
{In a precise sense,
solving the (constrained) problem~\eqref{eq:bm0} is  equivalent to minimizing its (smooth) 
\emph{merit function}~\cite{nocedal2006numerical}.}
{In the right panel of Figure~\ref{fig:vis}, the (discretized) gradient flow of this {merit} function  converges to a particular feasible point of problem~\eqref{eq:bm0} which is \emph{not} a global minimizer. 
}

{This discouraging observation rules out the possibility of a global scheme for solving problem~\eqref{eq:bm0}, but not all is lost. Indeed, the contribution of this work is a non-global (but also non-local) scheme for solving problem~\eqref{eq:bm0} {\c in the over-parametrized regime.}
}


{Let us next motivate the statistical significance of  the over-parametrized regime. First, recall that modern neural networks are highly over-parametrized~\cite{zhang2016understanding}. In {\c this} sense, the over-parametrized regime of problem~\eqref{eq:bm0}  more faithfully represents the training of a neural network with weight decay, compared to the under-parametrized regime.
}

{Second, from a signal processing perspective, the over-parametrized regime can be motivated as follows.  Consider the problem of low-rank matrix recovery in  the presence of noise: Let~$b:=\A(X^\natural)+e$ for a  hidden rank-$r$ model~$X^\natural$ and measurement noise~$e\in\R^m$.

With this choice of~$\A,b$ and~$p=r$, problem~\eqref{eq:bm0} is not necessarily even feasible because of its interpolation property! That is, with the choice of~$p=r$, we cannot necessarily find a matrix that satisfies the constraints of problem~\eqref{eq:bm0}.  
A larger value of~$p$ is often needed. In particular, in view of the Pataki's lemma~\cite{pataki1998rank,polik2007survey}, it suffices to set~$p\gtrsim \sqrt{2m}$. ({\c Under the mild assumption that}~$m\le 2d^2$, {\c it is easy to verify that} $p\gtrsim \sqrt{2m} \Longrightarrow p \gtrsim m/d$. {\c That is, $p\gtrsim \sqrt{2m}$ falls well within the over-parametrized regime.}) 
{\c This} observation motivates the study of problem~\eqref{eq:bm0} in the over-parametrized regime, {\c from a signal processing viewpoint.} 

On the downside,  the price that we pay for the interpolation property in problem~\eqref{eq:bm0} is a  larger computational burden: Larger~$p$ in the over-parametrized regime means more variables to optimize in problem~\eqref{eq:bm0}. 
As a result, in the context of low-rank matrix recovery, where problem~\eqref{eq:bm0} is the Burer-Monteiro factorization of a convex optimization problem~\cite{chi2019nonconvex}, some \emph{but not necessarily all} of the computational gains of the Burer-Monteiro factorization will be lost inevitably.
}
\end{enumerate}

\subsection{{Scope and} Contributions}

 {
 At last, we are now in position to  fix the scope of this work: We are interested here in the computational (rather than statistical) aspects of solving the (interpolating) machine~\eqref{eq:bm0} in the over-parametrized regime ($ p \gtrsim m/d$). This has never been studied  before, to the best of our knowledge. 
 }



{ Let us summarize  below the contributions of this work  in a simplified and, at times, inaccurate language. In short,} this paper designs a gradient flow that  provably solves  the nonconvex problem~\eqref{eq:bm0} to global optimality in the over-parametrized regime,
{even though the optimization landscape may contain  spurious stationary points.} 
{An informal statement of our main result is presented below.} 

{
\begin{thm}[{\sc Simplified main result, Theorem~\ref{thm:main0}}]\label{thm:informal} Suppose that~$p\gtrsim m/d $. Suppose also that  problem~\eqref{eq:bm0} has a finite  optimal value and a tight convex relaxation. We also consider  the set
\begin{equation}
    \M_b := \l\{ U: \A(U U^\top) = b,\, \|U\|<  \xi \r\},
\end{equation}
which contains all (strictly) feasible matrices in problem~\eqref{eq:bm0}. Suppose that~$\M_b$ is a smooth submanifold of~$\R^{d\times p}$. 

Then there exists a gradient flow that almost surely  converges in limit to a global minimizer of problem~\eqref{eq:bm0}, provided that its initialization~$U_0\in \R^{d\times p}$ is rank-deficient and sufficiently close to the manifold~$\M_b$. 
(Such an initialization matrix  exists under mild assumptions.)
\end{thm}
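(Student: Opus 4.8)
The plan is to recast problem~\eqref{eq:bm0} as the unconstrained global minimization of a smooth \emph{merit function} $F:\R^{d\times p}\to\R$ and then to follow the gradient flow $\dot U=-\nabla F(U)$ from the prescribed initialization. A natural choice is a Fletcher-type exact penalty, $F(U)=\|U\|_\F^{2}+\widehat\lambda(U)^{\top}(\A(UU^{\top})-b)+\tfrac{\mu}{2}\|\A(UU^{\top})-b\|^{2}$, with $\widehat\lambda(U)$ the least-squares multiplier estimate and a smooth spectral barrier appended to enforce $\|U\|<\xi$; the hypothesis that $\M_b$ is a smooth submanifold is exactly what makes $\widehat\lambda(\cdot)$ well defined and $F$ smooth in a neighborhood of $\M_b$ (equivalently, a constraint qualification holds there). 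Using that problem~\eqref{eq:bm0} has a finite optimal value and a tight convex relaxation, one checks that for $\mu$ large enough the global minimizers of $F$ coincide with those of problem~\eqref{eq:bm0}, so it remains to show that the gradient flow of $F$ --- not merely its infimum --- reaches one.

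The decisive observation is structural: $\nabla F(U)$ factors as $\nabla F(U)=S(U)\,U$ for a symmetric-matrix-valued, locally Lipschitz map $S$. Indeed $\nabla_U\|U\|_\F^{2}=2U$; for any smooth $g$, $\nabla_U\,g(\A(UU^{\top}))=2\,\A^{*}(\nabla g)\,U$ with $\A^{*}(\nabla g)$ symmetric; the multiplier estimate $\widehat\lambda(U)$ depends on $U$ only through $UU^{\top}$, so its contribution is again of this form; and even the spectral barrier contributes a term $\propto(vv^{\top})U$ with $v$ a leading singular vector of $U$. Hence the flow reads $\dot U(t)=-S(U(t))\,U(t)$, so $U(t)=\Phi(t)\,U_0$ where $\Phi$ solves the linear matrix ODE $\dot\Phi=-S(U(t))\,\Phi$, $\Phi(0)=I$. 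Since $\Phi(t)$ is invertible for all $t$, we get $\rank U(t)=\rank U_0<p$ for every $t\ge 0$: the flow remains rank-deficient for all time. This is precisely how it dodges the spurious stationary points of problem~\eqref{eq:bm0}, which --- as established earlier --- are all full-rank: we avoid them not because none lie nearby, but because the trajectory never leaves the rank-deficient locus.

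I would then confine the trajectory. Since $F$ is non-increasing along the flow, if $U_0$ is close enough to $\M_b$ it lies in a sub-level set of $F$ that is a compact neighborhood $\mathcal{V}$ of $\M_b$ contained in $\{\|U\|<\xi\}$ --- such a set exists because $F$ is small on $\M_b$, grows as one moves away from it, and the barrier diverges at $\|U\|=\xi$. Hence the flow exists for all time and stays in $\mathcal{V}$. On $\mathcal{V}$ the function $F$ is real-analytic, so the Lojasiewicz gradient inequality upgrades accumulation to genuine convergence, $U(t)\to U_\infty$, a stationary point of $F$. By the choice of $F$ (an exact penalty restricted to a thin tube around the feasible manifold), stationarity in $\mathcal{V}$ forces feasibility, $\A(U_\infty U_\infty^{\top})=b$, so $U_\infty\in\closure(\M_b)$; and $U_\infty=\lim_t\Phi(t)U_0$ gives $\rank U_\infty\le\rank U_0<p$, i.e.\ $U_\infty$ is a rank-deficient feasible point.

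Finally, optimality and the ``almost surely'' clause. A rank-deficient stationary point of $F$ is not a spurious stationary point of problem~\eqref{eq:bm0}, so if it is moreover a local minimizer it is a global one; equivalently, rank-deficiency together with second-order stationarity and the tight relaxation yields a dual certificate that $U_\infty U_\infty^{\top}$ solves the convex program --- the Burer--Monteiro argument, which applies here because $p\gtrsim m/d$ exceeds the Pataki rank bound. To obtain ``almost surely'': the rank-deficient stationary points of $F$ in $\mathcal{V}$ that fail to be local minimizers are strict saddles (one can strictly decrease $F$ by a rank-raising perturbation), and by the stable-manifold theorem the initializations whose flow converges to a strict saddle form a Lebesgue-null set; hence for almost every rank-deficient $U_0$ near $\M_b$, $U_\infty$ is a local --- therefore global --- minimizer. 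Existence of a valid $U_0$: under the stated hypotheses $\M_b$ is a nonempty smooth manifold, and a rank-deficient matrix arbitrarily close to it is easily produced (pad a rank-$r$ interpolant of $b$ with extra zero columns, then project onto $\M_b$), so a suitable randomized initialization is available. The step I expect to be the main obstacle is the confinement-and-convergence argument of the third paragraph --- ruling out escape toward $\|U\|=\xi$ or drift away from $\M_b$, promoting ``accumulation point'' to ``limit'', and keeping the tube $\mathcal{V}$ thin enough that every stationary point it contains is feasible.
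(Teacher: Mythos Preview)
Your overall architecture matches the paper's: build a Fletcher-type merit function $h_\gamma(U)=L_\gamma(U,\lambda(U,U))$, show the gradient flow preserves rank-deficiency, confine the flow near $\M_b$, invoke {\L}ojasiewicz for convergence, upgrade to an SOSP almost surely via the stable-manifold theorem, and conclude by the rank-deficient-SOSP-is-global lemma. So the skeleton is the same.

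Your rank-invariance argument is genuinely different and cleaner than the paper's. You observe that $h_\gamma(U)$ depends on $U$ only through $UU^\top$ (true: $f$, $g$, and $K(U)_{ij}=\trace(A_iA_jUU^\top)$ all do), hence $\nabla h_\gamma(U)=S(U)U$ with $S$ symmetric, and then $U(t)=\Phi(t)U_0$ with $\Phi$ an invertible fundamental matrix gives $\rank U(t)=\rank U_0$. The paper instead writes the analytic SVD of $X_t=U_tU_t^\top$, differentiates, and tracks the evolution of each singular value to show zero singular values remain zero. Your route is shorter, yields rank \emph{equality} rather than merely $\le$, and avoids the analytic-SVD machinery; the paper's route makes the singular-value dynamics explicit, which could be useful for finer quantitative statements.

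Two points deserve correction. First, the spectral barrier is unnecessary and your claimed gradient $(vv^\top)U$ is only valid when the top singular value is simple, so a ``smooth spectral barrier'' is a contradiction as stated. The paper simply takes $\xi$ large enough and shows the flow stays in a bounded tube around $\M_b$, so $\|U(t)\|<\xi$ is automatic; drop the barrier. Second, your confinement step is where the paper invests the real work, and sublevel sets of $F$ are not the cleanest tool: on $\M_b$ itself $h_\gamma$ equals $\tfrac12\|U\|_\F^2$, which varies, so a sublevel set is not a uniform tube. The paper instead proves directly that the feasibility gap $\|g(U_t)\|_2$ is non-increasing along the flow for $\gamma$ large (by computing $\tfrac{\der}{\der t}G(U_t)$ and showing the $\gamma$-term dominates), then converts this into a bound on $\dist(U_t,\M_b)$ via a local two-sided estimate $\|g(U)\|_2\asymp\dist(U,\M_b)$ coming from the full-rank assumption on $\Der g$. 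That argument closes the gap you flagged; your sublevel-set idea would ultimately need the same estimates.

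Minor: the Burer--Monteiro global-optimality step uses only rank-deficiency plus the tight-relaxation assumption, not that $p$ exceeds the Pataki bound $\sqrt{2m}$; and the ``rank-raising perturbation'' remark is a red herring, since the almost-sure avoidance of strict saddles is a statement about $h_\gamma$ on the full ambient space, not on a rank-constrained set.
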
}

{Restricting the initialization is necessary above and we cannot hope to improve Theorem~\ref{thm:informal} and obtain global guarantees. Indeed,  the feasible set of problem~\eqref{eq:bm0} may contain spurious stationary points in the over-parametrized regime which can trap the gradient flow, when initialized arbitrarily, see Figure~\ref{fig:vis}.

At the same time, in Theorem~\ref{thm:informal}, note that the neighborhood of {\c the set}~$\M_b$ 
contains \emph{all} spurious stationary points of problem~\eqref{eq:bm0}. In that sense, Theorem~\ref{thm:informal} is \emph{not} a local convergence result. 

As a practical remark, we later empirically observe  that a random 
initialization~$U_0$ is often a good choice that avoids the worst-case scenario in the right panel of Figure~\ref{fig:vis}. }

{A tight convex relaxation in Theorem~\ref{thm:informal} is a mild assumption {\c which} is met, for example, if we take~$p\gtrsim \sqrt{2m}$~\cite{pataki1998rank,polik2007survey}.  }

{The manifold assumption in Theorem~\ref{thm:informal} is minimal in the sense that it corresponds to the weakest sufficient conditions under which there is in general any hope for the gradient flow  to efficiently find {a} feasible matrix for problem~\eqref{eq:bm0}, i.e., a matrix that satisfies the constraints of~\eqref{eq:bm0}. This assumption has several precedents within the nonconvex optimization literature~\cite{eftekhari2020implicit,sahin2019inexact,boumal2020deterministic}. }

{Theorem~\ref{thm:informal}  radically departs  from the established literature of  rank-constrained matrix factorization. To begin, the restricted isometry property, defined later, which dominates the low-rank matrix recovery literature~\cite{davenport2016overview,chi2019nonconvex}, 
\emph{cannot} hold in the over-parametrized regime of this paper  and 
does not
 appear in Theorem~\ref{thm:informal}. A  review of the related work will follow shortly. 
}


{\textbf{\sc Technical novelty.} The {\c proof of Theorem~\ref{thm:informal} relies on a spectral argument that does} not have any precedents within the matrix factorization literature, to the best of our knowledge. This argument builds on the  observation that, perhaps remarkably, matrix rank does not increase along the gradient flow of the merit function. We have, however,  used {\c simpler} versions of this {\c argument} in earlier works~\cite{eftekhari2020implicit,eftekhari2020training}.  }

\newpage
\onecolumn
\begin{figure}
\begin{center}
    \subfloat
    {\includegraphics[width=8cm,height=5.5cm]{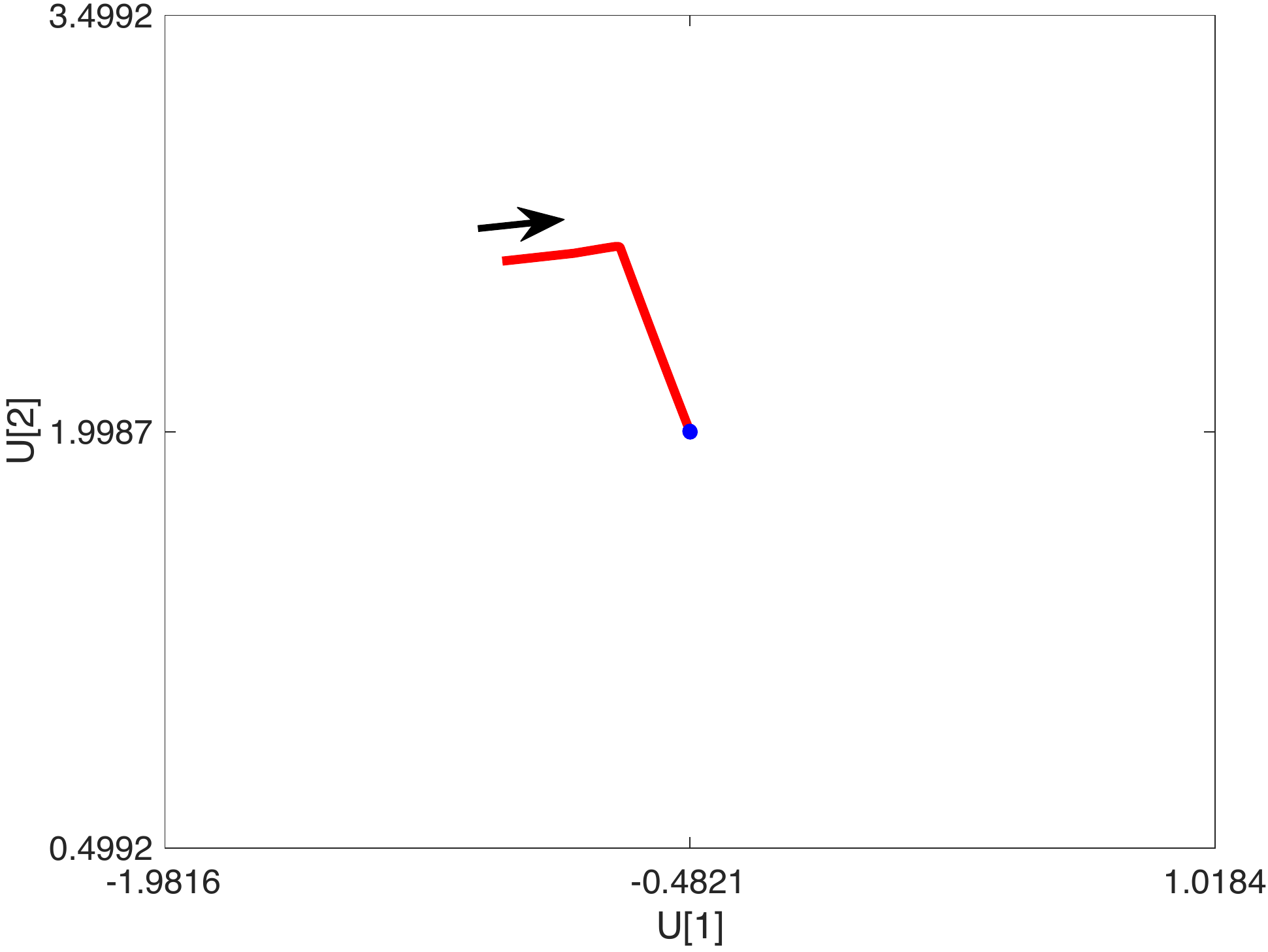}} 
    \hfill
    \subfloat
    {\includegraphics[width=8.5cm,height=5cm]{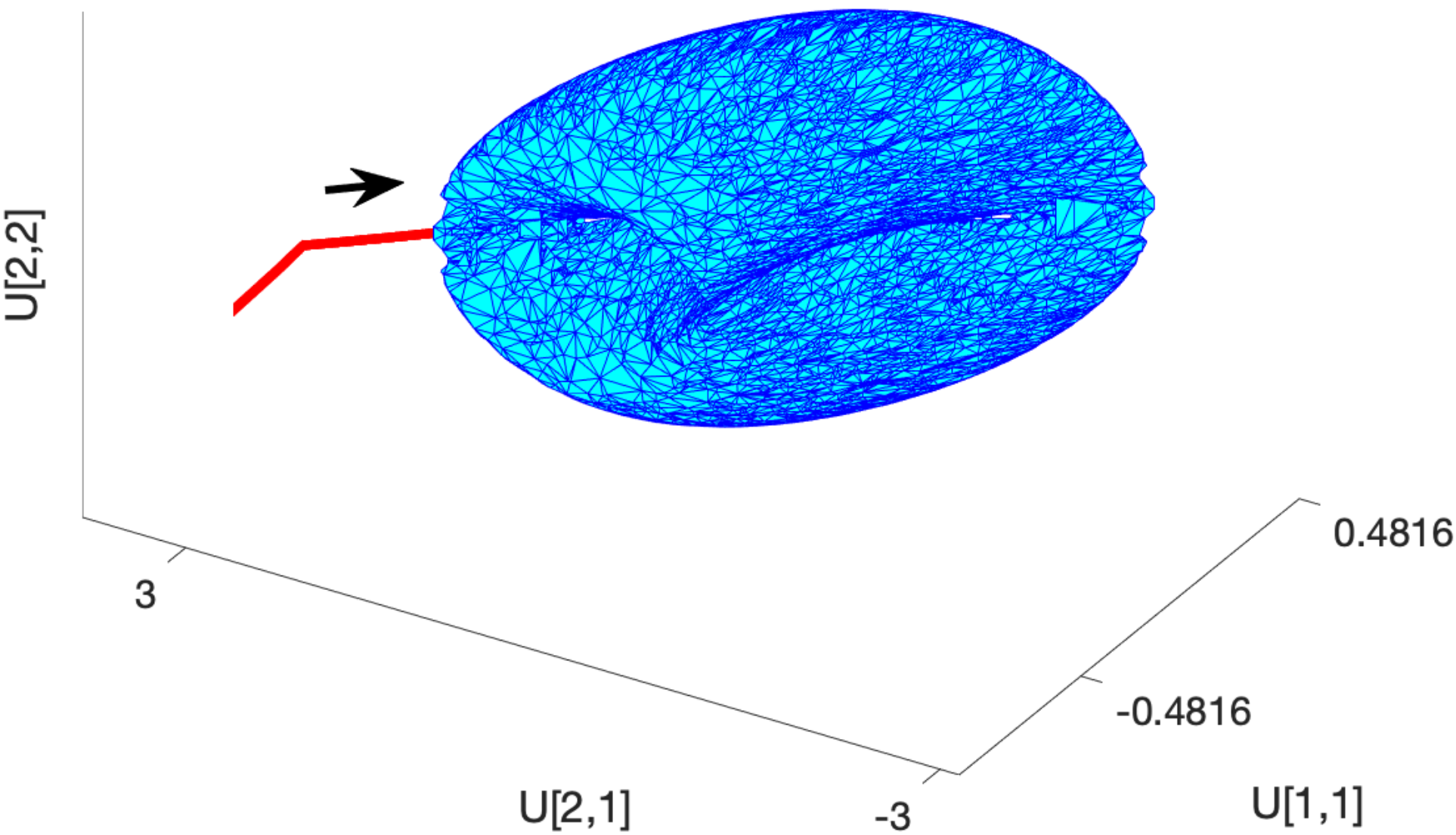}}

    \caption[LoF entry]{{\footnotesize For a toy example, this figure visualizes the differences between the under- and over-parametrized regimes of problem~\eqref{eq:bm0}. In particular, in this toy example,  the feasible set of problem~\eqref{eq:bm0} contains a spurious stationary point in the over-parametrized regime {\c (left panel)} that traps the optimization algorithm, in a sharp contrast with the under-parametrized regime {\c (right panel)}.}
    
    {\footnotesize To be specific, {\c we set}~$d=m=2$,~$\xi=\infty$ and {\c generate}~$\A,b$ {\c randomly, as detailed in the code}. {\c$\A$ and $b$ are} identical in both {\c top and bottom} panels.
    {\c The left panel corresponds to}~$p=1$ whereas, in the right panel, we have~$p=2$. 
    The blue dot in the left panel is the only feasible point of problem~\eqref{eq:bm0}, up to a sign, and  also evidently its global minimizer. {\c On the other hand,} the blue surface in the right panel {\c shows a section} of the feasible set of problem~\eqref{eq:bm0}. The three-dimensional mesh {\c in the right panel was} created with~\cite{mesh}.} 
    
    \noindent {\footnotesize The left panel shows the trajectory of the (discretized) gradient flow of  the feasibility gap~$\|\A(UU^\top)-b\|_2^2$, whereas the right panel plots the  (discretized) gradient flow of the merit function of problem~\eqref{eq:bm0}, to be defined later. {\c Recall that left and right panels correspond to $p=1$ and $p=2$, respectively.} {\c Accordingly,} in the left panel, the  gradient flow is initialized at~$U_0\in \R^{d\times 1}$ whereas, in the right panel, the gradient flow is initialized at~$[U_0\,,\, \begin{psmallmatrix} 0\\   0\end{psmallmatrix}]\in \R^{d\times 2}$. {\c The choice of initialization~$U_0$ is detailed in the code.}
    } 
    
    {\footnotesize In each panel, the limit point of the trajectory is {\c a feasible point and also} a stationary point of problem~\eqref{eq:bm0}.
    In the left panel, the (discretized) gradient flow successfully finds the global minimizer whereas, in the right panel, the (discretized) gradient flow is trapped by a  spurious stationary point.} 
    
    {\footnotesize {\c The stationary point in the right panel is spurious} because the optimal value of problem~\eqref{eq:bm0} equals~$0.24$ in the right panel (obtained by CVX~\cite{cvx,gb08}), whereas the (discretized) gradient flow reaches the suboptimal value of~$4.22>0.24$.  The MATLAB code will be available online.}
    
    }
    \label{fig:vis}
\end{center}
\end{figure}

\begin{figure}
\begin{center}
    \subfloat
    {\includegraphics[width=7cm,height=5cm]{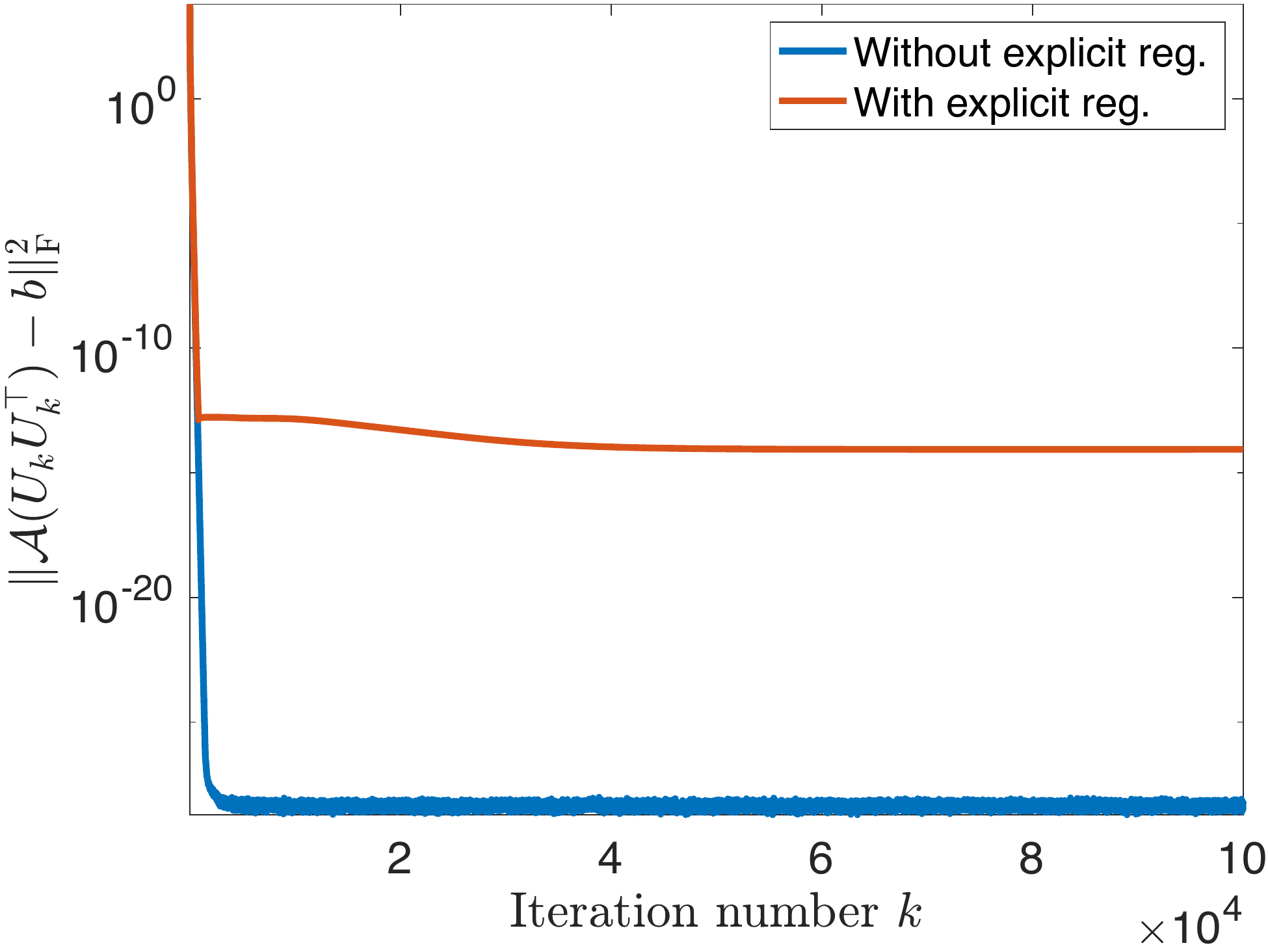}}
    \hfill
    \subfloat
    {\includegraphics[width=7cm,height=5cm]{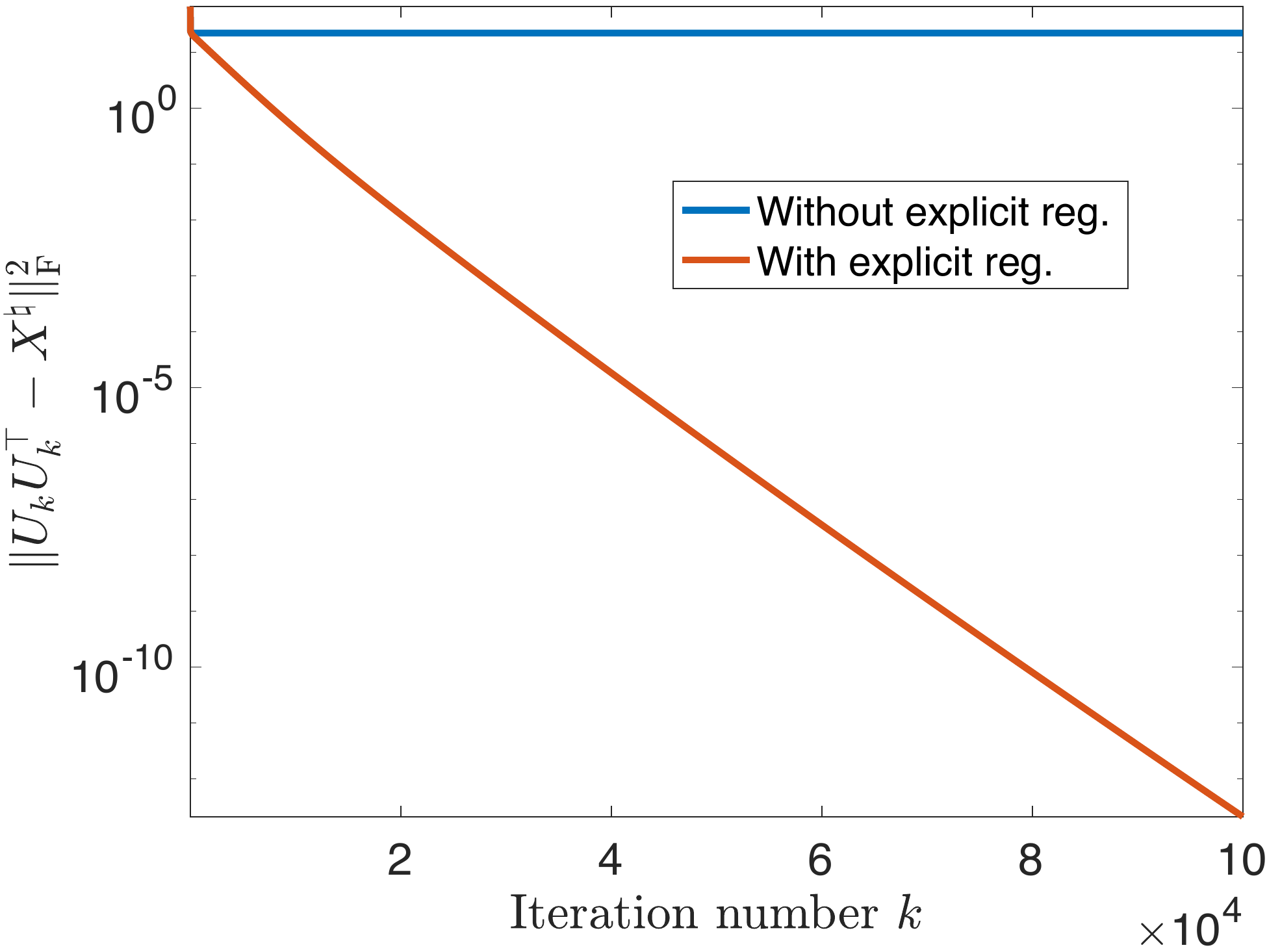}}

    \caption[LoF entry]{ {\footnotesize With a numerical example, this figure illustrates the importance of explicit regularization in problem~\eqref{eq:bm0}. That is, this figure shows that implicit regularization {\c might} fail {\c in general.}  
    }{\footnotesize More specifically, we generated a random rank-$1$ matrix~$X^\natural\in\R^{d\times d}$  with~$d=20$ and~$\|X^\natural\|_{\F}=1$. We also set~$m=4rd=40$ and chose the linear operator~$\A:\R^{d\times d}\rightarrow\R^m$ randomly and then set~$b:=\A(X^\n)$.}

    {\footnotesize For~$p=10$ and with a random initialization,
    we then {\c tried} to recover~$X^\n$ with two approaches:~\circled{1}~First, by following the trajectory of the (discretized) gradient flow of the feasibility gap~$\|\A(UU^\top)-b\|_2^2$ with no explicit regularization of the variable~$U\in\R^{d\times p}$.~\circled{2}~Second, by following the (discretized) gradient flow of the merit function of problem~\eqref{eq:bm0}, which is explicitly regularized by~$\|U\|_{\F}^2$. 
    }

    {\footnotesize {\c The left and right panels show the feasibility gap and recovery error, respectively.}  
    Notably, the first approach, which only relies on the implicit regularization of the gradient flow, fails to recover~$X^\n$. The MATLAB code will be made available with the paper.}
    }
    \label{fig:explicit}
\end{center}
\end{figure}

\twocolumn

\subsection{Approach {and Organization of the Paper}}

Let us {now outline} our approach {to establish Theorem~\ref{thm:informal}}. 
After {a formal setup in Sections~\ref{sec:burer}-\ref{sec:target},}
we first study the nonconvex geometry of problem~\eqref{eq:bm0} in Section~\ref{sec:nonConvexGeom}. We then review the local and global optimality conditions for problem~\eqref{eq:bm0} in Sections~\ref{sec:optimality} and~\ref{sec:globalOpt}, respectively. In particular, {we show that} any rank-deficient second-order stationary point~(SOSP) of problem~\eqref{eq:bm0} is also its global minimizer~\cite{boumal2016non}.  Finally, {in Section~\ref{sec:exact},} we introduce a merit (or exact penalty) function~\cite{nocedal2006numerical}, denoted by $h_\g$, that allows us to reformulate problem~\eqref{eq:bm0} as a smooth and unconstrained optimization problem. 

The main technical contribution of this {\c work} {appears in Section~\ref{sec:mainResults}}.  Theorem~\ref{thm:main0} {therein is the detailed version of the simplified Theorem~\ref{thm:informal} and} can be summarized as follows: Once initialized rank-deficient and sufficiently close to the feasible set of problem~\eqref{eq:bm0}, {the} gradient flow of~$h_\g$ converges to a rank-deficient SOSP of problem~\eqref{eq:bm0} and thus solves problem~\eqref{eq:bm0} to global optimality. {Theorem~\ref{thm:main0} is then followed by several remarks that justify our assumptions.}  

Lastly, {in Section~\ref{sec:compAspc},} we introduce a heuristic discretization of {the} gradient flow {of~$h_\gamma$}. {\c We then show the potential of this algorithm for solving  problem~\eqref{eq:bm0} with a numerical example.} {\c However, we leave} a comprehensive study of this {\c algorithm} as a future research {\c question}.

\section{Related Work}\label{sec:relatedWork}

We are not aware of any prior {literature} on solving problem~\eqref{eq:bm0} in the over-parametrized regime~{($p \gtrsim m/d$)}. Let us instead review a number of closely related works.

If we move the constraints in problem~\eqref{eq:bm0} to its target function (in the form of a penalty term), then~\cite[Theorem~17]{haeffele2015global} provides a high-level procedure for solving the penalized problem that might not converge in polynomial time or might not converge at all; {\c the procedure} might hop from one local minimizer to another, without ever visiting a global minimizer.

An additional concern is that, in nonconvex optimization, often the solutions of the penalized problem approach feasibility only in the limit as the penalty weight grows increasingly large~\cite[Section~17.1]{nocedal2006numerical}.

{Indeed, polynomial time convergence to a global (rather than local) minimizer and the constrained nature of problem~\eqref{eq:bm0}   both will pose significant technical challenges for us.}

We should also mention the literature of implicit regularization, which attempts to recover a planted low-rank matrix~$X^\sharp\in \R^{d\times d}$  by following the gradient flow of  the feasibility gap $\|\A(UU^\top) - b\|_2^2$, {where~$U\in\R^{d\times d}$ and~$b:=\A(X^\sharp)$}. {Often, this gradient flow is}
initialized near the origin~\cite{gunasekar2017implicit,li2018algorithmic,eftekhari2020implicit,geyer2020low}, {and the literature} {\c of implicit regularization}
relies heavily on the restricted isometry property (RIP) of the operator~$\A$~\cite{davenport2016overview}.

{In contrast, we do not assume the RIP in this work and the gradient flow can be initialized anywhere sufficiently close to the feasible set of~\eqref{eq:bm},  rather than near the origin. We also consider the more general case where~$U\in \R^{d\times p}$ with~$p\gtrsim m/d$, rather than the square factorization~$U\in \R^{d\times d}$. 

Perhaps most importantly, problem~\eqref{eq:bm0} is explicitly regularized by~$\|U\|_{\F}^2$. {\c This regularization} directly promotes a low-rank solution, in contrast with the literature of implicit regularization which lacks any explicit regularization. 

In Figure~\ref{fig:explicit}, the importance of explicit regularization is visualized with a numerical  example in the context of low-rank matrix recovery. Implicit regularization fails spectacularly {\c in this figure.} }
Lastly, the technical machinery involved {\c in this work} is 
{fundamentally} different from {\c that in the literature of implicit regularization.}

It is also worth noting that {\c we can adapt the literature of over-parametrized linear networks to our \c situation: This literature}  is concerned with  {\c solving the optimization} problem $\min_U \|\A(UU^\top) - b\|_2^2$ for a particular choice of the {linear} operator~$\A$. {Here, the common approach {\c in}~\cite{arora2018convergence} is often criticized} as ``lazy training''~\cite{chizat2019lazy}, {\c because} it linearizes {\c the map} $U\rightarrow UU^\top$ near a global minimizer {\c of $\min_U \|\A(UU^\top) - b\|_2^2$ and then replaces the nonconvex target function with its local convex approximation.}  

{\c The approach in~\cite{arora2018convergence} thus relies on an initialization near a global minimizer of $\min_U\|\A(UU^\top)-b\|_2^2$, which can be generated randomly when the map $U\rightarrow UU^\top $ is sufficiently over-parametrized. 
We refer to Definition~2 and Theorem~1 in~\cite{arora2018convergence} for more information.} 
Outside of this lazy training regime, we are only aware of the recent work~\cite{eftekhari2020training} {\c for linear neural networks}. 
{\c In contrast, our Theorem~\ref{thm:informal} relies on an initialization near the feasible set (rather than the optimal set) of problem~\eqref{eq:bm0}.} {\c Lazy training is also highly popular in the context of nonlinear neural networks,  e.g.,~\cite{allen2019convergence}.} 

We also mention the \emph{very} over-parametrized regime of~$p\gtrsim d$. {\c In this regime and} {in the context of nonlinear shallow  neural networks,} the landscape of the feasibility gap is known not have any spurious SOSPs~\cite{soltanolkotabi2018theoretical,du2018power}. {\c We can translate this fact to our setup: If}~$p\ge d$, {\c then} it is not difficult to verify from~\eqref{eq:kkt} and Definition~\ref{defn:fosp} that any full-rank SOSP of problem~\eqref{eq:bm0} is also a global minimizer of problem~\eqref{eq:bm0}. Moreover, any rank-deficient SOSP of problem~\eqref{eq:bm0} is a global minimizer of problem~\eqref{eq:bm0} by Proposition~\ref{lem:noSpurious}. 

{\c In analogy with~\cite{soltanolkotabi2018theoretical,du2018power}, we thus obtain the following result:}  When $p\ge d$, problem~\eqref{eq:bm0} does not have any spurious SOSPs. 
This very over-parametrized regime ($p\ge d$) is {evidently} not interesting in {the context of rank-constrained matrix factorization,} and {is not studied in this work.}

Problem~\eqref{eq:bm0} {can also be interpreted as} the  Burer-Monteiro factorization~\cite{burer2005local} of a particular semi-definite program~(SDP). {\c More specifically,} if we replace the target function of problem~\eqref{eq:bm0} with~$\langle C, UU^\top\rangle$ for a generic symmetric matrix~$C$, then the new  {optimization} problem is known not have any spurious SOSPs when~$p \gtrsim \sqrt{2m}$~\cite[Theorem~2]{boumal2016non}.

{\c However,} the randomness of the cost matrix~$C$ {\c means that} \cite[Theorem~2]{boumal2016non} {\c is not applicable to our problem.} Moreover,~{$p\gtrsim \sqrt{2m}\Longrightarrow p\gtrsim m/d$, under the mild assumption that~$m\le 2 d^2$. 
{\c In words}, the over-parametrized regime ($p\gtrsim m/d$) studied  in this paper absorbs~$p\gtrsim \sqrt{2m}$ as a special case.} 


Lastly,  {to read about matrix factorization in} the under-parametrized regime~{($p\lesssim m/d$)}, we wish to refer the reader to the survey~\cite[Section~9]{chi2019nonconvex} and the references therein.

\section{{Nonconvex Optimization Problem}}\label{sec:burer}

For symmetric matrices~$\{A_i\}_{i=1}^m\subset \R^{d\times d}$, {\c let us} consider the linear operator $\A:\R^{d\times d}\rightarrow \R^m$, defined as 
\begin{align}
    \A(X) := [\langle A_1, X\rangle, \cdots, \langle A_m, X\rangle]^\top.
    \label{eq:measOp}
\end{align}
{We {\c can} now formally introduce the nonconvex optimization problem at the center of this work.  }
For $\xi>0$, {integer $p$} and a vector $b\in \R^m$, {{\c we will study} the optimization problem
\begin{align}
    \min_{U\in \R^{d\times p}} \, \| U\|_{\F}^2 \,\, \text{subject to}\,\, \A(UU^\top) = b, \,  \|U\|\le \xi,
    \label{eq:bm}
    \tag{factor}
\end{align}
where~$\|U\|$ denotes the spectral norm of the matrix $U$. {\c Above,} we have  limited the spectral norm {\c to $\xi$} for technical convenience. For a sufficiently large $\xi$, we can practically ignore the constraint~$\|U\|\le \xi$ above.

Even though we are purely interested in the computational aspects of solving \eqref{eq:bm}, the statistical significance of this problem was motivated in the introduction: \eqref{eq:bm} arises as the Burer-Monteiro factorization of the (convex) nuclear norm minimization problem. Alternatively, {\c \eqref{eq:bm}} can be interpreted as regularized empirical risk minimization for training a shallow linear network. } 

{It is worth noting that, in the context of low-rank matrix recovery, the literature also offers some alternatives which can be statistically superior to the learning machine~\eqref{eq:bm} {\c and} its convex relaxation~\cite{tanner2013normalized,bauch2020rank}.  }



\section{{A Convex Relaxation}}\label{sec:relax}

{It is also be helpful to introduce  a convex relaxation of~\eqref{eq:bm}, specified by the semi-definite program 
\begin{align}
    \min_{X\in \R^{d\times d}}  \trace(X) \, \text{subject to}\, \A(X) = b,\,  0 \preccurlyeq X \preccurlyeq \xi^2 I_d,
    \label{eq:sdp}
    \tag{SDP}
\end{align}
where $I_d{\in \R^{d\times d}}$ is the identity matrix, and~$\preccurlyeq$ denotes the positive semi-definite (PSD) pseudo-order.} {\c Indeed, it is easy to obtain~\eqref{eq:sdp} by relaxing the rank restriction in~\eqref{eq:bm}.}

{In the other direction, we {\c can also} interpret \eqref{eq:bm} as the Burer-Monteiro factorization of~\eqref{eq:sdp}.}
{In particular,~\eqref{eq:sdp} explicitly enforces the PSD constraint ($X\succcurlyeq 0$) which was implicit in~\eqref{eq:bm}}. {Moreover, if $p$ is sufficiently small, \eqref{eq:bm} would have fewer optimization variables {\c compared to} \eqref{eq:sdp} and can offer computational savings~\cite{chi2019nonconvex,boumal2016non}. }

{Note that the  restriction to PSD matrices does not reduce the generality of~\eqref{eq:sdp}.} {\c That is, if we choose  $\A':\R^{d_1\times d_2}\rightarrow \R^m$ and $b'\in \R^m$ appropriately, the optimization problem
$$
\min_{Z\in \R^{d_1\times d_2}} \|Z\|_* 
\, \text{subject to}\, \A'(Z) = b',\, \|Z\|\le \xi
$$
reduces to~\eqref{eq:sdp}.
Above, $\|\cdot\|_*$ stands for the nuclear norm. For completeness, we record this observation below.
}
\begin{remark}[{\sc Restriction to PSD matrices}]
The restriction in~\eqref{eq:sdp} to PSD matrices is, for our purposes, without any loss of generality, because a  matrix $Z\in \R^{d_1\times d_2}$ with $\|Z\|\le \xi$ can be mapped or lifted to a PSD matrix $X$ via the map
$$
Z \rightarrow X := \l[ 
\begin{array}{cc}
     \xi^2 I_{d_1} & Z  \\
    Z^\top & I_{d_2}
\end{array}
\r].
$$
\end{remark}

{For future reference, we also}  record that a matrix $\ol{X}\in\R^{d\times d}$ is a (global) minimizer of problem~(\ref{eq:sdp}) 
if there exists a dual certificate $\ol{\lambda}\in \R^m$ such that 
\begin{align}
    & I_d-\A^*(\ol{\lambda}) \succcurlyeq 0, 
    \qquad \l(I_d - \A^*(\ol{\lambda})\r) \ol{X} = 0, \nonumber\\
     & 0 \preccurlyeq  \ol{X}\prec \xi^2 I_d,
    \qquad \A(\ol{X}) = b.
    \label{eq:kkt}
\end{align}
{\c Above,} $\A^*:\R^m\rightarrow\R^{d\times d}$ is the adjoint of the linear operator~$\A$ in~\eqref{eq:measOp}, defined as 
\begin{align}
    \A^*(\lambda) := \sum_{i=1}^m \lambda_i A_i, 
    \label{eq:Aadjoint}
\end{align}
and $\lambda_i$ is the $i^{\text{th}}$ coordinate of the vector $\lambda \in \R^m$.



\section{{Target}}\label{sec:target}

{After  introducing the nonconvex problem~\eqref{eq:bm} and its convex relaxation~\eqref{eq:sdp} in Sections~\ref{sec:burer} and~\ref{sec:relax}, we are now in position to formally state the target of this work. }

{First, let us} record two key assumptions. The first assumption below ensures that the {convex relaxation of~\eqref{eq:bm} is tight},
i.e.,~\eqref{eq:sdp} has \emph{a}  minimizer with rank at most~$p$. 
The second assumption below reflects the limited sampling budget {\c that is} common in {signal processing}. {\c If we regard $x\rightarrow UU^\top x$ as a shallow linear network with weight sharing, the second assumption below} also corresponds to the over-parametrized regime prevalent in neural networks~\cite{eftekhari2020training,oymak2020towards}. 

\begin{assumption}[\sc{Matrix factorization}]\label{assumption:operator}  
\end{assumption}
\begin{enumerate}[label=(\roman*),leftmargin=.6cm] 
\item (Equivalence)\label{assumption:infoLevel}  {\eqref{eq:bm} and~\eqref{eq:sdp}} have the same optimal value, which is assumed to be finite {throughout}.  
\item (Over-parametrized)\label{assumption:badRegime}  $ m < pd$
 if $p< d/2$ and $m< d(d+1)/2$ otherwise. 
\end{enumerate}

{\c Assumption~\ref{assumption:operator}.\ref{assumption:infoLevel} is not too restrictive:}
{For example,} {\c consider} a planted matrix~$X^\sharp$ {\c such that $\rank(X^\n)\le p$} {and {\c set}~$b:=\A(X^\sharp)$}. {\c Then it is not difficult to verify that }Assumption~\ref{assumption:operator}.\ref{assumption:infoLevel} is fulfilled {\c when the linear operator}~{$\A$ {\c satisfies the RIP of order~$2r$}~\cite{davenport2016overview}.} {\c As another example, for an arbitrary linear operator $\A:\R^{d\times d}\rightarrow\R^m$, we can use the  Pataki's lemma to  verify that Assumption~\ref{assumption:operator}.\ref{assumption:infoLevel} holds whenever $p\gtrsim \sqrt{2m}$~\cite{pataki1998rank}\cite[Theorem 6.1]{polik2007survey}. As mentioned in Section~\ref{sec:relatedWork}, 
$p\gtrsim \sqrt{2m}$ does not violate Assumption~\ref{assumption:operator}.\ref{assumption:badRegime}, under the mild assumption that $m=O(d^2)$.}
{Let us now}
state our target.

\begin{tcolorbox}[width=1\columnwidth,colback={gray!40}]
\begin{target}\label{o:target} Suppose that Assumption~\ref{assumption:operator} is fulfilled. Our aim  is to solve {\eqref{eq:bm}} to global optimality. 
\end{target}
\end{tcolorbox}


{\c We will soon see} that Target~\ref{o:target} is out of the reach {\c for} the current literature of matrix factorization. 
The main contribution of this paper is {\c to achieve} Target~\ref{o:target}: 
Sections~\ref{sec:nonConvexGeom}-\ref{sec:exact} will help us develop the necessary tools to achieve  Target~\ref{o:target} and our main result is then presented in Section~\ref{sec:mainResults}.

{\c For technical convenience}, let us {\c also} define the functions $f:\R^{d\times p}\rightarrow\R$, $g:\R^{d\times p}\rightarrow\R^m$ and $G:\R^{d\times p}\rightarrow\R$ as 
\begin{align}
    & f(U) := \frac{1}{2}\|U\|_{\F}^2 ,\qquad g(U):= \frac{1}{2}( A(UU^\top) - b), \nonumber\\
    & G(U) := \frac{1}{2}\|g(U)\|_2^2.
    \label{eq:fgDefn}
\end{align}
{\c Note that} \eqref{eq:bm} has the same minimizers as {the {\c optimization} problem} $\min_U \, f(U)$ subject to $g(U)=0$ and $\|U\|\le \xi$. We can interpret~$G$ in~\eqref{eq:fgDefn} as the (scaled) feasibility gap associated with \eqref{eq:bm}. The new functions $f,g,G$ will frequently appear in our analysis {throughout}.

\section{Nonconvex Geometry}\label{sec:nonConvexGeom}

In Section~\ref{sec:burer}, we introduced \eqref{eq:bm} as the nonconvex optimization program at the {center} of this work.
{Now,} as {our} first step towards Target~\ref{o:target}, we study in this section the geometry of the feasible set of \eqref{eq:bm}.

{In Figure~\ref{fig:vis}, the blue dot and the blue surface visualize  parts of the feasible set of~\eqref{eq:bm} for two toy examples.}
For brevity, we will  denote the interior of the feasible set of \eqref{eq:bm} by $\M_b$. That is, 
\begin{align}
    {\M_b} & := \l\{ U: \A(U U^\top) = b,\, \|U\|<  \xi \r\}
    \subset \R^{d\times p}
    \nonumber\\
    & = \{U: g(U) = 0,\, \|U\|<   \xi \}.\quad  \text{(see \eqref{eq:fgDefn})}
    \label{eq:manifold}
    \tag{manifold}
\end{align}
In particular, note that \eqref{eq:bm} has the same minimizers as {the optimization problem} $\min_U\, f(U) $ subject to $U\in \cl(\M_b)$, where $\cl(\M_b)$ denotes the closure of~$\M_b$.  

After recalling~\eqref{eq:fgDefn}, we also observe that
\eqref{eq:bm} finds the distance from the origin to the set~$\M_b$.
 {As a minor transgression, in our qualitative discussions, we will occasionally refer to~$\M_b$ as the feasible set of~\eqref{eq:bm}, even thought $\M_b$ is the interior of that feasible set, strictly speaking.}

In the rest of this section, we will study the geometry of the set $\M_b$.
For future {reference}, let us record that the (total) derivative of $g$ at $U$ is the linear operator  $\Der g(U):\R^{d\times p} \rightarrow\R^m$, {specified} as
\begin{align}
    \Der g(U)[\D] := \A(\D U^\top),
    \label{eq:defnC}
\end{align}
and its adjoint $(\Der g(U))^* : \R^m \rightarrow \R^{d\times p}$ is defined as
\begin{align}
    (\Der g(U))^* [\d] := \A^*(\d)\cdot U = \sum_{i=1}^m \d_i A_i U,
        \label{eq:adjO}
\end{align}
where $\A^*$ was defined in~\eqref{eq:Aadjoint} and~$\d_i$ is the $i^{\text{th}}$ entry of the vector $\d\in \R^m$. To obtain~\eqref{eq:defnC}, we {leveraged} the fact that~$\{A_i\}_{i=1}^m$ are symmetric matrices.  
Also for later reference, the neighborhood of the set $\M_b$ is defined as follows.
\begin{defn}[{\sc Neighborhood of $\M_b$}]\label{defn:neighb}
The neighborhood of radius $\rho>0$ of 
the set $\M_b$ in~\eqref{eq:manifold} is the (closed) set 
\begin{align}
 \{U: \dist(U,\M_b) \le \rho\} \subset \R^{d\times p},
 \label{eq:neighborS}
 \tag{neighborhood}
\end{align}
where 
\begin{align}
\dist(U,\M_b)&  := \inf\l\{ \|U-U'\|_\F: U'\in \M_b \r\} \nonumber\\
& = \min\l\{\|U-U'\|_\F:  U'\in \cl(\M_b) \r\} \nonumber\\
& = \dist(U,\cl(\M_b ))
\label{eq:distDefn}
\tag{metric}
\end{align}
is the distance from {the matrix} $U$ to the set $\M_b$ and its closure.  {For brevity, we often use the term $\rho$-neighborhood of $\M_b$ throughout. We will also} often say~$U$ is sufficiently close to~$\M_b$ to mean that $\dist(U,\M_b)$ is sufficiently small. 
\end{defn}
{As claimed above, the $\rho$-neighborhood of $\M_b$
is indeed a closed set} because~$\dist(U,\M_b)$ is a continuous function of~$U$ and $[0,\rho]$ is a closed interval.
A short remark follows next to justify our choice of metric in Definition~\ref{defn:neighb}.
\begin{remark}[{\sc Invariance of the metric}] 
The metric in Definition~\ref{defn:neighb} is  invariant under rotation from right. That is,  for any~$U\in \R^{d\times p}$ and~$R\in \O_p$, it holds that 
\begin{align}
    \dist(U R ,\M_b) & = \inf_{U'\in \M_b} \|U R - U'\|_\F \nonumber\\
    & = \inf_{U'\in \M_b} \|U R - U'R \|_\F \nonumber\\
    & = \inf_{U'\in \M_b} \|U - U'\|_\F \nonumber\\
    & = \dist(U ,\M_b).
\end{align}
Above,  $\O_p =\{R': R'^\top R' = I_p\}\subset \R^{p\times p}$ is the orthogonal group and~$I_p{\in \R^{p\times p}}$ is the identity matrix. Moreover, the second line above holds because  $U'\in \M_b$ if and only if $U'R\in \M_b$, see~\eqref{eq:manifold}. The third line above holds by the rotational invariance of the Frobenius norm.  

\end{remark}

We are now in position to state the {central}
 assumption of {this} work, {which also subsumes the earlier Assumption~\ref{assumption:operator}.}
{Assumption~\ref{assumption:key} is similar to~\cite[Assumption~1.1]{boumal2020deterministic} or~\cite{eftekhari2020implicit}. As clarified later, Assumption~\ref{assumption:key} enables us to efficiently find feasible points of~\eqref{eq:bm} which is evidently necessary for solving~\eqref{eq:bm}.} 


\begin{assumption}[{\sc Manifold}] \label{assumption:key}
Assumption~\ref{assumption:operator} is fulfilled and it also holds that 
\begin{align}
    \rank(\Der g(U)) = m, 
    \label{eq:fullRankAll}
\end{align}
for every {\c matrix}~$U$ 
in a sufficiently small neighborhood of the  set~$\M_b$, i.e., the matrices~$\{A_i U\}_{i=1}^m$ are linearly independent for every {\c matrix}~$U$ that is sufficiently close to the set~$\M_b$. 
\end{assumption}

Under Assumption~\ref{assumption:key}, {note that}  $\M_b$ is a closed embedded submanifold of $\R^{d\times p}$ of co-dimension $m$, see~\cite[Corollary~5.24]{lee2013smooth}. 
 {As suggested by Assumption~\ref{assumption:key}, we will frequently use the qualifiers ``sufficiently small'' and ``sufficiently close'' throughout this work. This decision is justified below. }

\begin{remark}[{\sc Sufficiently small / close}]\label{rem:suffClose1}
A conservative lower bound {for} the radius of the neighborhood in Assumption~\ref{assumption:key} is given by \cite[Proposition~13]{eftekhari2020implicit}. 

However, the lower bound in~\cite{eftekhari2020implicit} is  of little practical value because it involves certain geometric attributes of the set~$\M_b$ in~\eqref{eq:manifold} which are difficult to estimate. 

In view of this {practical} limitation and also to avoid any unnecessary clutter, we
 opted not to precisely specify the neighborhood size in Assumption~\ref{assumption:key}. 
 
 Instead, Assumption~\ref{assumption:key} and most of our results are stated for a ``sufficiently small'' neighborhood of the set $\M_b$ with respect to~\eqref{eq:distDefn}. We will revisit this issue later on. Note also that the set $\M_b$ and its neighborhood are often large sets, i.e., our results are not local. 
 

\end{remark}

Before closing this section, we collect below the geometric properties of~$\M_b$:
It is not difficult to verify that the normal space of the smooth manifold~$\M_b {\subset \R^{d\times p}}$ at the matrix $U\in \M_b$ {can be identified with} the linear subspace 
\begin{align}
\N_U {\M_b} & := \range((\Der g(U))^* ) \nonumber\\
& = \mathrm{range}\l(\{ A_i U \}_{i=1}^m \r) \subset \R^{d\times p}.
\qquad \text{(see \eqref{eq:adjO})}
\label{eq:normalSpace}
\end{align}
The tangent space of $\M_b$ at $U\in \M_b$ immediately follows 
from the fundamental theorem of linear algebra. That is, 
\begin{align}
    \T_U {\M_b} := \ker(\Der g(U)),
    \qquad \text{(see \eqref{eq:normalSpace})}
    \label{eq:tangentSpace}
\end{align}
where $\ker(\Der g(U))$ denotes the null space (or kernel) of the linear operator $\Der g(U)$ in~\eqref{eq:defnC}. 

In view of~\eqref{eq:normalSpace}, the orthogonal projection onto the tangent space at $U\in \M_b$ is the linear operator $\P_{\T_U{\M_b}}:\R^{d\times p}\rightarrow \T_U{\M_b}$ that maps the matrix $\Delta\in \R^{d\times p}$ to 
\begin{align}
    \P_{\T_U{\M_b}}(\D)&  := \D - \l((\Der g(U))^* \circ ((\Der g(U))^*)^\dagger\r) [\D] \nonumber\\
    &  = \D - (\Der g(U))^*  \l[((\Der g(U))^*)^\dagger [\D] \r] \nonumber\\
    & =: \D - \A^*(\lambda(U,\D)) \cdot U,
    \quad \text{(see \eqref{eq:adjO})}
    \label{eq:projTangent}
\end{align}
where, for brevity, above we defined 
\begin{align}
    \lambda(U,\D) := ((\Der g(U))^*)^\dagger [\D].
        \label{eq:lambdaDefn}
        \tag{multipliers}
\end{align}
In~\eqref{eq:projTangent}, $\circ$ {represents} the composition of two operators and~$\dagger$ stands for the Moore-Penrose pseudo-inverse. The vector~$\lambda(U,U)$ in~\eqref{eq:lambdaDefn}  plays a key role in the ensuing arguments and we close this section with a technical observation about this quantity. 

\begin{lem}[{\sc Derivative of Lagrange multipliers}]\label{lem:derLambda}
Suppose that  Assumption~\ref{assumption:operator}.\ref{assumption:badRegime} holds. For a matrix $U\in \R^{d\times p}$, $\lambda(U,U)$ in~\eqref{eq:lambdaDefn} has the closed-form expression
 \begin{align}
     & \lambda(U,U) = K(U)^\dagger \A(UU^\top), \nonumber\\
     &  K(U):= [\langle A_i U, A_j U\rangle]_{i,j=1}^m \in \R^{m\times m}.
     \label{eq:lambdaExplicitLem}
 \end{align}
 Suppose now  that Assumption~\ref{assumption:key} holds. Then, $\lambda(U,U)$ is an analytic function of $U$ on a sufficiently small neighborhood of~$\M_b$ in~\eqref{eq:manifold}.
 {For completeness}, the directional derivative of $\lambda(U,U)$ with respect to~$U$ is given explicitly by~\eqref{eq:lambdaDerFinalLemma} in the supplementary material.
%
\end{lem}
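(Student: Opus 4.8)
The plan is to prove the three assertions of the lemma in order: the closed-form expression for $\lambda(U,U)$, its analyticity near $\M_b$, and the existence of a formula for its directional derivative.

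First I would establish the closed-form expression~\eqref{eq:lambdaExplicitLem}. Starting from the definition $\lambda(U,U) = ((\Der g(U))^*)^\dagger[U]$ in~\eqref{eq:lambdaDefn}, I would use the standard identity for the pseudo-inverse of an adjoint: for a linear operator $L$ with full row rank (here $L = \Der g(U)$, which has rank $m$ by Assumption~\ref{assumption:key}, or more generally whenever the relevant Gram matrix is invertible), one has $(L^*)^\dagger = (L L^*)^\dagger L = (LL^*)^{-1}L$ on the appropriate domain. Concretely, $(\Der g(U))^*[\delta] = \sum_i \delta_i A_i U$ by~\eqref{eq:adjO}, so the Gram operator $\Der g(U)\circ(\Der g(U))^*:\R^m\to\R^m$ acts as $\delta \mapsto [\langle A_i U, \sum_j \delta_j A_j U\rangle]_{i=1}^m = K(U)\delta$ with $K(U)$ as defined. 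Also $\Der g(U)[U] = \A(UU^\top)$ by~\eqref{eq:defnC}. Composing gives $\lambda(U,U) = K(U)^\dagger \A(UU^\top)$. I would note that under only Assumption~\ref{assumption:operator}.\ref{assumption:badRegime} one must keep the pseudo-inverse $K(U)^\dagger$ since $K(U)$ need not be invertible away from the neighborhood of $\M_b$; the identification $(L^*)^\dagger = (LL^*)^\dagger L$ still holds in general (this is a routine fact about pseudo-inverses), so the formula is valid for all $U$.

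Second, for analyticity, I would restrict to the sufficiently small neighborhood of $\M_b$ on which Assumption~\ref{assumption:key} guarantees $\rank(\Der g(U)) = m$, equivalently $K(U)$ is invertible. On this neighborhood $K(U)^\dagger = K(U)^{-1}$, and each entry $\langle A_i U, A_j U\rangle$ is a polynomial (hence analytic) function of the entries of $U$; likewise each entry of $\A(UU^\top)$ is a polynomial in $U$. Since matrix inversion is analytic on the open set of invertible matrices and $U\mapsto K(U)$ is continuous (indeed polynomial), the preimage of the invertible matrices is open, so $K(U)^{-1}$ is analytic in $U$ there, and therefore $\lambda(U,U) = K(U)^{-1}\A(UU^\top)$ is analytic as a composition and product of analytic maps.

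Third, the directional derivative: having written $\lambda(U,U)$ as an explicit composition of polynomial maps and the (smooth, indeed analytic) inversion map on the neighborhood, I would simply differentiate using the product rule together with the identity $\Der[K\mapsto K^{-1}][H] = -K^{-1} H K^{-1}$. This yields the explicit expression~\eqref{eq:lambdaDerFinalLemma} recorded in the supplementary material; I would state it and defer the bookkeeping, since it is a mechanical calculation once the closed form is in hand (one needs the directional derivatives of $U\mapsto K(U)$ and $U\mapsto \A(UU^\top)$, both of which are immediate from their bilinear/quadratic structure). I do not expect any of the three steps to pose a genuine obstacle; the only point requiring minor care is the first, namely correctly invoking the pseudo-inverse identity $(L^*)^\dagger = (LL^*)^\dagger L$ in the rank-deficient case so that~\eqref{eq:lambdaExplicitLem} holds under the weaker Assumption~\ref{assumption:operator}.\ref{assumption:badRegime} and not merely under the manifold assumption.
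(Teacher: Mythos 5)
Your proposal is correct and follows essentially the same route as the paper's proof: identify $K(U)=\Der g(U)\circ(\Der g(U))^*$ as the Gram operator so that $((\Der g(U))^*)^\dagger=K(U)^\dagger\circ\Der g(U)$, deduce analyticity on the manifold neighborhood from $K(U)$ being invertible there together with polynomiality of $K(U)$ and $\A(UU^\top)$, and differentiate via $\Der[K\mapsto K^{-1}][H]=-K^{-1}HK^{-1}$. The only cosmetic difference is that you explicitly name the pseudo-inverse identity $(L^*)^\dagger=(LL^*)^\dagger L$, whereas the paper states the resulting formula for $((\Der g(U))^*)^\dagger$ directly.
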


\section{Local  Optimality}\label{sec:optimality}

In Section~\ref{sec:nonConvexGeom}, we studied the geometry of the feasible set of~\eqref{eq:bm}. See $\M_b$ in~\eqref{eq:manifold}. As our next step towards Target~\ref{o:target},  we will review in this section the sufficient conditions for local optimality in \eqref{eq:bm}.

To begin, recall that the manifold gradient of $f$ in~\eqref{eq:fgDefn} at the matrix $U\in \M_b$ is {defined as} the projection of the (Euclidean) gradient of $f$ onto the tangent space of the manifold at~$U$~\cite{boumal2020introduction}. That is, 
\begin{align}
     \nabla_{\M_b} f(U) & := \P_{\T_U{\M_b}}(\nabla f(U)) \nonumber\\
    & = \P_{\T_U {\M_b}}(U) \qquad \text{(see \eqref{eq:fgDefn})} \nonumber\\
    & = (I_d - \A^*(\lambda(U,U)) \cdot U \in \R^{d\times p},
    \label{eq:manifoldGrad}
\end{align}
where the last line above follows from~\eqref{eq:projTangent}.
Above, $\nabla$ stands for (Euclidean) gradient and $\nabla_{\M_b}$ denotes the manifold gradient for $\M_b$.

For \eqref{eq:bm}, the matrix $\ol{U}\in \M_b$ is a first-order stationary point if the manifold gradient of $f$ vanishes at $\ol{U}$. More specifically, after recalling the notation in~\eqref{eq:fgDefn} and~\eqref{eq:lambdaDefn}, we record the following. 

\begin{defn}[{\sc FOSP}]\label{defn:fosp}
Consider a matrix $\ol{U}\in \R^{d\times p}$ such that $\|\ol{U}\|< \xi$. This matrix $\ol{U}$  is a first-order stationary point~(FOSP) of  \eqref{eq:bm} if 
\begin{align}
    & g(\ol{U}) = \frac{1}{2} (A(\ol{U} \cdot \ol{U}^\top) - b) =0,\nonumber \\
    & \nabla_{\M_b} f(\ol{U}) = (I_d - \A^*(\lambda(\ol{U},\ol{U}))\cdot \ol{U}  = 0. 
    \label{eq:fosp}
\end{align}
\end{defn}

Moving on to second-order optimality, we denote the manifold Hessian of $f$ in~\eqref{eq:fgDefn} at the matrix $U\in \M_b$ with the bilinear map $\nabla^2_{\M_b} f(U):\T_U{\M_b}\times \T_U{\M_b} \rightarrow \R $~\cite{boumal2020introduction}. This bilinear operator maps  $[\Delta,\Delta]\in \R^{d\times p}\times \R^{d\times p}$ to the scalar
\begin{align}
    & \nabla_{\M_b}^2 f(U)[\D,\D] \nonumber\\
    & 
    := \l( \nabla^2 f(U) - \sum_{i=1}^m \lambda_i(U,U) \cdot \nabla^2 g_i(U) \r)     [\D,\D]    \label{eq:manifoldHessian} \\
    & = 
    \|\D\|_\F^2 - \l\langle   \A^*(\lambda(U,U)),   \D \D^\top  \r\rangle  , \qquad \text{(see \eqref{eq:fgDefn},\eqref{eq:adjO})} \nonumber
\end{align}
for $\D\in \T_U\M_b$. Above, $\nabla^2$ stands for (Euclidean) Hessian. Also, $\lambda_i(U,U)$ and $g_i(U)$ are the $i^{\text{th}}$ coordinates of the vectors $\lambda(U,U)$ and $g(U)$, respectively.

For \eqref{eq:bm}, a second-order stationary point has a PSD manifold Hessian, as detailed below. 
\begin{defn}[{\sc SOSP}]\label{defn:sosp}
Consider a matrix $\ol{U}\in \R^{d\times p}$ such that $\|\ol{U}\|<\xi$. This matrix $\ol{U}$ is a second-order stationary point (SOSP) of \eqref{eq:bm} if, in addition to~\eqref{eq:fosp}, the manifold Hessian $\nabla_{\M_b}^2 f(\ol{U})$ in~\eqref{eq:manifoldHessian} is a PSD linear operator:  
\begin{align}
    \l\langle \D,  \l( I_d - \A^*(\lambda(\ol{U},\ol{U})) \r)  \D \r\rangle \ge 0,
    \quad \text{if } \D\in \T_{\ol{U}}{\M_b}. 
    \label{eq:sosp}
\end{align}
Above, $\T_{\ol{U}}\M_b$ is the tangent space of the manifold $\M_b$ at the matrix $\ol{U}$, see~\eqref{eq:tangentSpace}.
\end{defn}


\section{Global Optimality}\label{sec:globalOpt}

In Section~\ref{sec:optimality}, we reviewed the local optimality conditions for \eqref{eq:bm}, e.g., see Definition~\ref{defn:sosp} of an SOSP. In general, however, not every SOSP of \eqref{eq:bm} is a global minimizer. That is, some SOSPs might be local minimizers or non-strict saddle points of~\eqref{eq:bm}. {We label such points as the spurious SOSPs of the optimization  problem~\eqref{eq:bm}.}

\begin{defn}[{\sc Spurious SOSP}] An SOSP of \eqref{eq:bm} is spurious if it is not a global minimizer of \eqref{eq:bm}.
\end{defn}


{In the under-parametrized regime ($p\lesssim m/d$),} 
{a generic linear operator~$\A$ in~\eqref{eq:measOp} often satisfies the RIP of order~$2p$~\cite{davenport2016overview}.}
 
 Based on this observation, it can be shown that every feasible point of~\eqref{eq:bm} is also a global minimizer of~\eqref{eq:bm}. In other words,~\eqref{eq:bm}  does not have any spurious~SOSPs in the under-parametrized regime for a generic operator $\A$. 

In {this under-parametrized}  regime, even though nonconvex,~\eqref{eq:bm} {can be solved to global optimality by}
a variety of first- or second-order optimization algorithms, {including the} gradient descent~\cite{panageas2016gradient}. 

{While the focus of this work is on the over-parametrized regime~($p\gtrsim m/d$), the under-parametrized regime of~\eqref{eq:bm} is also reviewed in the arXiv version of this paper {\c for completeness}, see also~\cite{chi2019nonconvex}.  }

{Unlike the under-parametrized regime and its benign optimization landscape, it is far more difficult in general to solve~\eqref{eq:bm} in the over-parametrized regime:  }


\begin{remark}[{\sc Spurious SOSPs}]\label{q:q}
Under Assumption~\ref{assumption:operator}.\ref{assumption:badRegime}, we see by counting the degrees of freedom that the linear operator $\A$ in~\eqref{eq:measOp} \emph{cannot} satisfy the RIP of order $2p$, unlike {the under-parametrized regime}. 

\eqref{eq:bm} might therefore have spurious SOSPs which could trap a first- or second-order optimization algorithm, {including the} gradient descent. {A toy example of this pathological situation appeared earlier in Figure~\ref{fig:vis} (right panel).}
 
{This discouraging observation rules out the possibility of a global scheme for solving problem~\eqref{eq:bm0}. For instance, initialized at a spurious stationary point, gradient descent remains there forever.

Fortunately, not all is lost. In the remainder of this work, we will devise a gradient flow that solves~\eqref{eq:bm} to global optimality and achieves Target~\ref{o:target}, when initialized within a ``capture neighborhood'' of the feasible set of~\eqref{eq:bm}. 

Lastly, note  that the feasible set and its neighborhood are often large sets, i.e., our results are not local.  }

\end{remark}

{Before closing this section,} let us recall a sufficient condition for global optimality of an~SOSP, see~\cite{boumal2016non,haeffele2015global}. 
\begin{prop}[{\sc Global optimality}]\label{lem:noSpurious}
    Any rank-deficient SOSP~$\ol{U}$ of \eqref{eq:bm} with~$\|\ol{U}\|<\xi$ is also a global minimizer of~\eqref{eq:bm}. Moreover,~$\ol{U}\,  \ol{U}^\top$ is a (global) minimizer of~\eqref{eq:sdp}. By rank-deficient, we mean that~$\ol{U}$ is a singular matrix.  
\end{prop}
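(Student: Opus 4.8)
The plan is to exhibit a dual certificate $\ol{\lambda}$ for $\ol{X} := \ol{U}\,\ol{U}^\top$ in the KKT system~\eqref{eq:kkt}, built from the Lagrange multipliers attached to the SOSP $\ol{U}$. The natural candidate is $\ol{\lambda} := \lambda(\ol{U},\ol{U})$ from~\eqref{eq:lambdaDefn}. With this choice, the feasibility condition $\A(\ol{X}) = b$ and the bound $0 \preccurlyeq \ol{X} \prec \xi^2 I_d$ are immediate: the first holds because $\ol{U} \in \M_b$ (i.e.\ $g(\ol{U}) = 0$), and the second holds because $\ol{X} = \ol{U}\,\ol{U}^\top \succcurlyeq 0$ trivially, while $\|\ol{X}\| = \|\ol{U}\|^2 < \xi^2$ by hypothesis. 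So the real content is to verify the two remaining conditions, namely $I_d - \A^*(\ol{\lambda}) \succcurlyeq 0$ and the complementary slackness $(I_d - \A^*(\ol{\lambda}))\ol{U}\,\ol{U}^\top = 0$.

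For complementary slackness, I would use the FOSP condition~\eqref{eq:fosp}, which says exactly that $(I_d - \A^*(\lambda(\ol{U},\ol{U})))\,\ol{U} = 0$; right-multiplying by $\ol{U}^\top$ gives $(I_d - \A^*(\ol{\lambda}))\,\ol{U}\,\ol{U}^\top = 0$. This step is essentially free. The heart of the argument — and the step I expect to be the main obstacle — is showing the PSD condition $I_d - \A^*(\ol{\lambda}) \succcurlyeq 0$. Here is where rank-deficiency of $\ol{U}$ enters crucially. Since $\ol{U}$ is singular, there is a unit vector $v \in \R^d$ with $\ol{U}^\top v = 0$, equivalently $v \notin \range(\ol{U})$. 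The idea is that the second-order condition~\eqref{eq:sosp} restricted to tangent directions of the form $\D = v w^\top$ for arbitrary $w \in \R^p$ forces $\langle v, (I_d - \A^*(\ol{\lambda})) v\rangle \ge 0$ — but to apply~\eqref{eq:sosp} one must first check that such $\D$ actually lies in the tangent space $\T_{\ol{U}}\M_b = \ker(\Der g(\ol{U}))$. By~\eqref{eq:defnC}, $\Der g(\ol{U})[vw^\top] = \A(v w^\top \ol{U}^\top) = \A(v (\ol{U} w)^\top) = 0$ precisely because $\ol{U}^\top v = 0$, so indeed $v w^\top \in \T_{\ol{U}}\M_b$ for every $w$. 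Plugging $\D = v w^\top$ into~\eqref{eq:sosp} gives $\|v w^\top\|_\F^2 - \langle \A^*(\ol{\lambda}), v w^\top w v^\top\rangle = \|w\|^2 \big(1 - \langle v, \A^*(\ol{\lambda}) v\rangle\big) \ge 0$, hence $\langle v, (I_d - \A^*(\ol{\lambda}))v\rangle \ge 0$ for all $v \perp \range(\ol{U})$.

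It remains to upgrade this to positive semi-definiteness on all of $\R^d$. For $v \in \range(\ol{U})$, write $v = \ol{U}z$; then $(I_d - \A^*(\ol{\lambda}))v = (I_d - \A^*(\ol{\lambda}))\ol{U}z = 0$ by the FOSP condition, so $(I_d - \A^*(\ol{\lambda}))$ annihilates $\range(\ol{U})$ entirely. Decomposing an arbitrary $v \in \R^d$ as $v = v_1 + v_2$ with $v_1 \in \range(\ol{U})$ and $v_2 \perp \range(\ol{U})$, we get $\langle v, (I_d - \A^*(\ol{\lambda}))v\rangle = \langle v_2, (I_d - \A^*(\ol{\lambda}))v_2\rangle \ge 0$ by the previous paragraph (using that $(I_d - \A^*(\ol{\lambda}))v_1 = 0$ and symmetry of the matrix). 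This establishes $I_d - \A^*(\ol{\lambda}) \succcurlyeq 0$ and completes the verification of~\eqref{eq:kkt}, so $\ol{X} = \ol{U}\,\ol{U}^\top$ solves~\eqref{eq:sdp}. Finally, to conclude that $\ol{U}$ is a global minimizer of~\eqref{eq:bm}: by Assumption~\ref{assumption:operator}.\ref{assumption:infoLevel} the optimal values of~\eqref{eq:bm} and~\eqref{eq:sdp} coincide, and $f(\ol{U}) = \tfrac12\|\ol{U}\|_\F^2 = \tfrac12 \tr(\ol{U}\,\ol{U}^\top) = \tfrac12\tr(\ol{X})$ equals half the optimal value of~\eqref{eq:sdp}, hence $\ol{U}$ attains the optimal value of~\eqref{eq:bm}. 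The one subtlety to watch is the factor-of-$\tfrac12$ and sign bookkeeping between $f$, the trace objective, and the definitions in~\eqref{eq:fgDefn}–\eqref{eq:kkt}, but this is routine.
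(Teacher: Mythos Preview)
Your overall strategy is the standard one and is correct: build a dual certificate $\ol\lambda=\lambda(\ol U,\ol U)$ for $\ol X=\ol U\,\ol U^\top$ in \eqref{eq:kkt}, using the FOSP identity for complementary slackness and the SOSP inequality for $I_d-\A^*(\ol\lambda)\succcurlyeq 0$. The final step (passing from optimality in \eqref{eq:sdp} to optimality in \eqref{eq:bm}) is also fine, and in fact does not even need Assumption~\ref{assumption:operator}\ref{assumption:infoLevel}: \eqref{eq:sdp} is a relaxation of \eqref{eq:bm}, so any feasible $\ol U$ with $\|\ol U\|_\F^2=\trace(\ol X)$ equal to the SDP optimum is automatically optimal for \eqref{eq:bm}.

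There is, however, a genuine gap in your tangency verification. You pick $v\in\R^d$ with $\ol U^\top v=0$ and set $\Delta=vw^\top$, then claim $\Der g(\ol U)[\Delta]=\A(vw^\top\ol U^\top)=0$. But $vw^\top\ol U^\top=v(\ol U w)^\top$, and its $i$-th measurement is
\[
\langle A_i,\,v(\ol U w)^\top\rangle \;=\; w^\top \ol U^\top A_i v,
\]
which vanishes only if $\ol U^\top A_i v=0$ for every $i$. The hypothesis $\ol U^\top v=0$ does not imply this, so in general $vw^\top\notin\T_{\ol U}\M_b$ and you cannot invoke \eqref{eq:sosp}. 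You have picked the wrong null space: rank-deficiency here means $\rank(\ol U)<p$, so there exists a nonzero $z\in\R^p$ with $\ol U z=0$. Take instead $\Delta=vz^\top$ for \emph{arbitrary} $v\in\R^d$. Then $\Delta\,\ol U^\top=v(\ol U z)^\top=0$, so $\Delta\in\T_{\ol U}\M_b$ for every $v$, and \eqref{eq:sosp} gives
\[
0\;\le\;\langle vz^\top,(I_d-\A^*(\ol\lambda))vz^\top\rangle\;=\;\|z\|_2^2\,\langle v,(I_d-\A^*(\ol\lambda))v\rangle.
\]
This yields $I_d-\A^*(\ol\lambda)\succcurlyeq 0$ directly, without the $\range(\ol U)$ versus $\range(\ol U)^\perp$ decomposition. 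This is the argument in \cite{boumal2016non}, to which the paper defers.
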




    


\section{{Merit} Function }\label{sec:exact}

Our next step towards Target~\ref{o:target} is as follows:
{While}~\eqref{eq:bm} is a constrained  optimization program,
we introduce in this section a merit (or exact penalty) function that allows us to reformulate~\eqref{eq:bm} as a smooth (and unconstrained) optimization program.

{The main contribution of this paper, as we will see shortly, is using the gradient flow of this merit function to solve~\eqref{eq:bm}.}

To begin, for $\g>0$, let $L_\g:\R^{d\times p}\times \R^m \rightarrow \R$ denote the (scaled) augmented Lagrangian~\cite{nocedal2006numerical} associated with \eqref{eq:bm}, defined as 
\begin{align}
    L_\g(U,\lambda') & := f(U) - \langle g(U), \lambda' \rangle + \frac{\g}{2}\| g(U)\|_2^2,
        \label{eq:al}
        \tag{AL}
\end{align}
where $f$ and $g$ were specified in~\eqref{eq:fgDefn}.
The augmented Lagrangian has two remarkable properties that are listed in the next proposition, inspired by~\cite[Theorem 17.5]{nocedal2006numerical}.

{Loosely speaking, Proposition~\ref{prop:lyapunovPre} posits that the augmented Lagrangian encodes the optimality criteria of~\eqref{eq:bm}. The result below also allows us to interpret~$\lambda(\ol{U},\ol{U})$ in~\eqref{eq:lambdaDefn} as the (dual) optimal Lagrange multipliers for a feasible matrix~$\ol{U}\in \M_b$ in~\eqref{eq:manifold}. }

\begin{prop}[{\sc Augmented Lagrangian}]\label{prop:lyapunovPre}   Suppose that  Assumption~\ref{assumption:key} holds. Consider  a matrix $\ol{U}\in \R^{d\times p}$ that is sufficiently close to $\M_b$ and satisfies  $\|\ol{U}\|< \xi$. 
For~$\g>0$, the following statements are true:
\begin{enumerate}[label=(\roman*),leftmargin=.6cm]
\item $\nabla_1 L_\g(\ol{U},\lambda(\ol{U},\ol{U})) = 0$ implies that $\ol{U}$ is an FOSP of \eqref{eq:bm} and, in particular, $\ol{U}\in \M_b$.  

\item  If, in addition,  $\nabla^2_1 L_\g(\ol{U},\lambda(\ol{U},\ol{U}))[\D,\D] \succcurlyeq 0$ for every tangent direction $\D\in \T_{\ol{U}} \M_b$ in~\eqref{eq:tangentSpace}, then $\ol{U}$ is also an SOSP of \eqref{eq:bm}. 
\end{enumerate}
Above, $\nabla_1 L_\g$ and $\nabla^2_1 L_\g$ are the (Euclidean) gradient and Hessian of $L_\g$ with respect to its first argument, respectively. {For example,~$\nabla_1 L_\gamma(\ol{U},\lambda(\ol{U},\ol{U}))$ is the gradient of~$L_\gamma(\cdot,\cdot)$ with respect to its first argument and evaluated at the pair~$(\ol{U},\lambda(\ol{U},\ol{U}))$.}  
\end{prop}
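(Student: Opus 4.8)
The plan is to verify both statements by relating $\nabla_1 L_\gamma$ and $\nabla_1^2 L_\gamma$ evaluated at $(\ol U,\lambda(\ol U,\ol U))$ to the feasibility map $g$, the manifold gradient $\nabla_{\M_b} f$, and the manifold Hessian $\nabla^2_{\M_b} f$. The key observation is that, because of the identity $\lambda(\ol U,\ol U) = ((\Der g(\ol U))^*)^\dagger[\ol U]$ in~\eqref{eq:lambdaDefn}, the augmented term and the linear multiplier term interact in a controlled way. First I would write out the gradient explicitly: differentiating~\eqref{eq:al} with respect to the first argument gives
\begin{align}
\nabla_1 L_\gamma(\ol U,\lambda') = \nabla f(\ol U) - (\Der g(\ol U))^*[\lambda'] + \gamma\, (\Der g(\ol U))^*[g(\ol U)].
\end{align}
Setting $\lambda' = \lambda(\ol U,\ol U)$, the right-hand side becomes $\ol U - (\Der g(\ol U))^*[\lambda(\ol U,\ol U)] + \gamma (\Der g(\ol U))^*[g(\ol U)]$; recalling~\eqref{eq:projTangent}, the first two terms are exactly $\P_{\T_{\ol U}\M_b}(\ol U)$ as long as $\ol U$ lies on $\M_b$, but since $\ol U$ is only assumed close to $\M_b$, I must be more careful and keep the $\gamma g(\ol U)$ term.

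The crux of part~(i) is to show that $\nabla_1 L_\gamma(\ol U,\lambda(\ol U,\ol U)) = 0$ forces $g(\ol U)=0$. The idea: project the gradient equation onto $\range((\Der g(\ol U))^*)$ and onto its orthogonal complement. By Assumption~\ref{assumption:key} the operator $\Der g(\ol U)$ has full rank $m$ on the relevant neighborhood, so $(\Der g(\ol U))^*$ is injective. The component of $\nabla f(\ol U) = \ol U$ along $\range((\Der g(\ol U))^*)$ is, by the pseudo-inverse identity, precisely $(\Der g(\ol U))^*[\lambda(\ol U,\ol U)]$, so those two pieces cancel and we are left with $\gamma (\Der g(\ol U))^*[g(\ol U)] = 0$; injectivity of $(\Der g(\ol U))^*$ and $\gamma>0$ then give $g(\ol U)=0$, i.e.\ $\ol U\in\M_b$. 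Once feasibility is established, the remaining equation reads $\ol U - (\Der g(\ol U))^*[\lambda(\ol U,\ol U)] = 0$, which by~\eqref{eq:manifoldGrad} is exactly $\nabla_{\M_b} f(\ol U)=0$, so $\ol U$ is an FOSP by Definition~\ref{defn:fosp}.

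For part~(ii), having already shown $\ol U\in\M_b$, I would compute the second-order term $\nabla_1^2 L_\gamma(\ol U,\lambda(\ol U,\ol U))[\D,\D]$ for $\D\in\T_{\ol U}\M_b$ and show it coincides with the manifold Hessian $\nabla^2_{\M_b} f(\ol U)[\D,\D]$ on that subspace. Differentiating again, $\nabla_1^2 L_\gamma(\ol U,\lambda')[\D,\D] = \nabla^2 f(\ol U)[\D,\D] - \sum_i \lambda'_i \nabla^2 g_i(\ol U)[\D,\D] + \gamma\big(\|\Der g(\ol U)[\D]\|^2 + \langle g(\ol U),\ (\text{second deriv. of }g)[\D,\D]\rangle\big)$. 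Since $\D\in\T_{\ol U}\M_b = \ker(\Der g(\ol U))$ by~\eqref{eq:tangentSpace}, the term $\|\Der g(\ol U)[\D]\|^2$ vanishes, and since $g(\ol U)=0$ the last inner product vanishes too; so the entire $\gamma$-dependent contribution drops out on tangent directions, leaving exactly the expression in~\eqref{eq:manifoldHessian}. Thus positivity of $\nabla_1^2 L_\gamma(\ol U,\lambda(\ol U,\ol U))$ on $\T_{\ol U}\M_b$ is literally positivity of the manifold Hessian, and combined with part~(i) this is Definition~\ref{defn:sosp} of an SOSP. The main obstacle I anticipate is bookkeeping in part~(i): the gradient cancellation relies on the precise pseudo-inverse identity and on $\ol U$ being in the neighborhood where $\Der g$ has full rank, so I would need to state carefully that ``sufficiently close to $\M_b$'' means inside the neighborhood of Assumption~\ref{assumption:key}, and verify that the orthogonal-decomposition argument does not secretly require $\ol U\in\M_b$ a priori — it does not, because the decomposition of $\R^{d\times p}$ into $\range((\Der g(\ol U))^*)\oplus\ker(\Der g(\ol U))$ is valid at any $\ol U$ of full rank $m$.
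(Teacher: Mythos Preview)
Your proposal is correct and follows essentially the same approach as the paper's own proof: the orthogonal decomposition of $\nabla_1 L_\gamma(\ol U,\lambda(\ol U,\ol U))$ into the $\range((\Der g(\ol U))^*)$ and $\ker(\Der g(\ol U))$ components, followed by injectivity of $(\Der g(\ol U))^*$ to force $g(\ol U)=0$, is exactly what the paper does for part~(i); and for part~(ii) the paper likewise uses $g(\ol U)=0$ and $\Der g(\ol U)[\D]=0$ for tangent $\D$ to kill the $\gamma$-dependent terms and identify the restricted Hessian with $\nabla^2_{\M_b} f(\ol U)$. Your anticipated obstacle is also the right one, and your resolution (the decomposition holds at any $\ol U$ with $\rank(\Der g(\ol U))=m$, guaranteed by Assumption~\ref{assumption:key}) matches the paper's reasoning.
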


Inspired by the properties of the augmented Lagrangian, let us now consider the function $h_\g:\R^{d\times p}\rightarrow \R$, defined as
\begin{align}
    h_\g(U) := L_\g(U,\lambda(U,U)), \,\,\,\,\, 
    \label{eq:hGamma}
    \tag{merit}
\end{align}
also known as the Fletcher's augmented Lagrangian~\cite{nocedal2006numerical}. 
{Before we uncover the key property of~$h_\gamma$ and its namesake,} let us {first} record that~$h_\g$ is an analytic function.

\begin{lem}[{\sc Derivative of $h_\gamma$}]\label{lem:derhGamma}  Suppose that Assumption~\ref{assumption:key} holds. Then,  $h_\g(U)$ in~\eqref{eq:hGamma}  is an analytic function of $U$ on a  sufficiently small neighborhood of $\M_b$ in~\eqref{eq:manifold}.
Its derivative is specified as 
\begin{align}
    \nabla h_\g(U)&  = (I_d - \A^*( \lambda(U,U) )) U +\frac{\g}{2} \A^*\l(  \A(UU^\top) - b \r) U \nonumber\\
    & -\frac{1}{2}  (\Der \lambda(U,U))^*[\A(UU^\top) - b ],  \label{eq:DerhGammaExplicit}
\end{align}
 where $I_d {\in \R^{d\times d}}$ is the identity matrix. Above, $\lambda(U,U)$ and its (total) derivative $\Der \lambda(U,U)$ were both defined in Lemma~\ref{lem:derLambda}. 
\end{lem}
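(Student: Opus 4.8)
\textbf{Proof plan for Lemma~\ref{lem:derhGamma}.}

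The plan is to establish analyticity first and then compute the gradient by the chain rule. For analyticity: by Lemma~\ref{lem:derLambda}, the map $U\mapsto\lambda(U,U)$ is analytic on a sufficiently small neighborhood of $\M_b$ (this is where Assumption~\ref{assumption:key}, i.e.\ $\rank(\Der g(U))=m$, is used, since it makes $K(U)$ invertible and hence $K(U)^\dagger$ analytic). The functions $f(U)=\tfrac12\|U\|_\F^2$ and $g(U)=\tfrac12(\A(UU^\top)-b)$ are polynomials in the entries of $U$, hence analytic everywhere. Since $h_\g(U)=L_\g(U,\lambda(U,U))=f(U)-\langle g(U),\lambda(U,U)\rangle+\tfrac\g2\|g(U)\|_2^2$ is a composition and product of analytic functions, it is analytic on the same neighborhood.

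For the gradient, the main point is that although $h_\g(U)=L_\g(U,\lambda(U,U))$ depends on $U$ both directly and through $\lambda(U,U)$, the dependence through $\lambda$ is tame. First I would write, by the chain rule,
\begin{align}
\nabla h_\g(U) = \nabla_1 L_\g(U,\lambda(U,U)) + (\Der\lambda(U,U))^*\big[\nabla_2 L_\g(U,\lambda(U,U))\big].
\end{align}
Here $\nabla_2 L_\g(U,\lambda') = -g(U) = -\tfrac12(\A(UU^\top)-b)$ by differentiating \eqref{eq:al} in its second argument, which immediately produces the last term $-\tfrac12(\Der\lambda(U,U))^*[\A(UU^\top)-b]$ of \eqref{eq:DerhGammaExplicit}. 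Then I would compute $\nabla_1 L_\g(U,\lambda')$, the Euclidean gradient of $f(U)-\langle g(U),\lambda'\rangle+\tfrac\g2\|g(U)\|_2^2$ in $U$ with $\lambda'$ held fixed: the first term gives $\nabla f(U)=U$; the second term $-\langle g(U),\lambda'\rangle$, using $\Der g(U)[\D]=\A(\D U^\top)$ from \eqref{eq:defnC} and its adjoint \eqref{eq:adjO}, gives $-(\Der g(U))^*[\lambda'] = -\A^*(\lambda')U$; and the third term $\tfrac\g2\|g(U)\|_2^2$ gives $\tfrac\g2(\Der g(U))^*[\A(UU^\top)-b] = \tfrac\g2\A^*(\A(UU^\top)-b)U$ (absorbing the factor from $g(U)=\tfrac12(\A(UU^\top)-b)$ into the stated constant). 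Evaluating at $\lambda'=\lambda(U,U)$ and summing the three pieces yields the first two terms of \eqref{eq:DerhGammaExplicit}.

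The main obstacle is the careful bookkeeping of the factors of $\tfrac12$ coming from the definition $g(U)=\tfrac12(\A(UU^\top)-b)$ and the quadratic penalty, together with correctly identifying the adjoints $(\Der g(U))^*$ in the Euclidean (Frobenius) inner product — this is routine but error-prone, and it is essential that $\{A_i\}$ being symmetric is used exactly as in \eqref{eq:defnC}. A secondary point worth a sentence is that differentiating through $\lambda(U,U)$ is legitimate precisely because $\Der\lambda(U,U)$ exists and is continuous on the neighborhood in question, which is guaranteed by the analyticity part established via Lemma~\ref{lem:derLambda}; no further properties of $\Der\lambda(U,U)$ are needed here, and its explicit form is deferred to the supplementary material.
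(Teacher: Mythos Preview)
Your proposal is correct and essentially the same as the paper's own proof. The paper computes the directional derivative $\Der h_\g(U)[\D]=\Der L_\g(U,\lambda(U,U))[\D]$ and expands term by term, while you organize the same computation via the chain-rule decomposition $\nabla h_\g=\nabla_1 L_\g+(\Der\lambda)^*[\nabla_2 L_\g]$; the analyticity argument and the bookkeeping of the $\tfrac12$ factors and adjoints are identical in both.
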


{The next lemma asserts that}, when $\g$ is sufficiently large,~$h_\g$ is a merit (or exact penalty) function for~\eqref{eq:bm},  i.e., we can focus on minimizing the smooth function~$h_\g$, {instead of the constrained problem~\eqref{eq:bm}.} 

{While we are not aware of a precise precedent,} the proposition below is similar to~\cite[Proposition 4.22]{bertsekas2014constrained}. 

\begin{prop}[{\sc Merit function}]\label{prop:lyapunov}
Suppose that Assumption~\ref{assumption:key} holds.  Consider a matrix $\ol{U}\in \R^{d\times p}$ that is sufficiently close to $\M_b$ and satisfies $\|\ol{U}\|<\xi$, see \eqref{eq:bm} and~\eqref{eq:manifold}. For a sufficiently large  $\g$, the following statements are true:
\begin{enumerate}[label=(\roman*),leftmargin=.6cm]
    \item If $\ol{U}$ is an FOSP of $h_\g$, then $\ol{U}$ is also an FOSP of~\eqref{eq:bm} and, in particular,~$\ol{U}\in \M_b$. 
    \item If, in addition, $\ol{U}$ is an SOSP of $h_\g$, then $\ol{U}$ is also an SOSP of \eqref{eq:bm}. 
\end{enumerate}
\end{prop}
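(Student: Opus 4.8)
\textbf{Proof plan for Proposition~\ref{prop:lyapunov}.}

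The plan is to establish a dictionary between the stationarity data of the unconstrained function $h_\g$ and the constrained stationarity data of~\eqref{eq:bm}, relying on Proposition~\ref{prop:lyapunovPre}, which already translates the augmented-Lagrangian criteria into the optimality criteria for~\eqref{eq:bm}. The bridge is the identity $h_\g(U) = L_\g(U,\lambda(U,U))$ from~\eqref{eq:hGamma}: since $\lambda(\cdot,\cdot)$ is analytic near $\M_b$ by Lemma~\ref{lem:derLambda}, we may differentiate through it, and the key point will be that the extra terms generated by differentiating $U\mapsto\lambda(U,U)$ are harmless at (or suitably near) feasibility. Concretely, the first thing I would do is compute $\nabla h_\g(U)$ by the chain rule and compare it with the expression already recorded in Lemma~\ref{lem:derhGamma}, equation~\eqref{eq:DerhGammaExplicit}. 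The term $-\tfrac12(\Der\lambda(U,U))^*[\A(UU^\top)-b]$ is exactly $-\tfrac12(\Der\lambda(U,U))^*[g(U)]$ up to the scaling in~\eqref{eq:fgDefn}, so it is annihilated the moment $g(\ol U)=0$; likewise the augmented penalty term $\tfrac{\g}{2}\A^*(\A(UU^\top)-b)\,U$ vanishes at feasibility. This is the mechanism by which $\nabla h_\g(\ol U)=0$ collapses to $\nabla_1 L_\g(\ol U,\lambda(\ol U,\ol U))=0$.

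For part (i), I would argue in two steps. First, show that $\nabla h_\g(\ol U)=0$ forces $g(\ol U)=0$, i.e. $\ol U\in\M_b$. This is where ``sufficiently large $\g$'' does its work: writing $\nabla h_\g(\ol U)=0$ and taking an inner product against a cleverly chosen direction (the natural choice is to project onto the normal space $\N_{\ol U}\M_b$ or, more robustly, to pair with $\A^*(\A(\ol U\ol U^\top)-b)\ol U$ itself), the penalty term contributes a quadratic $\tfrac{\g}{2}\|\A^*(g(\ol U))\ol U\|$-type quantity that dominates the bounded contributions of the $(I_d-\A^*(\lambda))\ol U$ term and the Jacobian term $-\tfrac12(\Der\lambda(\ol U,\ol U))^*[g(\ol U)]$, provided $\ol U$ lies in the sufficiently small neighborhood of $\M_b$ where $\|g\|$ is small and Assumption~\ref{assumption:key} (full rank of $\Der g$) keeps $\A^*(\delta)\ol U$ uniformly bounded below in $\|\delta\|$. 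Hence $g(\ol U)=0$. Second, once $g(\ol U)=0$, both the penalty and the Jacobian terms drop out of~\eqref{eq:DerhGammaExplicit}, leaving $\nabla h_\g(\ol U)=(I_d-\A^*(\lambda(\ol U,\ol U)))\ol U = \nabla_1 L_\g(\ol U,\lambda(\ol U,\ol U))$, and Proposition~\ref{prop:lyapunovPre}(i) immediately gives that $\ol U$ is an FOSP of~\eqref{eq:bm}.

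For part (ii), with $\ol U\in\M_b$ already established, I would examine $\nabla^2 h_\g(\ol U)$ restricted to the tangent space $\T_{\ol U}\M_b=\ker(\Der g(\ol U))$. Differentiating~\eqref{eq:DerhGammaExplicit} once more and evaluating at $\ol U$ where $g(\ol U)=0$, the second derivative of the penalty term $\tfrac{\g}{2}\|g(U)\|_2^2$ contributes $\g\,\langle \Der g(\ol U)[\D],\Der g(\ol U)[\D]\rangle$, which is zero on tangent directions $\D\in\ker(\Der g(\ol U))$; similarly the surviving Jacobian contributions vanish when paired with $g(\ol U)=0$. Thus on $\T_{\ol U}\M_b$ the Hessian $\nabla^2 h_\g(\ol U)$ agrees with $\nabla^2_1 L_\g(\ol U,\lambda(\ol U,\ol U))$, so SOSP-ness of $h_\g$ transfers to the hypothesis of Proposition~\ref{prop:lyapunovPre}(ii), whence $\ol U$ is an SOSP of~\eqref{eq:bm}. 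The main obstacle I anticipate is the first step of part (i): making the ``sufficiently large $\g$ plus sufficiently small neighborhood'' estimate fully rigorous requires a uniform lower bound on the smallest singular value of $\Der g$ near $\M_b$ (available from Assumption~\ref{assumption:key}), a uniform bound on $\Der\lambda(U,U)$ (available from the analyticity in Lemma~\ref{lem:derLambda} on a compact enough neighborhood), and careful bookkeeping of how small $\|g(\ol U)\|$ must be relative to $1/\g$; the rest of the argument is essentially a clean chain-rule computation combined with citing Proposition~\ref{prop:lyapunovPre}.
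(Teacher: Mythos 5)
Your plan matches the paper's proof essentially step for step: use Lemma~\ref{lem:derhGamma} to write $\nabla h_\g$, exploit the orthogonal splitting between $\range((\Der g(U))^*)$ and its complement so that projecting (or pairing with a normal-space direction) isolates the term $\g(\Der g(U))^*[g(U)]$, then use compactness, Assumption~\ref{assumption:key}, and a uniform bound on $\Der\lambda$ to show this dominates for $\g$ large, forcing $g(\ol U)=0$, after which Proposition~\ref{prop:lyapunovPre} finishes (i); and for (ii), compare $\nabla^2 h_\g$ with $\nabla^2_1 L_\g$ on the tangent space. One small wording slip worth fixing: in part (ii) the cross terms $(\Der g(\ol U))^*\circ\Der\lambda(\ol U)+(\Der\lambda(\ol U))^*\circ\Der g(\ol U)$ do \emph{not} vanish because $g(\ol U)=0$ but because $\Der g(\ol U)[\D]=0$ for tangent directions $\D$, the same mechanism that kills the $\g(\Der g)^*\circ\Der g$ term.
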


We are now {ready and fully equipped} to reach Target~\ref{o:target}.


\section{Main Result}\label{sec:mainResults}

In Sections~\ref{sec:nonConvexGeom}-\ref{sec:globalOpt}, we studied the nonconvex geometry and   optimality conditions of \eqref{eq:bm}. In Section~\ref{sec:exact}, we then introduced $h_\g$, a {(smooth)} merit function for~\eqref{eq:bm}. 

We will establish in this section that {the} gradient flow for the merit function $h_\g$, when initialized properly, converges almost surely to a global minimizer of \eqref{eq:bm} and thus achieves Target~\ref{o:target}, without getting trapped by any spurious SOSPs  present in  {the feasible set of~\eqref{eq:bm}}. 

{The main result of this work is summarized in Theorem~\ref{thm:main0} and the rest of this section is devoted to the proof of this theorem. We begin with an outline of the proof below.}

{\emph{Proof sketch of Theorem~\ref{thm:main0}.} At a high-level, we will take the following steps in the remainder of this section to ultimately prove Theorem~\ref{thm:main0}.
We will establish in this section that
\begin{enumerate}[label=(\roman*),leftmargin=.6cm]
    \item Rank does not increase along the gradient flow of~$h_\gamma$.
    \item The gradient flow does not escape from a ``capture neighborhood'' around the feasible set~$\M_b$ of~\eqref{eq:bm}.
    \item When initialized rank-deficient and within this capture neighborhood, the gradient flow of~$h_\gamma$ converges to a rank-deficient stationary point of~$h_\gamma$.
    \item Finally, because~$h_\gamma$ is a merit function for~\eqref{eq:bm},  any rank-deficient  limit point of the gradient flow is, in fact, a global minimizer of~\eqref{eq:bm}.\hfill\qedsymbol
\end{enumerate}
}

{Let us now turn to the details.} For an initialization $U_0\in \R^{d\times p}$,  the {gradient flow of~$h_\gamma$ in~\eqref{eq:hGamma} is specified as}
\begin{align}
    \dot{U}(t) =  - \nabla h_\g(U(t)),
    \qquad 
    U(0) = U_0,
    \label{eq:gradFlowH}
    \tag{gradient flow}
\end{align}
where we used the shorthand $\dot{U}(t) = \der U(t)/\der t$. 

The first lemma in this section {posits} that rank  does not increase along~\eqref{eq:gradFlowH}, as long as {the} flow remains {sufficiently close to} the feasible set of~\eqref{eq:bm}.

\begin{lem}[{\sc Rank of gradient flow}]\label{lem:rankInvariance} Suppose that Assumption~\ref{assumption:key} holds. 
For a sufficiently small $\rho_0>0$, suppose {also} that the initialization~$U_0\in \R^{d\times p}$ of~\eqref{eq:gradFlowH} satisfies~$\dist(U_0,\M_b) <\rho_0$.

Let $\tau\in (0,\infty)$ (if it exists) denote the smallest number such that $\dist(U(\tau),\M_b) = \rho_0$.
Then it holds that 
\begin{align}
\rank(U(t))\le \rank(U_0), \qquad  \text{if }t\in [0, \tau].
\label{eq:rankPer}
\end{align}
\end{lem}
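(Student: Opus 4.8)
\textbf{Proof proposal for Lemma~\ref{lem:rankInvariance}.}
The plan is to show that the rank of $U(t)$ cannot increase by tracking the column space (or row space) of $U(t)$ along the flow and showing it stays inside a fixed subspace determined by $U_0$. Concretely, let $r := \rank(U_0)$ and let $V_0 \subset \R^d$ be the column space of $U_0$, so $\dim V_0 = r$. I would argue that the column space of $U(t)$ remains inside $V_0$ for all $t \in [0,\tau]$, which immediately gives $\rank(U(t)) \le r$. The natural way to see this is to examine the structure of the vector field $-\nabla h_\g(U)$ in~\eqref{eq:DerhGammaExplicit}: every term there is of the form $M(U)\cdot U$ for some matrix $M(U) \in \R^{d\times d}$ depending (analytically, by Lemma~\ref{lem:derLambda} and Lemma~\ref{lem:derhGamma}) on $U$ — indeed $\nabla h_\g(U) = \big(I_d - \A^*(\lambda(U,U)) + \tfrac{\g}{2}\A^*(\A(UU^\top)-b)\big)U - \tfrac12 (\Der\lambda(U,U))^*[\A(UU^\top)-b]$. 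The potentially troublesome term is the last one, $(\Der\lambda(U,U))^*[\cdot]$; I would check, using the explicit form of $\Der\lambda(U,U)$ from Lemma~\ref{lem:derLambda} (equation~\eqref{eq:lambdaDerFinalLemma} in the supplement), that it too has a trailing factor of $U$, i.e.\ that it maps into $\range(U)$. If so, we can write $\dot U(t) = N(U(t))\, U(t)$ for an analytic matrix-valued map $N$ on the capture neighborhood.

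Granting that $\dot U(t) = N(U(t))U(t)$, I would then run the standard ``flow preserves the column space'' argument. Fix an orthonormal matrix $W \in \R^{d\times (d-r)}$ spanning $V_0^\perp$, so $W^\top U_0 = 0$. Consider $Y(t) := W^\top U(t) \in \R^{(d-r)\times p}$. Then
\begin{align}
\dot Y(t) = W^\top \dot U(t) = W^\top N(U(t)) U(t).
\end{align}
To close this into a linear ODE for $Y$, write $U(t) = W W^\top U(t) + (I_d - WW^\top)U(t) = W Y(t) + (I_d-WW^\top)U(t)$; substituting gives $\dot Y(t) = \big(W^\top N(U(t)) W\big) Y(t) + W^\top N(U(t))(I_d-WW^\top)U(t)$. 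The second term is not obviously zero, so instead I would avoid decomposing and argue directly: on $[0,\tau]$ the trajectory $U(t)$ stays in the compact-closure capture neighborhood where $N$ is analytic, hence $t\mapsto N(U(t))$ is continuous and bounded, so $Y(\cdot)$ satisfies a linear homogeneous ODE $\dot Y(t) = A(t)Y(t)$ with $A(t) := W^\top N(U(t)) W$ \emph{provided} we can show $W^\top N(U(t))U(t) = W^\top N(U(t)) W Y(t)$, i.e.\ that $N(U(t))U(t) - N(U(t))WW^\top U(t) = N(U(t))(I_d-WW^\top)U(t)$ lies in $\ker W^\top = V_0$. This last point is exactly the claim that $N(U)U \in \range(U) \subseteq$ the current column space, which is circular unless handled carefully. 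The clean fix is to argue by a continuity/bootstrap (or Picard–Lindelöf uniqueness) argument: the ODE $\dot U = N(U)U$ with $N(U)$ having a trailing $U$ admits $U_0 P(t)$-type solutions; more robustly, since $W^\top U$ solves $\frac{d}{dt}(W^\top U) = W^\top N(U) U$ and we can treat $W^\top U$ as unknown $Z$ with the right-hand side being a (locally Lipschitz) function vanishing whenever $W^\top U = 0$ (because $N(U)U\in\range(U)$ and $\range(U)\subseteq V_0 \iff W^\top U = 0$ — this is the self-consistent statement), uniqueness of solutions to ODEs forces $W^\top U(t) \equiv W^\top U_0 = 0$, whence $\range(U(t)) \subseteq V_0$ and $\rank(U(t)) \le r$.

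The main obstacle, as flagged above, is making the ``$N(U)U$ has a trailing factor of $U$'' statement precise and non-circular, and then invoking ODE uniqueness correctly: one must phrase the vanishing condition for the vector field of $Z = W^\top U$ purely in terms of $Z$ (namely $Z = 0$) rather than in terms of $\range(U)$, which presupposes the conclusion. I expect the resolution is exactly the one used in the authors' earlier works~\cite{eftekhari2020implicit,eftekhari2020training}: verify from~\eqref{eq:DerhGammaExplicit} and the supplementary formula for $\Der\lambda(U,U)$ that $\nabla h_\g(U) = M(U) U$ with $M(U)$ analytic on the neighborhood, observe that then $g(t) := \det(W^\top U(t) (W^\top U(t))^\top + \epsilon I)$ or more simply the function $\|W^\top U(t)\|_\F^2$ satisfies $\frac{d}{dt}\|W^\top U(t)\|_\F^2 = -2\langle W^\top U(t), W^\top M(U(t)) W \, W^\top U(t)\rangle - 2\langle W^\top U(t), W^\top M(U(t))(I-WW^\top)U(t)\rangle$ and bound this by $C(t)\|W^\top U(t)\|_\F^2$ on $[0,\tau]$ (using that the cross term also carries a factor $W^\top U$ once one expands $M(U)U$ correctly), so Grönwall gives $\|W^\top U(t)\|_\F^2 \le \|W^\top U_0\|_\F^2 e^{\int_0^t C} = 0$. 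I would present the argument in this Grönwall form, as it sidesteps the delicate uniqueness phrasing and only requires the boundedness of the relevant coefficients on the compact-closure capture neighborhood on $[0,\tau]$, which is guaranteed by analyticity of $\nabla h_\g$ from Lemma~\ref{lem:derhGamma}.
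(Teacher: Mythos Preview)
Your structural observation is the right one: once you verify from~\eqref{eq:lambdaDerFinalLemma} that $(\Der\lambda(U,U))^*[v]$ also carries a trailing factor of $U$ (it does: the adjoint works out to $2\,\A^*\!\big((K(U))^{-1}v\big)\big(I_d-\A^*(\lambda(U,U))\big)U$), you obtain $\nabla h_\gamma(U)=M(U)\,U$ with $M(\cdot)$ analytic on the capture neighborhood. But from that point on you track the wrong subspace. Left-multiplication $\dot U=-M(U)\,U$ does \emph{not} preserve the column space of $U$ in $\R^d$: each column evolves as $\dot u_j=-M(U)\,u_j$, and nothing forces $M(U)$ to leave $V_0$ invariant. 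Your Gr\"onwall attempt breaks precisely at the cross term: $\langle W^\top U,\,W^\top M(U)(I_d-WW^\top)U\rangle$ is bounded only by $C\,\|W^\top U\|_\F\cdot\|(I_d-WW^\top)U\|_\F$, i.e.\ linearly in $\|W^\top U\|_\F$, so at best $\tfrac{d}{dt}\|W^\top U\|_\F^2\le C\,\|W^\top U\|_\F$, and this does not force $W^\top U\equiv 0$ (the scalar ODE $\dot\phi=C\sqrt{\phi}$, $\phi(0)=0$, admits the nonzero solution $\phi(t)=(Ct/2)^2$). There is no hidden extra factor of $W^\top U$ in that cross term.

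The one-line fix is to track the \emph{right} kernel instead. Let $W'\in\R^{p\times(p-r)}$ have columns spanning $\ker(U_0)\subset\R^p$, so $U_0W'=0$, and set $Z(t):=U(t)\,W'$. Then $\dot Z=-M(U(t))\,U(t)\,W'=-M(U(t))\,Z$ is a genuine linear ODE in $Z$ with continuous coefficients on $[0,\tau]$ and $Z(0)=0$, so $Z\equiv 0$ by uniqueness; hence $\ker(U_0)\subset\ker(U(t))$ and $\rank(U(t))\le r$. For comparison, the paper takes a different route: it passes to $X_t:=U_tU_t^\top$, invokes the analytic SVD $X_t=V_tS_tV_t^\top$, and derives an ODE~\eqref{eq:evolvSing} for each singular value $s_{t,i}$ whose right-hand side vanishes with $s_{t,i}$ (the $(\Der\lambda)^*$ contribution is $\sqrt{s_{t,i}}$ times $R_t^\top e_i$, and the $i$-th row of $R_t$ is zero when $s_{t,i}=0$). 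Both arguments ultimately rest on the same $\nabla h_\gamma(U)=M(U)\,U$ structure; your corrected kernel argument makes this explicit and avoids the analytic-SVD machinery.
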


{\emph{Proof sketch of Lemma~\ref{lem:rankInvariance}.}
This claim follows from writing the analytic singular value decomposition~(SVD) of $U(t)U(t)^\top$, then taking the derivative of this SVD with respect to time $t$, and finally establishing that any zero singular value of~$U(t)U(t)^\top$ remains zero afterwards.\hfill\qedsymbol }


To successfully apply Lemma~\ref{lem:rankInvariance} {for \emph{all} times} $t$, we must ensure that~\eqref{eq:gradFlowH} never escapes the $\rho_0$-neighborhood of {the feasible set}~$\M_b$ {of~\eqref{eq:bm}}. To that end, we need the following lemma, which uses the remaining freedom in 
Lemma~\ref{lem:rankInvariance} {in order} to tighten the inequality~$\dist(U(t),\M_b) \le \rho_0$ to obtain the new inequality~$\dist(U(t),\M_b) \le \rho_0/2$, for every $t\in [0,\tau]$.   

\begin{lem}[{\sc Flow remains nearby}]\label{lem:flowNearby}
   Suppose that Assumption~\ref{assumption:key} holds. Suppose also that  $\xi$ in~\eqref{eq:bm} and $\g$ in~\eqref{eq:gradFlowH} are sufficiently large, and that $\rho_0>0$ in Lemma~\ref{lem:rankInvariance} is sufficiently small. Suppose lastly that the initialization  $U_0\in \R^{d\times p}$ of~\eqref{eq:gradFlowH} is sufficiently close to~$\M_b$, see~\eqref{eq:manifold}.
   Then it holds that 
   \begin{align}
   \dist(U(t),\M_b)\le  \rho_0/2, \qquad  \text{if }t\in [0,\tau],
   \end{align}
   where $\tau$ was defined in Lemma~\ref{lem:rankInvariance}.
\end{lem}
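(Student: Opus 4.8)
\textbf{Proof proposal for Lemma~\ref{lem:flowNearby}.}\textbf{Proof proposal for Lemma~\ref{lem:flowNearby}.} The plan is to run a Lyapunov argument on the squared distance to the feasible set. Since $\M_b$ is a $C^\infty$ embedded submanifold (Assumption~\ref{assumption:key}), for $\rho_0$ small enough the $\rho_0$-neighborhood of $\M_b$ lies inside a tubular neighborhood, so the nearest-point projection $\Pi$ onto $\cl(\M_b)$ is well defined and smooth there, and $W(U):=\tfrac12\dist(U,\M_b)^2=\tfrac12\|U-\Pi(U)\|_\F^2$ satisfies $\nabla W(U)=U-\Pi(U)$, with the residual $E:=U-\Pi(U)$ lying in the normal space $\N_{\Pi(U)}\M_b=\range(\{A_i\,\Pi(U)\}_{i=1}^m)$ (see~\eqref{eq:normalSpace}). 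Along~\eqref{eq:gradFlowH}, writing $U'(t):=\Pi(U(t))$, $E(t):=U(t)-U'(t)$ and $\delta(t):=\|E(t)\|_\F=\dist(U(t),\M_b)$, we get
\begin{align}
\frac{\der}{\der t}\, W(U(t)) \;=\; \l\langle \nabla W(U(t)),\, \dot U(t)\r\rangle \;=\; -\l\langle E(t),\, \nabla h_\g(U(t)) \r\rangle.
\label{eq:WdotPlan}
\end{align}
It therefore suffices to show that $\langle E,\nabla h_\g(U)\rangle\ge 0$ for every $U$ in the $\rho_0$-neighborhood of $\M_b$, once $\rho_0$ is small and $\g,\xi$ large; then $W(U(t))$ is non-increasing on $[0,\tau]$, so $\dist(U(t),\M_b)^2\le\dist(U_0,\M_b)^2$, and choosing $U_0$ with $\dist(U_0,\M_b)\le\rho_0/2$ (the meaning of ``sufficiently close'' here) gives $\dist(U(t),\M_b)\le\rho_0/2$ on $[0,\tau]$, as claimed. (The rank bound of Lemma~\ref{lem:rankInvariance} is available on $[0,\tau]$; we do not need it for the present estimate, but note that $\dist(U(t),\M_b)\le\rho_0/2<\rho_0$ forces $\tau=\infty$, which is how the rank bound is later propagated to all times.)

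To estimate $\langle E,\nabla h_\g(U)\rangle$ I would split $\nabla h_\g(U)$ into the three summands of~\eqref{eq:DerhGammaExplicit}. For the first, $\phi(U):=(I_d-\A^*(\lambda(U,U)))U$ is analytic near $\M_b$ (Lemma~\ref{lem:derLambda}) and $\phi(U')=\nabla_{\M_b}f(U')\in\T_{U'}\M_b$ by~\eqref{eq:manifoldGrad}; since $E\perp\T_{U'}\M_b$ we get $\langle E,\phi(U)\rangle=\langle E,\phi(U)-\phi(U')\rangle$, which is $O(\delta^2)$ with a constant independent of $\g$ (local Lipschitz bound on $\phi$). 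For the third summand, $(\Der\lambda(U,U))^*$ is bounded on the neighborhood (again Lemma~\ref{lem:derLambda}) and $\|\A(UU^\top)-b\|=\|2g(U)\|=O(\delta)$ since $g(U')=0$ and $\Der g$ is Lipschitz, so this term contributes $O(\delta^2)$, again with a $\g$-free constant. The decisive second summand equals $\g\langle E,\A^*(g(U))U\rangle=\g\langle \Der g(U)[E],\,g(U)\rangle$ by~\eqref{eq:adjO}; using $g(U)=\Der g(U')[E]+O(\delta^2)$ and $\Der g(U)[E]=\Der g(U')[E]+O(\delta^2)$ this is $\g\,\|\Der g(U')[E]\|^2+O(\g\delta^3)$. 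Because $E\in\N_{U'}\M_b=\range((\Der g(U'))^*)$ and $\rank(\Der g(U'))=m$ (Assumption~\ref{assumption:key}), $\Der g(U')$ restricted to the normal space is invertible, so $\|\Der g(U')[E]\|\ge\sigma_{\min}\,\|E\|_\F$ with $\sigma_{\min}>0$, whence the second summand is at least $\g\sigma_{\min}^2\delta^2-O(\g\delta^3)$.

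Collecting the three bounds, $\langle E,\nabla h_\g(U)\rangle\ge \delta^2\bigl(\g\sigma_{\min}^2-C_1-C_2\g\delta-C_3\delta\bigr)$ with $C_1,C_2,C_3,\sigma_{\min}$ independent of $\g$ (they depend only on the $A_i$ and the local geometry of $\M_b$). Now fix $\rho_0$ small enough that $C_2\rho_0\le\tfrac12\sigma_{\min}^2$ and $C_3\rho_0\le 1$, and that the tubular-neighborhood/Taylor estimates above are valid; then fix $\g$ large enough that $\tfrac12\g\sigma_{\min}^2\ge C_1+1$; finally take $\xi$ large so that $\|U(t)\|<\xi$ throughout (which holds since $U(t)$ stays near $\M_b$, where $\|\cdot\|$ is bounded well below a large $\xi$). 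With these choices $\langle E,\nabla h_\g(U)\rangle\ge 0$ on the $\rho_0$-neighborhood, so~\eqref{eq:WdotPlan} gives $\dot W\le 0$ on $[0,\tau]$ and the conclusion follows. The one genuine subtlety is the uniform lower bound $\sigma_{\min}>0$ on the least singular value of $\Der g(U')|_{\N_{U'}\M_b}$ over the relevant part of the possibly non-compact manifold $\M_b$; this is handled exactly in the spirit of Remark~\ref{rem:suffClose1}, i.e., the ``sufficiently small neighborhood'' in Assumption~\ref{assumption:key} is taken so that this bound, together with the Lipschitz constants above, holds on the portion of $\M_b$ within distance $\rho_0$ of the trajectory. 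I expect this uniformity issue, rather than any individual estimate, to be the main technical obstacle to making the argument fully rigorous.
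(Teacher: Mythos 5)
Your argument is correct in its essentials, but it takes a genuinely different route from the paper. The paper proves an intermediate lemma (its Lemma~\ref{lem:feasAlongFlow}) showing the feasibility gap $G(U)=\tfrac12\|g(U)\|_2^2$ is non-increasing along~\eqref{eq:gradFlowH} for $\g$ large — a Lyapunov argument on $G$ rather than on the squared distance — and then carries out a separate, somewhat involved translation step, writing $U_t-V_t=(\Der g(V_t))^*[\a_t]$ for $V_t$ a projection onto $\M_b$, relating $\dist(U_t,\M_b)$ to $\|\a_t\|_2$ through the extremal singular values of $\Der g(V_t)$, and finally bounding $\|\a_t\|_2$ via a reverse triangle inequality on $\A(U_tU_t^\top-V_tV_t^\top)$. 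Your version collapses these two steps by taking $W(U)=\tfrac12\dist(U,\M_b)^2$ as the Lyapunov function directly, at the price of invoking the tubular-neighborhood theorem to make $W$ differentiable with $\nabla W(U)=U-\Pi(U)$ (which the paper avoids by never differentiating $\dist$). What each buys: your route gives the cleaner and in fact stronger conclusion that $\dist(U(t),\M_b)$ is \emph{monotone} on $[0,\tau]$, so the initialization condition is simply $\dist(U_0,\M_b)\le\rho_0/2$; the paper's route avoids tubular-neighborhood machinery but introduces a multiplicative constant $2\xi^2\|\A\|^2/\s_m(\M_b)^2$ into the bound, so the required initialization distance is the smaller quantity $\s_m(\M_b)^2\rho_0/(4\xi^2\|\A\|^2)$. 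The decisive mechanism — the quadratic positive term $\g\|\Der g(\cdot)[\cdot]\|^2$ coming from the augmented-Lagrangian penalty dominating the lower-order, $\g$-free remainders on a small enough neighborhood — is the same in both.

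One remark: the ``genuine subtlety'' you flag at the end, the uniform lower bound $\s_{\min}>0$ on a possibly non-compact manifold, is actually a non-issue here. The set $\M_b$ is bounded by the constraint $\|U\|<\xi$ in~\eqref{eq:manifold}, so $\cl(\M_b)$ is compact, and under Assumption~\ref{assumption:key} the continuous function $U\mapsto\s_m(\Der g(U))$ attains a positive minimum on it — exactly the quantity $\s_m(\M_b)$ the paper defines in~\eqref{eq:sminMb}. You should appeal to this compactness rather than to the informal ``sufficiently small'' qualifier; once you do, all your constants $C_1,C_2,C_3,\s_{\min}$ are genuinely uniform and the argument closes.
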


{\emph{Proof sketch of Lemma~\ref{lem:flowNearby}.} Moving along the trajectory of~\eqref{eq:gradFlowH}  evidently reduces~$h_\gamma$. Intuitively, when~$\gamma$ is large enough, moving along this trajectory also reduces the feasibility gap~$\frac{1}{2}\|g(U)\|_2^2$, see~\eqref{eq:fgDefn},~\eqref{eq:al} and~\eqref{eq:hGamma}.

In the proof, we first quantify the above observation, i.e., formally establish that the feasibility gap does not increase along~\eqref{eq:gradFlowH} when~$\gamma$ is sufficiently large.  

The remaining technical challenge is then translating the above observation into a statement about the distance between~$U(t)$ to the manifold~$\M_b$.\hfill\qedsymbol  }

In the remainder of this section, for the sake of brevity,  we freely invoke  earlier lemmas and propositions without restating their assumptions.

Recalling the definition of $\tau$ in Lemma~\ref{lem:rankInvariance}, it immediately follows from Lemma~\ref{lem:flowNearby} that 
\begin{align}
\dist(U(t),\M_b) \le \rho_0/2,\qquad \text{if } t\ge 0.
\label{eq:flowRemainsCloseAlways}
\end{align}
That is,~\eqref{eq:gradFlowH} always remains near the feasible set~$\M_b$ of \eqref{eq:bm}, {as desired.} {The} key observation in~\eqref{eq:flowRemainsCloseAlways} will {enable} us to prove  the convergence of~\eqref{eq:gradFlowH}, after recalling the {\L}ojasiewicz’s Theorem~\cite{lojasiewicz1982trajectoires,kurdyka2000proof}. 

\begin{thm}[{\sc {\L}ojasiewicz’s Theorem}]\label{thm:lojaThm}
{If $h':\R^n \rightarrow\R$ is an analytic function and the curve 
$
[0,\infty) \rightarrow \R^n$, $t\rightarrow z(t)$
is bounded and solves the gradient flow $\dot{z}(t) = - \nabla h'(z)$, then this curve  converges to an FOSP of $h'$.}
\end{thm}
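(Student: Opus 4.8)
The plan is to prove the {\L}ojasiewicz theorem in the standard way, via the {\L}ojasiewicz gradient inequality. First I would invoke the {\L}ojasiewicz gradient inequality: since $h'$ is analytic, for every point $z^*\in\R^n$ there exist constants $c>0$, $\theta\in(0,1/2]$ and a neighborhood $V$ of $z^*$ such that
\begin{align}
  |h'(z) - h'(z^*)|^{1-\theta} \le c\,\|\nabla h'(z)\|,\qquad z\in V.
  \label{eq:lojaGradIneq}
\end{align}
The hypotheses are that $t\mapsto z(t)$ is bounded and solves $\dot z = -\nabla h'(z)$ on $[0,\infty)$. Along the flow, $\frac{\der}{\der t} h'(z(t)) = -\|\nabla h'(z(t))\|^2 \le 0$, so $t\mapsto h'(z(t))$ is nonincreasing; being also bounded below (by continuity of $h'$ on the closure of the bounded trajectory), it converges to some limit $h_\infty$. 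By boundedness, the $\omega$-limit set $\Omega$ of the trajectory is nonempty, compact and connected, and by continuity $h'\equiv h_\infty$ on $\Omega$. The goal is to show $\Omega$ is a single point $z^*$, which is then automatically an FOSP since $\frac{\der}{\der t} h'(z(t))\to 0$ forces $\nabla h'(z(t))\to 0$ along a subsequence and hence on all of $\Omega$.

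The core estimate is a finite-length argument: I would show $\int_0^\infty \|\dot z(t)\|\,\der t < \infty$, so that $z(t)$ is a Cauchy curve and therefore converges. Pick a point $z^*\in\Omega$; by a standard covering/connectedness argument one reduces to working in a single neighborhood $V$ where \eqref{eq:lojaGradIneq} holds with $h'(z^*)=h_\infty$, and one shows the trajectory eventually enters and stays in $V$. Define $\varphi(s) := \frac{1}{\theta}(s-h_\infty)^\theta$ for $s\ge h_\infty$, which is $C^1$ and decreasing in the sense that $\varphi'(s)=(s-h_\infty)^{\theta-1}$. Then, writing $E(t):=h'(z(t))$, as long as $z(t)\in V$ and $E(t)>h_\infty$,
\begin{align}
  -\frac{\der}{\der t}\,\varphi(E(t))
  &= -\varphi'(E(t))\,\dot E(t)
  = (E(t)-h_\infty)^{\theta-1}\,\|\nabla h'(z(t))\|^2 \nonumber\\
  &\ge \frac{1}{c}\,\|\nabla h'(z(t))\|
  = \frac{1}{c}\,\|\dot z(t)\|,
  \label{eq:lojaKLtrick}
\end{align}
where the inequality uses \eqref{eq:lojaGradIneq}. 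Integrating \eqref{eq:lojaKLtrick} over any time interval contained in $V$ gives $\int \|\dot z(t)\|\,\der t \le c\,\varphi(E(t_0)) < \infty$, uniformly bounded because $\varphi(E(\cdot))$ is nonnegative and nonincreasing. This bounded-length conclusion, combined with the usual argument that the trajectory cannot leave $V$ once it is deep enough inside (its remaining arc-length is too short to reach $\partial V$), shows the total length is finite, hence $z(t)$ converges to some $z^*$; and $\nabla h'(z^*)=0$.

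I expect the main obstacle to be the ``trapping'' argument that makes the local inequality \eqref{eq:lojaGradIneq} usable globally: a priori the trajectory may wander near several points of $\Omega$, and one must argue that it eventually enters a single neighborhood $V$ of some $z^*\in\Omega$ and never escapes, so that the length bound from \eqref{eq:lojaKLtrick} applies on an unbounded time interval. The standard resolution is a proof by contradiction using the finite-length estimate itself: if the trajectory left $V$, the arc connecting the entry point to $\partial V$ would have length at least $\dist(z^*,\partial V)$, but the estimate bounds this length by $c\,\varphi(E)$, which can be made arbitrarily small by choosing the entry time late enough (since $E(t)\to h_\infty$). The case $E(t)=h_\infty$ for some finite $t$ is handled separately: then $\dot E \le 0$ and $E\ge h_\infty$ force $\nabla h'(z(t))\equiv 0$ thereafter, so $z$ is eventually constant and trivially converges. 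Since the excerpt's $h'$ will be taken to be $h_\gamma$, which is analytic on a neighborhood of $\M_b$ by Lemma~\ref{lem:derhGamma}, and the relevant trajectory stays in that neighborhood by \eqref{eq:flowRemainsCloseAlways}, all hypotheses of the theorem are met in the intended application.
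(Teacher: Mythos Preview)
Your proposal is a correct and standard proof of the {\L}ojasiewicz convergence theorem via the gradient inequality and the finite-length (Kurdyka--{\L}ojasiewicz) argument. However, the paper does not prove this statement at all: Theorem~\ref{thm:lojaThm} is stated as a classical result and attributed to \cite{lojasiewicz1982trajectoires,kurdyka2000proof}, then simply applied. So there is no ``paper's own proof'' to compare against; you have supplied a full argument where the paper merely quotes the literature.

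One small remark on your write-up: the trapping argument you sketch is the right one, but be careful with the claim that ``$\nabla h'(z(t))\to 0$ along a subsequence and hence on all of $\Omega$.'' The correct logic is that every point of $\Omega$ is a subsequential limit, and along any subsequence $t_k\to\infty$ with $z(t_k)\to z^*$ one has $\nabla h'(z^*)=0$ by continuity of $\nabla h'$ together with the fact that $\int_0^\infty \|\nabla h'(z(t))\|^2\,\der t<\infty$ forces $\liminf_{t\to\infty}\|\nabla h'(z(t))\|=0$; alternatively, once you have established convergence $z(t)\to z^*$, the FOSP conclusion is immediate from continuity and the integrability of $\|\dot z\|^2$. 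Also, your last sentence about $h'$ being $h_\gamma$ is slightly off: in the paper $h'$ is the analytic continuation of $h_\gamma$ to all of $\R^{d\times p}$, precisely so that Theorem~\ref{thm:lojaThm} applies with an analytic function defined on the whole space rather than just a neighborhood.
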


To apply Theorem~\ref{thm:lojaThm}, we proceed as follows. 
When $\rho_0>0$ is sufficiently small,  recall {from Lemma~\ref{lem:derhGamma}} that~$h_\g$ is an analytic function on the set $\{U:\dist(U,\M_b)\le \rho_0/2\}$. Let $h':\R^{d\times p}\rightarrow\R$ be the analytic continuation  of $h_\g$ from the neighborhood  $\{U:\dist(U,\M_b)\le \rho_0/2\}$ to $\R^{d\times p}$. 

Recall also from~\eqref{eq:manifold} and~\eqref{eq:flowRemainsCloseAlways}  that~\eqref{eq:gradFlowH} is bounded and solves $\dot{U}(t)= - \nabla h'(U(t))$. Therefore, by Theorem~\ref{thm:lojaThm},~\eqref{eq:gradFlowH} converges to an FOSP  of~$h'$, denoted by $\ol{U}$. To reiterate, $\ol{U}$ is both the limit point of~\eqref{eq:gradFlowH} and an
FOSP of~$h'$.

By~\eqref{eq:flowRemainsCloseAlways} {and continuity of~$\dist$, the limit point}~$\ol{U}$ {also} satisfies~$\dist(\ol{U},\M_b)\le \rho_0/2$. By construction of~$h'$ {as the analytic continuation of~$h_\gamma$},  we then {observe} that~$\ol{U}$ is also an FOSP of~$h_\g$. {To summarize, we have proved so far that}~\eqref{eq:gradFlowH} converges to an FOSP  of $h_\g$, which we have denoted by $\ol{U}$.

{Moreover,} by Theorem~3 of~\cite{panageas2016gradient}, the FOSP~$\ol{U}$ of~$h_\g$ is  almost surely also an SOSP of~$h_\g$ {\c(rather than just an FOSP)}. 
{After recalling} Proposition~\ref{prop:lyapunov} {about the relationship between~$h_\gamma$ and~\eqref{eq:bm}}, it follows that~$\ol{U}$ is also an SOSP of~\eqref{eq:bm}, provided that~$\g$ is sufficiently large.

Suppose lastly that the initialization~$U_0$  of~\eqref{eq:gradFlowH} {is rank-deficient, i.e., we have}~$\rank(U_0)<p$.
Then, using Lemma~\ref{lem:rankInvariance} {about the rank along the trajectory} and {also using} the fact that~$\{U: \rank(U)\le \rank(U_0)\}$ is a closed set, we {find} that the limit point~$\ol{U}$ of~\eqref{eq:gradFlowH} {also} satisfies 
\begin{align}
    \rank(\ol{U}) \le \rank(U_0) < p.
    \label{eq:rankInvarianceSOSP}
\end{align}
{We have established that the limit point $\ol{U}$ of \eqref{eq:gradFlowH} is almost surely a rank-deficient SOSP of~\eqref{eq:bm}. }

{Finally,} by combining Proposition~\ref{lem:noSpurious} and~\eqref{eq:rankInvarianceSOSP}, we conclude that the SOSP~$\ol{U}$ of~\eqref{eq:bm} is almost surely a global minimizer of~\eqref{eq:bm}. The  main result below summarizes our findings and achieves Target~\ref{o:target}.   

\begin{thm}[{\sc Main result}]\label{thm:main0} 
Suppose that Assumption~\ref{assumption:key} holds. Suppose also that $\xi$ in~\eqref{eq:bm} and $\g$ in~\eqref{eq:gradFlowH} are sufficiently large. 
 Suppose lastly that the initialization  $U_0\in \R^{d\times p}$ of~\eqref{eq:gradFlowH} is rank-deficient and  sufficiently close to~$\M_b$, see~\eqref{eq:manifold}.

Then~\eqref{eq:gradFlowH}   almost surely converges to a global minimizer $\ol{U}$ of \eqref{eq:bm}. Moreover,~$\ol{U}\,\ol{U}^\top$ is a global minimizer of \eqref{eq:sdp}. {Above, the notion of distance to~$\M_b$ was made precise in Definition~\ref{defn:neighb} and Remark~\ref{rem:suffClose1}.}
\end{thm}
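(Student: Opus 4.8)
The plan is to assemble Theorem~\ref{thm:main0} directly from the four ingredients that the excerpt has already laid out: the rank-monotonicity of the flow (Lemma~\ref{lem:rankInvariance}), the containment of the flow in a capture neighborhood (Lemma~\ref{lem:flowNearby}), the {\L}ojasiewicz convergence theorem (Theorem~\ref{thm:lojaThm}) together with the generic escape of strict saddles (Theorem~3 of~\cite{panageas2016gradient}), and the fact that $h_\g$ is a merit function for \eqref{eq:bm} (Proposition~\ref{prop:lyapunov}), which in turn is fed to the global-optimality characterization (Proposition~\ref{lem:noSpurious}). The skeleton is exactly the displayed ``proof sketch'' list; the work is in checking that the quantifiers on the various ``sufficiently large $\xi,\g$'' and ``sufficiently small $\rho_0$'' can all be met \emph{simultaneously}, and that the analytic-continuation step legitimately lets us invoke {\L}ojasiewicz on all of $\R^{d\times p}$.

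\textbf{Step 1 (setup of the capture neighborhood).} First I would fix $\rho_0>0$ small enough that (a) $h_\g$ is analytic on $\{U:\dist(U,\M_b)\le\rho_0\}$ by Lemma~\ref{lem:derhGamma}, (b) $\lambda(U,U)$ is analytic there by Lemma~\ref{lem:derLambda}, (c) $\M_b$ is a submanifold of codimension $m$ by Assumption~\ref{assumption:key}, and (d) the hypotheses of Lemmas~\ref{lem:rankInvariance} and~\ref{lem:flowNearby} are in force. Then choose $\xi$ and $\g$ large enough that Lemma~\ref{lem:flowNearby} and Proposition~\ref{prop:lyapunov} both apply on this neighborhood. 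Taking $U_0$ rank-deficient with $\dist(U_0,\M_b)$ smaller than the threshold of Lemma~\ref{lem:flowNearby}, I conclude — exactly as in the paragraph following Lemma~\ref{lem:flowNearby} — that $\dist(U(t),\M_b)\le\rho_0/2$ for all $t\ge 0$, i.e.\ \eqref{eq:flowRemainsCloseAlways} holds and the hypothetical escape time $\tau$ of Lemma~\ref{lem:rankInvariance} is in fact $+\infty$.

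\textbf{Step 2 (convergence and second-order-ness of the limit).} Let $h'$ be the analytic continuation of $h_\g|_{\{U:\dist(U,\M_b)\le\rho_0/2\}}$ to $\R^{d\times p}$. Since \eqref{eq:gradFlowH} stays in that neighborhood it solves $\dot U=-\nabla h'(U)$ and is bounded (the neighborhood of $\M_b$ is bounded because $\M_b$ is bounded by the spectral constraint $\|U\|<\xi$), so Theorem~\ref{thm:lojaThm} gives convergence $U(t)\to\ol U$ with $\nabla h'(\ol U)=0$; by continuity $\dist(\ol U,\M_b)\le\rho_0/2$, hence $\ol U$ is an FOSP of $h_\g$ itself. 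Applying Theorem~3 of~\cite{panageas2016gradient} (using that $\nabla h'$ is a $C^1$ — indeed analytic — vector field and that $\{U_0:$ gradient flow converges to a strict saddle$\}$ has measure zero), $\ol U$ is almost surely an SOSP of $h_\g$. Proposition~\ref{prop:lyapunov}(ii) then upgrades this to: $\ol U$ is an SOSP of \eqref{eq:bm}, for $\g$ large. Meanwhile Lemma~\ref{lem:rankInvariance} with $\tau=\infty$, plus closedness of $\{U:\rank(U)\le\rank(U_0)\}$, yields $\rank(\ol U)\le\rank(U_0)<p$, i.e.\ \eqref{eq:rankInvarianceSOSP}. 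Finally Proposition~\ref{lem:noSpurious} applied to the rank-deficient SOSP $\ol U$ (with $\|\ol U\|<\xi$, which holds since $\ol U\in\M_b$) shows $\ol U$ is a global minimizer of \eqref{eq:bm} and $\ol U\,\ol U^\top$ minimizes \eqref{eq:sdp}. Assembling, \eqref{eq:gradFlowH} converges almost surely to a global minimizer, which is the claim.

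\textbf{Main obstacle.} The genuinely delicate point is not any single cited result but the \emph{compatibility of the thresholds}: Lemma~\ref{lem:flowNearby} wants $\g$ large and $\rho_0$ small, Proposition~\ref{prop:lyapunov} wants $\g$ large relative to geometric data of $\M_b$ on a $\rho_0$-neighborhood, and Lemma~\ref{lem:derhGamma}/Lemma~\ref{lem:derLambda} want $\rho_0$ small for analyticity — one must check that ``$\rho_0$ small, then $\g$ large'' (in that order) makes every invocation legal on the \emph{same} neighborhood, so that the capture argument does not become circular. A secondary subtlety is the measure-zero/``almost surely'' bookkeeping: the randomness is over $U_0$, and one must ensure the bad set of initializations (those flowing to a strict saddle of $h'$) is measurable and null while \emph{also} intersecting it with the rank-deficient, sufficiently-feasible initialization region without that intersection collapsing to measure zero within the relevant (lower-dimensional) rank-deficient stratum — here one relies on $\{U:\rank(U)<p\}$ being a positive-dimensional manifold on which the Panageas--Piliouras argument still applies to the restricted flow. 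Modulo those checks, the proof is the bulleted chain above.
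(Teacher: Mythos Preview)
Your proposal is correct and follows essentially the same route as the paper's own argument: capture neighborhood via Lemmas~\ref{lem:rankInvariance} and~\ref{lem:flowNearby}, analytic continuation plus {\L}ojasiewicz for convergence, Panageas--Piliouras for almost-sure second-orderness, then Proposition~\ref{prop:lyapunov} and Proposition~\ref{lem:noSpurious} to finish. The two concerns you flag as the ``main obstacle'' --- the order in which $\rho_0$, $\xi$, $\g$ are chosen, and the measure-theoretic meaning of ``almost surely'' when $U_0$ is constrained to the lower-dimensional rank-deficient stratum --- are in fact points the paper itself leaves implicit, so your treatment is if anything more careful than the original.
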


Theorem~\ref{thm:main0} is a {theoretical (rather than practical)} recipe for successful over-parametrized matrix factorization. In particular, note that the operator~$\A$ in~\eqref{eq:measOp} is not required  {above} to satisfy the RIP, {which dominates the literature of matrix sensing~\cite{davenport2016overview}.

We also note that the existence of the initialization prescribed in Theorem~\ref{thm:main0} can be ensured under the same mild conditions that were listed after Assumption~\ref{assumption:operator}. In the remainder of this section, we justify the assumptions made in Theorem~\ref{thm:main0}.}

\begin{remark}[{\sc Assumption~\ref{assumption:key} {is minimal}}]\label{rem:licqNeeded}
 { Assumption~\ref{assumption:key} corresponds to the standard constraint qualifications for~\eqref{eq:bm}.
 More specifically,  Assumption~\ref{assumption:key} corresponds to the weakest sufficient conditions under which the KKT  conditions~\cite{nocedal2006numerical,ruszczynski2011nonlinear} are necessary for 
global optimality in~\eqref{eq:bm}, similar to~\cite[Section 5]{boumal2020deterministic} or~\cite{eftekhari2020implicit}.
 
Without Assumption~\ref{assumption:key},  
in general there cannot be be any hope of efficiently  finding a matrix~$U$ that satisfies the constraints~$\A(UU^\top)=b$ of~\eqref{eq:bm}. 

That is, without Assumption~\ref{assumption:key}, the feasibility gap~$G$ in~\eqref{eq:fgDefn} is \emph{not} necessarily dominated by its gradient $(\nabla G(U)=0 \not\Rightarrow G(U)=0)$. In {\c this scenario}, {\c a first-order optimization algorithm  cannot in general} find a feasible matrix~$U$ (a matrix $U$ that satisfies $G(U)=0$). Such peculiarities are not uncommon in nonconvex optimization~\cite{murty1985some}.

From this perspective, Assumption~\ref{assumption:key} is minimal in order to achieve Target~\ref{o:target}.

Even though Assumption~\ref{assumption:key} has several precedents within the nonconvex optimization literature~\cite{eftekhari2020implicit,boumal2016non,sahin2019inexact,bertsekas2014constrained}, we note that verifying this assumption is often difficult in practice. 
In this sense, Theorem~\ref{thm:main0} should be regarded as a theoretical result that sheds  light, for the first time, on the nonconvex geometry of over-parametrized matrix factorization.}
\end{remark}

{\begin{remark}[{\sc Initialization}]\label{rem:closeNeeded} 

{In Theorem~\ref{thm:main0}, we cannot obtain global guarantees because the feasible set of~\eqref{eq:bm} may contain spurious stationary points that can trap the gradient flow with an arbitrary initialization, see Figure~\ref{fig:vis}. 

It is then necessary to restrict the initialization in some way, e.g., the initilization near the feasible set in Theorem~\ref{thm:main0}. 

Formally, Theorem~\ref{thm:main0} is a ``capture theorem'', common in  nonconvex optimization literature~\cite{eftekhari2020implicit,boumal2016non,sahin2019inexact}, which predicates on an initialization within a specific ``capture neighborhood'' of the feasibility problem~$\min_U G(U)$, see~\eqref{eq:fgDefn}. 

In Theorem~\ref{thm:main0}, this capture neighborhood coincides with a sufficiently small neighborhood of the feasible set~$\M_b$ of~\eqref{eq:bm}, i.e., Theorem~\ref{thm:main0} applies only when~\eqref{eq:gradFlowH} is initialized near~$\M_b$. 

We do not  provide a provable scheme for finding a sufficiently feasible initialization for~\eqref{eq:gradFlowH}. In that sense, Theorem~\ref{thm:main0} should not be viewed as a practical initialization scheme for~\eqref{eq:gradFlowH} but rather a theoretical result about the nonconvex geometry of~\eqref{eq:bm}.

Nevertheless, 
as an important practical remark, we later empirically observe  that a random 
initialization~$U_0$ is often a good choice that avoids the worst-case scenario in the right panel of Figure~\ref{fig:vis}.
}

{
\begin{remark}[{\sc Local refinement results}]
Capture theorems, including Theorem~\ref{thm:main0}, are fundamentally different from  local refinement results that appear within the signal processing literature~\cite[Chapter 5]{chi2019nonconvex}. 

Indeed, note that the local refinement results rely on an initialization within a  small neighborhood of an isolated global minimizer, in which the target function is locally strongly convex. 

In contrast, even though Theorem~\eqref{thm:main0} requires a sufficiently feasible initialization,  this initialization might be \emph{far} from any global minimizer of~\eqref{eq:bm}.

Moreover, a small neighborhood of the feasible set contains \emph{all} spurious stationary points of~\eqref{eq:bm}, whereas a small neighborhood of a global minimizer will contain no other stationary points, by design.

In other words,  even though Theorem~\ref{thm:main0} requires a sufficiently feasible initialization,~\eqref{eq:gradFlowH} might have to travel far within the capture neighborhood \emph{and} avoid the spurious stationary points,
before eventually reaching a global minimizer.    
\end{remark}
}

\end{remark}}

\begin{remark}[{\sc Sufficiently small\,/\,close\,/\,large}]\label{rem:suffSmall} 
Adding to the earlier Remark~\ref{rem:suffClose1}, we note that the requirements in Theorem~\ref{thm:main0} on $\xi,\gamma,U_0$ involve certain geometric attributes of $\M_b$ in~\eqref{eq:manifold} which are difficult to estimate.

Even though the requirements on the initialization $U_0$ are specified precisely in the proofs, we chose not to  present them in the body of the paper because of their little added value and to avoid any unnecessary clutter.

\end{remark}


\section{{Discretization}}\label{sec:compAspc}


It is {\c not difficult to verify} that~\eqref{eq:gradFlowH} converges   at the rate of $1/t$. {Its limit point}  is almost surely a global minimizer of~\eqref{eq:bm}, by virtue of Theorem~\ref{thm:main0}.

The literature {\c often focuses} on  flows rather than {\c their} discretization, for the sake of simplicity and insight~\cite{gunasekar2017implicit,eftekhari2020implicit,eftekhari2020training,arora2019implicit}.
{\c Nevertheless,} {discretization of~\eqref{eq:gradFlowH} is an important computational consideration, which we now discuss {\c in this section}. 

We will not pursue an explicit Euler (or forward) discretization of~\eqref{eq:gradFlowH}. {\c We do so to avoid} stability concerns about the derivative of~$\lambda(U,U)$, see Lemma~\ref{lem:derLambda}. }



Instead of a forward discretization, {we} consider {here} a heuristic discretization of~\eqref{eq:gradFlowH}. {\c Our heuristic discretization below is} {inspired by~\cite{gao2019parallelizable} in the context of optimization  with orthogonality constraints.}

{In short,} at iteration~$k$, we  move along the direction~$-\nabla_1 L_\g(U_k,\lambda_k)$, where~$\lambda_k := \lambda(U_k,U_k)$. {\c Recall that~$\lambda(\cdot,\cdot)$ was defined in~\eqref{eq:lambdaExplicitLem}} and~$\nabla_1 L_\g$ is the partial derivative of the augmented Lagrangian {in~\eqref{eq:al}} with respect to its first argument.

The details are presented in  Algorithm~\ref{fig:alg}. 
The convergence analysis of Algorithm~\ref{fig:alg} is an important {and nontrivial} research question that lies beyond of the scope of this theoretical paper. {Nevertheless, we next present a numerical example to showcase the potential of Algorithm~\ref{fig:alg} for solving~\eqref{eq:bm}.}


\begin{algorithm}[h!]
\textbf{Input:} Symmetric $d\times d$ matrices $\{A_i\}_{i=1}^m$ and the corresponding operator $\A$ in~\eqref{eq:measOp}, vector $b\in \R^m$, integer $p$ such that $pd\ge m$, initialization $U_0\in \R^{d\times p}$, positive penalty weight $\g$  and positive step sizes $\{\eta_k\}_k$.

\vspace{10pt}
\noindent Set $k=0$. Until convergence, repeat
\begin{enumerate}

\item Update the dual variables as $\lambda_k := \lambda(U_k,U_k)$, see~\eqref{eq:lambdaExplicitLem}.

\item Update the primal variables as 
\begin{align*}
U_{k+1} & = (1-\eta_k) U_k \nonumber\\
& \,\, + \sum_{i=1}^m \eta_k \left( \lambda_{k,i} - \frac{\g}{2} (\langle A_i, U_k U_k^\top \rangle - b_i) \right) A_i U_k, 
\end{align*} 
where $\lambda_{k,i}$ and $b_i$ are the $i^{\text{th}}$ entries of the vectors $\lambda_k$ and $b$, respectively. 
\item $k\leftarrow k+1$

\end{enumerate}

\caption{
{Discretization of~\eqref{eq:gradFlowH}}
\label{fig:alg}}
\end{algorithm}

\section{Numerical Example}\label{sec:numerics}

This section presents a {small} numerical example 
 {\c that shows} the potential of Algorithm~\ref{fig:alg} for over-parametrized matrix factorization. A comprehensive numerical study remains as  a future research target, alongside developing a convergence theory for Algorithm~\ref{fig:alg}. 

{Recall the setup of~\eqref{eq:bm}.
In our {\c numerical} example, we set~$d=15$,~$m=30$,~$p=\lceil \sqrt{2m}\rceil$,~$\xi\gg 1$. {invoke} the Pataki's lemma~\cite{pataki1998rank}\cite[Theorem 6.1]{polik2007survey} {\c to verify} that Assumption~\ref{assumption:operator} is fulfilled. (In particular, we are {\c indeed} in the over-parametrized regime, see Assumption~\ref{assumption:operator}.\ref{assumption:badRegime}.) 

{\c We also} choose the linear operator~$\A:\R^{d\times d}\rightarrow\R^m$ and the vector~$b\in \R^m$ both randomly. {\c More specifically, the upper triangle entries of every matrix $A_i\in \R^{d\times d}$ are independently drawn from the zero-mean and unit-variance Gaussian distribution. Similarly, $b$ is a standard Gaussian random vector. Moreover,~$\{A_i\}_{i=1}^m$ and~$b$ are independent from one another.}

{\c Because Assumption~\ref{assumption:operator}.\ref{assumption:infoLevel} is fulfilled,~\eqref{eq:sdp} is a tight relaxation of~\eqref{eq:bm}. In particular,} the optimal value of~\eqref{eq:bm} coincides with {\c the optimal value} of the convex problem~\eqref{eq:sdp}. {\c As a benchmark,} we can use CVX~\cite{cvx,gb08} to solve~\eqref{eq:sdp} {\c and obtain the common optimal value of~\eqref{eq:sdp} and~\eqref{eq:bm}.}}




We {then} attempt to solve~\eqref{eq:bm} with  Algorithm~\ref{fig:alg}, where {\c we set}  the  {penalty weight and step sizes to~$\g = 100$ and~$\eta_k =  2\cdot 10^{-5}$ for every $k$. We use three different initializations for Algorithm~\ref{fig:alg}, detailed below:
\begin{enumerate}[label=(\roman*),leftmargin=.6cm]
\item A deterministic initialization, where~$U_0\in\R^{d\times p}$ is filled by zeros and ones.

\item A ``partial oracle'' initialization, where~$U_0$ contains only one correct column of a global minimizer {\c of~\eqref{eq:bm}}. The remaining entries of~$U_0$ are all set to one. {\c Here, we can obtain a global minimizer of~\eqref{eq:bm} by taking the square root of CVX's output of~\eqref{eq:sdp}.}

\item {\c $U_0$ is a standard Gaussian random matrix.}

\end{enumerate}
In all cases, we then normalize~$U_0$ to ensure that~$\|U_0\|_{\F}=3$. This last step is {\c for} convenience and allows us to use the same step size for all three initializations. }  

Figure~\ref{fig:1} shows the feasibility gap and the target value of~\eqref{eq:bm} {across the iterations of Algorithm~\ref{fig:alg}, using the above three  initializations. For comparison, the optimal value of problem~\eqref{eq:bm} is shown with a dashed line. The MATLAB code will be made available with the paper.

Algorithm~\ref{fig:alg} with both the deterministic and partial oracle initializations converges to stationary points of~$h_\gamma$ but neither of these two limit points is  feasible for~\eqref{eq:bm}. 

{\c That is, Algorithm~\ref{fig:alg} fails for both initializations to produce an output that satisfies the constraints in~\eqref{eq:bm}.} {\c In both cases, note  that} the output of Algorithm~\ref{fig:alg} is not a  spurious stationary point of~\eqref{eq:bm}. 

The failure of Algorithm~\ref{fig:alg} with these two initializations hints at the complex landscape of the merit function~$h_\gamma$ and, in turn, the difficulty of solving~\eqref{eq:bm} in the over-parametrized regime. 
}

{{\c Nevertheless}, Algorithm~\ref{fig:alg} with a generic initialization successfully solves~\eqref{eq:bm} to global optimality in Figure~\ref{fig:1}. {\c Remarkably, our experience was} that a generic initialization always avoids the worst-case scenarios, such as the right panel of Figure~\ref{fig:vis} or the first two initializations in Figure~\ref{fig:1}. This observation is briefly discussed in the next section.}

It is also worth noting that we found it helpful in our simulations to stabilize Algorithm~\ref{fig:alg} by replacing $K(U_k)$ in~\eqref{eq:lambdaExplicitLem} by $K(U_k)+10^{-9}I_m$, where $I_m$ is the identity matrix. 

Lastly, note that it is difficult to verify the manifold requirement for~$\M_b$ or to numerically identify its capture neighborhood.
In this sense, Theorem~\ref{thm:main0} should be regarded as a theoretical contribution that sheds light, for the first time, on the nonconvex geometry of~\eqref{eq:bm} in the over-parametrized regime, rather than a practical initialization scheme for Algorithm~\ref{fig:alg}. In practice, a generic initialization seems to be an excellent choice for Algorithm~\ref{fig:alg}.

\begin{figure}
\begin{center}
    \includegraphics[width=8cm,height=6cm]{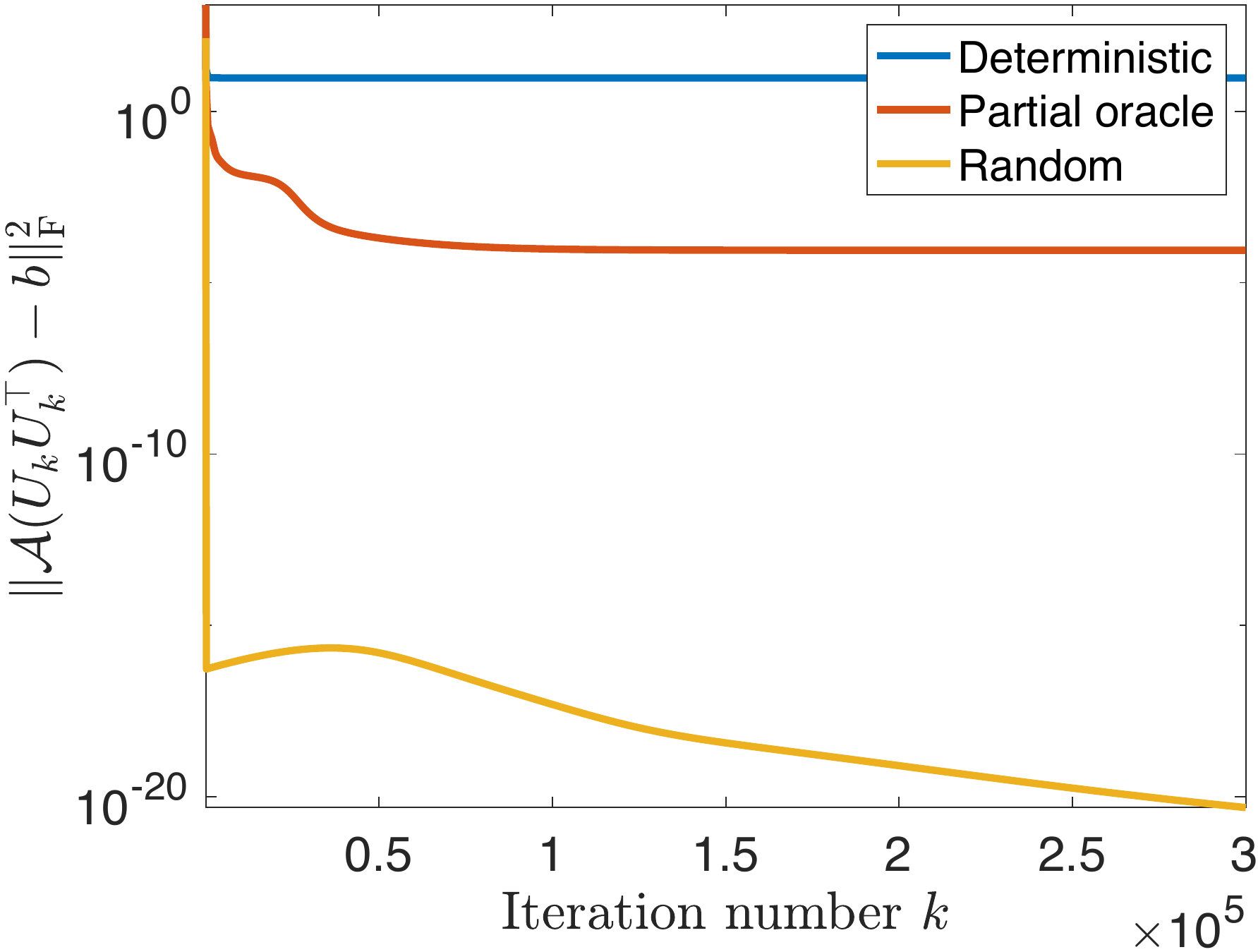}
\vspace{10pt}    

    \includegraphics[width=8cm,height=6cm]{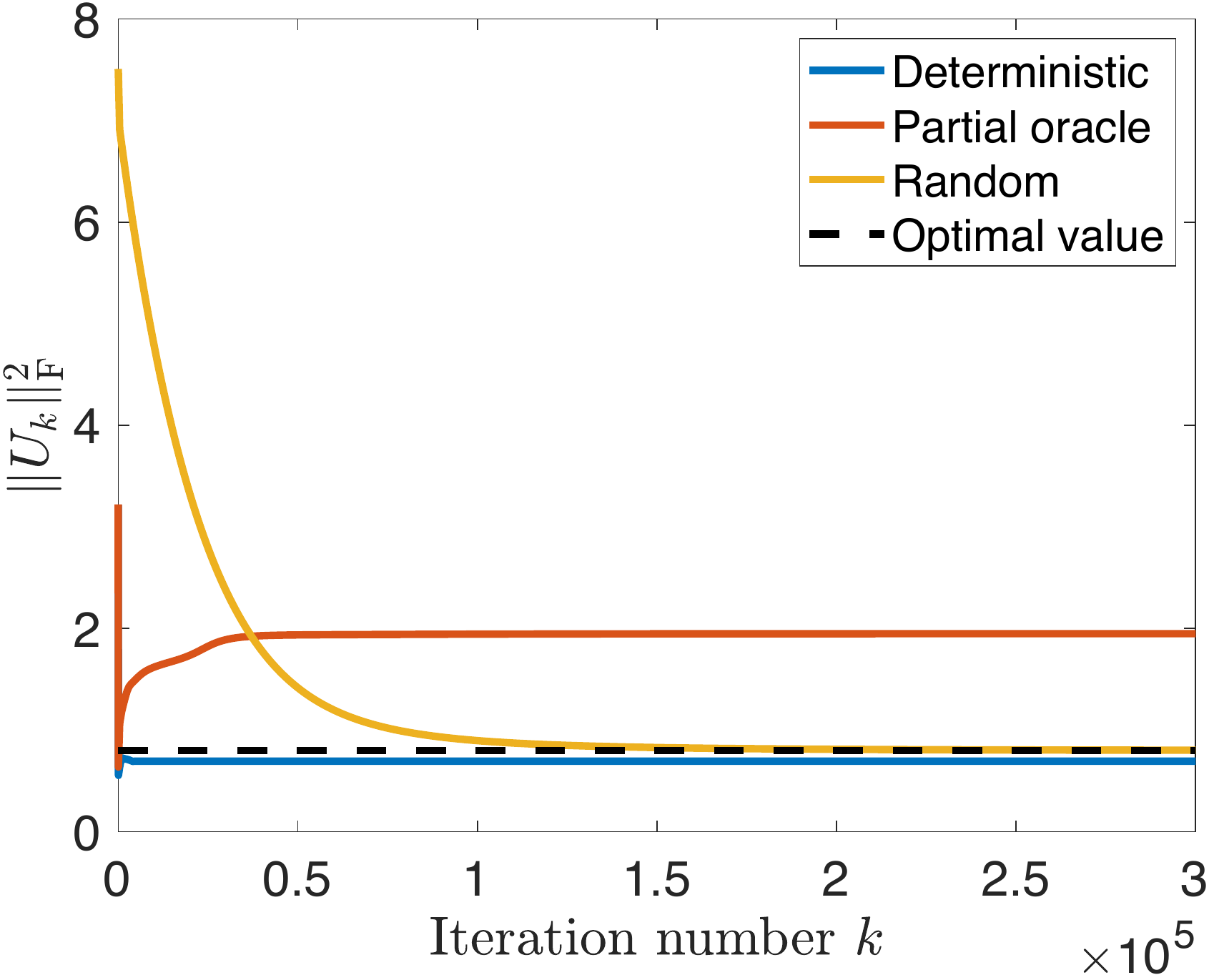}
    \caption{{For the numerical example of Section~\ref{sec:numerics}, this figure shows the feasibility gap (top panel) and the target value (bottom panel) of~\eqref{eq:bm} across the iterations of Algorithm~\ref{fig:alg}, using three different initializations.}} 
    \label{fig:1}
\end{center}
\end{figure}

    

\section{{Final Thoughts}}

This work  raises a few intriguing questions. First, as mentioned earlier, a stable and provable discretization of~\eqref{eq:gradFlowH} is an interesting and nontrivial future research question. {In particular, the convergence analysis of Algorithm~\ref{fig:alg} remains an open problem.}

{Second, in our numerical examples, recall that  we came across both spurious stationary points of~\eqref{eq:bm} and infeasible stationary points of its merit function~$h_\gamma$, see the right panel of Figures~\ref{fig:vis} and~\ref{fig:1}, respectively. {\c These observations suggest that~\eqref{eq:bm} is a difficult problem to solve in the over-parametrized regime.}

{\c At the same time,} what explains the surprising success of Algorithm~\ref{fig:alg} {\c when the linear operator $\A$ and the
initialization are both generic}? Answering this interesting question might require new technical tools beyond our toolbox {\c in this paper}.} 




\appendices
\section{Proof of Lemma~\ref{lem:flowNearby}}

An important ingredient of the proof is the following technical lemma,
which states that the feasibility gap is non-increasing along~\eqref{eq:gradFlowH}. 
\begin{lem}[{\sc Flow remains nearly feasible}]\label{lem:feasAlongFlow}
Suppose that the same assumptions made in Lemma~\ref{lem:flowNearby} are fulfilled. Then it holds that 
\begin{align}
    \|g(U_t)\|_2 \le \|g(U_0)\|_2, \qquad \text{if } t\in[0, \tau],
    \label{eq:uppBndGLem}
\end{align}
where $g$ was defined in~\eqref{eq:fgDefn}. 
\end{lem}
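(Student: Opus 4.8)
The plan is to show that the feasibility gap $\frac{1}{2}\|g(U(t))\|_2^2$ is non-increasing along~\eqref{eq:gradFlowH}, from which~\eqref{eq:uppBndGLem} follows immediately by monotonicity. To this end, I would compute the time derivative of the feasibility gap along the flow. Writing $\phi(t) := \frac{1}{2}\|g(U(t))\|_2^2 = G(U(t))$, the chain rule gives
\begin{align}
\dot{\phi}(t) = \l\langle \nabla G(U(t)),\, \dot{U}(t) \r\rangle = -\l\langle \nabla G(U(t)),\, \nabla h_\g(U(t)) \r\rangle,
\label{eq:phiDotPlan}
\end{align}
using~\eqref{eq:gradFlowH}. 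So the entire task reduces to showing that $\l\langle \nabla G(U), \nabla h_\g(U)\r\rangle \ge 0$ for every $U$ in the $\rho_0$-neighborhood of $\M_b$, at least once $\g$ is large enough.

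The key step is therefore to obtain a usable lower bound on $\l\langle \nabla G(U), \nabla h_\g(U)\r\rangle$. I would start from the explicit formula~\eqref{eq:DerhGammaExplicit} for $\nabla h_\g(U)$, which has three pieces: the ``manifold gradient'' term $(I_d - \A^*(\lambda(U,U)))U$, the ``feasibility'' term $\frac{\g}{2}\A^*(\A(UU^\top)-b)U = \g\,\nabla G(U)$ up to the obvious identification (since $\nabla G(U) = \A^*(g(U))U$ with $g$ as in~\eqref{eq:fgDefn}), and the ``multiplier-derivative'' term $-\frac{1}{2}(\Der\lambda(U,U))^*[\A(UU^\top)-b]$. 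Pairing with $\nabla G(U)$: the feasibility term contributes exactly $\g\|\nabla G(U)\|_\F^2 \ge 0$, which is the dominant, good term. The manifold-gradient term and the multiplier-derivative term each contribute inner products with $\nabla G(U)$ that I would bound in absolute value. Here I expect to use that, on a sufficiently small neighborhood of $\M_b$, the quantities $\lambda(U,U)$, $\Der\lambda(U,U)$, and $U$ are all bounded (by Lemma~\ref{lem:derLambda}, analyticity on a compact-closure neighborhood, plus $\|U\|<\xi$), so that each of these ``bad'' cross terms is bounded by $C\|\nabla G(U)\|_\F \cdot \|g(U)\|_2$ for some constant $C$ depending on geometric attributes of $\M_b$ and on $\xi$ but not on $\g$. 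The crucial remaining point is a Łojasiewicz-type / constraint-qualification inequality relating $\|\nabla G(U)\|_\F$ to $\|g(U)\|_2$: under Assumption~\ref{assumption:key}, $\Der g(U)$ has full rank $m$ on the neighborhood, hence its smallest singular value is bounded below by some $\s_{\min}>0$ there, and since $\nabla G(U) = (\Der g(U))^*[g(U)]$ this yields $\|\nabla G(U)\|_\F \ge \s_{\min}\|g(U)\|_2$. Combining, $\l\langle \nabla G(U),\nabla h_\g(U)\r\rangle \ge \g\|\nabla G(U)\|_\F^2 - C\|\nabla G(U)\|_\F\|g(U)\|_2 \ge \|\nabla G(U)\|_\F\big(\g\s_{\min}\|g(U)\|_2 - C\|g(U)\|_2\big) = (\g\s_{\min}-C)\|\nabla G(U)\|_\F\|g(U)\|_2 \ge 0$ as soon as $\g \ge C/\s_{\min}$, which is the ``sufficiently large $\g$'' hypothesis.

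With $\dot\phi(t)\le 0$ on $[0,\tau]$ established, we get $\|g(U(t))\|_2 \le \|g(U(0))\|_2 = \|g(U_0)\|_2$ for all $t\in[0,\tau]$, which is~\eqref{eq:uppBndGLem}. The main obstacle I anticipate is the bookkeeping in the second paragraph: being careful that all the constants bounding $\lambda(U,U)$, $\Der\lambda(U,U)$, $\|U\|$, and $\s_{\min}^{-1}$ are uniform over the whole $\rho_0$-neighborhood (this is where ``$\rho_0$ sufficiently small'' and ``$\xi$ sufficiently large'' get used), and that none of them secretly depends on $\g$ — otherwise the final comparison $\g\s_{\min} \ge C$ would be circular. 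A clean way to organize this is to first fix $\rho_0$ and $\xi$ so that the neighborhood's closure is compact and sits inside the region where Lemma~\ref{lem:derLambda} applies, extract all the bounds as finite constants on that fixed set, and only then choose $\g$ large relative to those constants.
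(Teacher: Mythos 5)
Your plan follows the paper's own proof in structure: compute $\der G(U_t)/\der t = -\l\langle \nabla G(U_t),\,\nabla h_\g(U_t)\r\rangle$ by the chain rule, split $\nabla h_\g$ into the three pieces of~\eqref{eq:DerhGammaExplicit}, observe that the $\g$-piece contributes the favourable $\g\|\nabla G(U)\|_\F^2$, control the remaining cross terms, and choose $\g$ large using compactness of the $\rho_0$-neighbourhood together with the uniform lower bound on $\s_m(\Der g(\cdot))$ from Assumption~\ref{assumption:key}. Your treatment of the multiplier-derivative cross term, and the constraint-qualification inequality $\|\nabla G(U)\|_\F \ge \s_{\min}\|g(U)\|_2$, are both correct and match what the paper does.

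There is, however, a genuine gap in the one claim that carries the whole estimate: that the manifold-gradient cross term $\l\langle \nabla G(U),\,(I_d - \A^*(\lambda(U,U)))U\r\rangle$ is bounded by $C\,\|\nabla G(U)\|_\F\,\|g(U)\|_2$. Boundedness of $\lambda(U,U)$, $\Der\lambda(U,U)$ and $\|U\|$ on the neighbourhood yields only $\bigl| \l\langle \nabla G(U),\,(I_d - \A^*(\lambda(U,U)))U\r\rangle\bigr| \le C'\|\nabla G(U)\|_\F$ — there is no extra factor of $\|g(U)\|_2$, because $(I_d-\A^*(\lambda(U,U)))U$ does not shrink as $U$ approaches $\M_b$. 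With a bound merely linear in $\|\nabla G\|_\F$, the final comparison $\l\langle\nabla G,\nabla h_\g\r\rangle \ge \g\|\nabla G\|_\F^2 - C'\|\nabla G\|_\F - C\|\nabla G\|_\F\|g\|_2$ cannot be made nonnegative by any fixed $\g$ once $\|g(U)\|_2$ is small, which is exactly the regime of interest along the flow. What saves the argument is that this cross term is in fact identically zero, not merely bounded: $\nabla G(U)=(\Der g(U))^*[g(U)]$ lies in $\range\l((\Der g(U))^*\r)$, while by~\eqref{eq:lambdaDefn} and~\eqref{eq:projTangent} the matrix $(I_d-\A^*(\lambda(U,U)))U$ equals $U-(\Der g(U))^*\circ((\Der g(U))^*)^\dagger[U]$, which is the orthogonal projection of $U$ onto $\ker(\Der g(U)) = \l(\range((\Der g(U))^*)\r)^\perp$. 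Hence the two are orthogonal for every $U$ in the neighbourhood, not just on $\M_b$. This is the same orthogonality the paper invokes explicitly in the proof of Proposition~\ref{prop:lyapunovPre}; in the proof of this lemma the paper applies Cauchy--Schwarz to this term and is similarly terse about why the resulting linear-in-$\|g\|_2$ piece is ultimately harmless, but the vanishing is what actually closes the $\g$-dominance. Once you note it, the only surviving cross term is the multiplier-derivative one, which really is $O(\|\nabla G\|_\F\|g\|_2)$ as you wrote, and your bookkeeping gives $\der G(U_t)/\der t\le 0$ on $[0,\tau]$, hence~\eqref{eq:uppBndGLem}.
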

Before proving Lemma~\ref{lem:feasAlongFlow} in a later appendix, let us here complete the proof of Lemma~\ref{lem:flowNearby}. The technical challenge ahead is translating the upper bound in~\eqref{eq:uppBndGLem} into an upper bound on the distance from $U_t$ to the set $\M_b$ in~\eqref{eq:manifold}. 
We first write that 
\begin{align}
 \|g(U_0)\|_2 & \ge \|g(U_t)\|_2   \qquad \text{(see Lemma \ref{lem:feasAlongFlow})} \nonumber\\
& = \frac{1}{2}\|\A(U_t U_t^\top) - b\|_2 
  \qquad \text{(see \eqref{eq:fgDefn})},
  \label{eq:beforeRIPFlow}
\end{align}
for every $t\in[0, \tau]$. To lower bound the last norm above, the idea is to replace $b$ above with the image of a particular point in $\M_b$ under the map $U\rightarrow\A(UU^\top)$, see~\eqref{eq:manifold}. That particular point is  the projection of $U_t$ onto $\M_b$, as detailed next:
When $\xi$ is sufficiently large, let $V_t\in \M_b$ denote a projection of $U_t$ onto $\M_b$ in~\eqref{eq:manifold}. That is, when $\xi$ is sufficiently large, there exists $V_t\in \M_b$ such that 
\begin{align}
     \| U_t - V_t \|_\F & = \dist(U_t, \M_b)  \nonumber\\
     & \le  \|U_t - V\|_\F ,\qquad \text{if } V\in \M_b, 
    \label{eq:projUt}
\end{align}
where the second inequality above follows from~\eqref{eq:distDefn}. 
Recall also that the normal space of the smooth manifold~$\M_b$ was specified in~\eqref{eq:normalSpace}. In particular, note that $U_t - V_t\in \N_{V_t}\M_b$ by~\eqref{eq:projUt}. Equivalently, there exists a vector $\a_t\in \R^m$ such that 
\begin{align}
    U_t 
    & = V_t + (\Der g(V_t))^*[\a_t]. 
    \qquad \text{(see \eqref{eq:normalSpace}, \eqref{eq:projUt})}
    \label{eq:diffNormal}
\end{align}
It follows that 
\begin{align}
    \dist(U_t,\M_b) & =  \|U_t - V_t\|_\F   \qquad \text{(see \eqref{eq:projUt})} \nonumber\\
    & = \| (\Der g(V_t))^*[\a_t]\|_\F  \qquad \text{(see \eqref{eq:diffNormal})} \nonumber\\
    & \le \| (\Der g(V_t))^*\| \cdot \|\a_t\|_2 \nonumber\\
    & =  \|\Der g(V_t)\| \cdot \|\a_t\|_2   \nonumber\\
    & \le \xi  \|\A\| \cdot \|\a_t\|_2 .
    \label{eq:distToAlpha}
\end{align}
The last line above follows from the observation that 
\begin{align}
    \|\Der g(V_t)\|   
    & \le \|V_t\| \|\A\| 
    \qquad \text{(see \eqref{eq:defnC})}\nonumber\\
    & \le \xi  \|\A\|.
    \qquad (\text{\eqref{eq:manifold} and }V_t\in \M_b)
    \label{eq:bndGVTNorm}
\end{align}
Under  Assumption~\ref{assumption:key},
we can also  write a converse for~\eqref{eq:distToAlpha}. That is,   
\begin{align}
    \dist(U_t, \M_b) & = \|U_t -V_t\|_\F 
    \qquad \text{(see \eqref{eq:projUt})} \nonumber\\
    & = \| (\Der g(V_t))^*[\a_t]\|_2 
    \qquad \text{(see \eqref{eq:diffNormal})}
    \nonumber\\
    & \ge \s_m(\Der g(V_t))\cdot  \|\a_t\|_2  
    \qquad (m \le pd)
    \nonumber\\    
    & \ge \s_{m}(\M_b) \cdot \| \a_t\|_2 .
    \qquad (V_t\in \M_b) 
    \label{eq:converseToDistToAlpha}
\end{align}
For brevity, above we set 
\begin{align}
\s_m(\M_b) := \min\{\s_m(\Der g(U)): U\in \cl(\M_b)\} >0. 
\label{eq:sminMb}
\end{align}
Note that $\s_{m}(\M_b)$ above is positive under Assumption~\ref{assumption:key} because  $\M_b$ in~\eqref{eq:manifold} is bounded. The two inequalities in~\eqref{eq:distToAlpha} and~\eqref{eq:converseToDistToAlpha} relate $\dist(U_t,\M_b)$ to $\a_t$. These two inequalities will be useful for us later in the proof.

To continue, note also that we have $\A(V_tV_t^\top)=b$ because $V_t\in \M_b$ by construction,  see also~\eqref{eq:manifold}. Using this last observation, we now revisit~\eqref{eq:beforeRIPFlow} and write that 
\begin{align}
    & 2\| g(U_0)\|_2  \nonumber\\
    & \ge \| \A(U_t U_t^\top) - b\|_2 \qquad \text{(see \eqref{eq:beforeRIPFlow})}\nonumber\\
    & = \|\A( U_t U_t^\top - V_t V_t^\top ) \|  
    \qquad \text{(see \eqref{eq:manifold})}\nonumber\\
    & = \Big\| 2\A ( (\Der g(V_t))^*[\a_t] V_t^\top    ) \nonumber\\
    & \qquad + \A ( (\Der g(V_t))^*[\a_t] \cdot ((\Der g(V_t))^*[\a_t])^\top )\Big\|_2,
    \label{eq:beforeDist0}
\end{align}
where we used \eqref{eq:diffNormal} in the last identity above. There, we also benefited from the symmetry of~$\{A_i\}_{i=1}^m$ in~\eqref{eq:measOp}. By applying the reverse triangle inequality to the last line above, it follows from \eqref{eq:beforeDist0} that 
\begin{align}
     2 \|g(U_0)\|_2 
    & \ge 2 \| \A ( (\Der g(V_t))^*[\a_t] V_t^\top    ) \|_2  \nonumber\\
    & - \| \A ( (\Der g(V_t))^*[\a_t] \cdot ((\Der g(V_t))^*[\a_t])^\top )\|_2
    \nonumber\\
    & \ge  2 \| \A ( (\Der g(V_t))^*[\a_t] V_t^\top    ) \|_2 \nonumber\\
    & - 
    \|\A\| \|\cdot  \Der g(V_t)\|^2\cdot  \|\a_t\|^2_2  \nonumber\\
    & =  2\| ((\Der g(V_t))\circ (\Der g(V_t))^*)[\a_t]\|_2 \nonumber\\
    & - 
    \|\A\| \cdot \| \Der g(V_t)\|^2\cdot  \|\a_t\|^2_2, \qquad \text{(see \eqref{eq:defnC})}
    \label{eq:beforeDist1}
\end{align}  
where~$\circ$ denotes the composition of two operators. Recalling the fact that $V_t\in \M_b$ by construction, the last line above can be lower bounded  as
\begin{align}    
    2\|g(U_0)\|_2
    & \ge 2\s_m(\M_b)^2 \|\a_t\|_2\nonumber\\
    & \quad - \|\A\| \cdot \xi^2 \|\A\|^2 \cdot \|\a_t\|^2_2
    \quad \text{(see \eqref{eq:bndGVTNorm}, \eqref{eq:sminMb})} \nonumber\\
    & \ge \s_m(\M_b)^2 \|\a_t\|_2,
    \label{eq:beforeDist}
\end{align}
where the last line above holds if $\|\a\|_2$ is sufficiently small, i.e., the last line above holds if 
\begin{align}
    \|\a_t\|_2 \le \frac{\s_m(\M_b)^2}{ \xi^2\|\A\|^3}.
    \label{eq:beforeDist2}
\end{align}
In view of~\eqref{eq:distToAlpha} and~\eqref{eq:converseToDistToAlpha}, it follows from~\eqref{eq:beforeDist} and~\eqref{eq:beforeDist2} that 
\begin{align}
   & \dist(U_t, \M_b) \le \frac{\s_m(\M_b)^3}{\xi^2 \|\A\|^3}   \nonumber\\
   & \Longrightarrow  \dist(U_t, \M_b) \le \frac{2\xi \|\A\| \|g(U_0)\|_2}{\s_m(\M_b)^2}.
   \label{eq:relationLargeSmall}
\end{align}
Recalling the assumptions of Lemma~\ref{lem:flowNearby}, note that $\dist(U_t,\M_b) \le \rho_0$ for every~$t\in [0,\tau]$. Suppose that $\rho_0$ is sufficiently small, i.e., take 
\begin{align}
    \rho_0 \le  \frac{\s_m(\M_b)^3}{\xi^2 \|\A\|^3}.
\end{align}
Then,~\eqref{eq:relationLargeSmall} immediately implies that   
\begin{align}
    \dist(U_t, \M_b) \le \frac{2\xi \|\A\| \cdot \|g(U_0)\|_2}{\s_m(\M_b)^2},
    \qquad \text{if } t\in[0, \tau].
    \label{eq:noUppBndonG0}
\end{align}
Let us now rephrase the right-hand side of~\eqref{eq:noUppBndonG0}. Specifically, we next  upper bound $\|g(U_0)\|_2$ above by the initial distance to the manifold, i.e.,~$\dist(U_0,\M_b)$. Recall from~\eqref{eq:projUt} that $V_{0}\in \M_b$ denotes a projection of $U_0$ on $\M_b$ in~\eqref{eq:manifold}.
In particular, $V_0\in \M_b$ implies that $\A(V_0 V_0^\top) = b$ by~\eqref{eq:manifold}.
Using this last observation and~\eqref{eq:projUt}, we   bound~$\|g(U_0)\|_2$ as 
\begin{align}
    \|g(U_0)\|_2 & = \frac{1}{2} \| \A(U_0U_0^\top) - b\|_2 
    \qquad \text{(see \eqref{eq:fgDefn})}
    \nonumber\\
    & = \frac{1}{2} \| \A(U_0U_0^\top- V_0 V_0^\top ) \|_2
    \qquad \text{(see \eqref{eq:manifold})} 
    \nonumber\\
    & \le \frac{1}{2} \|\A\|\cdot  \|  U_0U_0^\top - V_0 V_0^\top\|_\F 
     \nonumber\\
    & \le \frac{1}{2}\|\A\|\cdot  (\|U_0\|+\|V_0\|) \cdot \|U_0 - V_0\|_\F \nonumber\\
    & \le \|\A\|\xi \cdot  \|U_0 - V_0\|_\F \nonumber\\
    & = \|\A\|\xi \cdot  \dist(U_0,\M_b),
    \qquad \text{(see \eqref{eq:projUt})}
    \label{eq:lowBndGU0}
\end{align}
where the second-to-last line above assumes that $\xi$ is sufficiently large, i.e.,~$\xi \ge \|U_0\|$.
The second-to-last line in~\eqref{eq:lowBndGU0} also uses the fact that  $V_0\in \M_b$ in~\eqref{eq:manifold} and, in particular,~$\|V_0\|\le \xi$. By combining~\eqref{eq:noUppBndonG0} and~\eqref{eq:lowBndGU0}, we arrive at 
\begin{align*}
    \dist(U_t, \M_b) \le \frac{2\xi^2 \|\A\|^2 \dist(U_0,\M_b) }{\s_m(\M_b)^2}, \qquad  \text{if } t \in [0,\tau],
\end{align*}
provided that $\xi$ is sufficiently large.
By setting $\dist(U_0,\M_b)$ sufficiently small, i.e., by taking
\begin{align}
    \dist(U_0,\M_b) \le \frac{ \s_m(\M_b)^2\rho_0 }{4\xi^2 \|\A\|^2},
\end{align}
we can ensure that 
\begin{align}
    \dist(U_t, \M_b) \le \rho_0/2, \qquad \text{if } t \in [0,\tau].
\end{align}
Above, $\rho_0$ was defined in Lemma~\ref{lem:rankInvariance}. This completes the proof  of  Lemma~\ref{lem:flowNearby}. 

\section{Proof of Lemma~\ref{lem:feasAlongFlow}}

Recall that $\rho_0$  denotes the radius of the neighborhood in Lemma~\ref{lem:rankInvariance}. 
Recall also from~(\ref{eq:fgDefn}),~\eqref{eq:adjO}  and~(\ref{eq:DerhGammaExplicit}) that
\begin{align}
    \nabla h_\g(U_t) & = (I-\A^*(\lambda(U_t,U_t))) U_t + \g (\Der g(U_t))^* [g(U_t)] \nonumber\\
    & - (\Der \lambda(U_t,U_t))^*[g(U_t)],
    \label{eq:DerhGammaExplicitRecall2}
\end{align}
for every $t\in [0, \tau]$. 
Recall from~\eqref{eq:fgDefn} the definition of the (scaled) feasibility gap $G:\R^{d\times p}\rightarrow\R$. 
To study the evolution of the feasibility gap along~\eqref{eq:gradFlowH}, we write that 
\begin{align}
    & \frac{\der G(U_t)}{\der t} \nonumber\\
    & = \langle \nabla G(U_t), \dot{U}_t  \rangle 
    \qquad \text{(chain rule)}
    \nonumber\\
    & = -  \langle \nabla G(U_t), \nabla h_\g(U_t)  \rangle
    \qquad \text{(see \eqref{eq:gradFlowH})} \nonumber\\
    & = - \langle (\Der g(U_t))^*[g(U_t)], \nabla h_\g(U_t) \rangle \qquad \text{(\eqref{eq:fgDefn} and chain rule)} \nonumber\\
    & = - \langle (\Der g(U_t))^*[g(U_t)], (I_d - \A^*(\lambda(U_t,U_t)) ) U_t \rangle \nonumber\\
    & - \g \| (\Der g(U_t))^* [g(U_t)]\|_{\F}^2 \nonumber\\
    & + \langle (\Der g(U_t))^*[g(U_t)], (\Der \lambda(U_t,U_t))^*[g(U_t)] \rangle 
    \qquad \text{(see \eqref{eq:DerhGammaExplicitRecall2})} \nonumber\\
    & \le  \|\Der g(U_t)\|\cdot  \|g(U_t)\|_2 \cdot \|I_d - \A^*(\lambda(U_t,U_t)) \| \cdot \|U_t\|_\F  \nonumber\\
    & - \g (\s_m (\Der g(U_t)))^2 \cdot  \|g(U_t)\|_{2}^2 \nonumber\\
    & + \| \Der g(U_t)\|  \cdot  \|\Der \lambda(U_t,U_t)\| \cdot  \|g(U_t)\|_2^2 \qquad \text{(Cauchy-Schwarz)} \nonumber\\
    & = \sqrt{2G(U_t)} \cdot \Big(  \|\Der g(U_t)\| \cdot \|I_d - \A^*(\lambda(U_t,U_t)) \| \cdot \|U_t\|_\F \nonumber\\
    & -  \g (\s_m (\Der g(U_t)))^2 \sqrt{2 G(U_t)} \nonumber\\
    &   +  \|\Der g(U_t)\|  \cdot  \|\Der \lambda(U_t,U_t)\|  \sqrt{2 G(U_t)} \Big).
    \quad \text{(see \eqref{eq:fgDefn})}
    \label{eq:evolveFeasGap}
\end{align}
To control the terms in the last identity above, we make two observations:
\begin{enumerate}[leftmargin=*]
    \item Recall that both $\cl(\M_b)$ in~\eqref{eq:manifold} and its neighborhood $\{U: \dist(U,\M_b)\le \rho_0\}$  are compact sets. Because $\rho_0<\rho$ by design, note also that $g$ in~\eqref{eq:fgDefn} and $\lambda$ are both continuously-differentiable functions on $\{U: \dist(U,\M_b)\le \rho_0\}$, see Lemma~\ref{lem:derLambda}. Here, $\rho$ is the radius of the neighborhoods both in Assumption~\ref{assumption:key} and Lemma~\ref{lem:derLambda}.

    It follows that the functions $U\rightarrow\|\Der g(U)\|$, $U\rightarrow \|I_d - \A^*(\lambda(U,U))\|$, $U\rightarrow\|U\|_\F$, and $U\rightarrow \|\Der \lambda(U,U)\|$ are all bounded on the set $\{U: \dist(U,\M_b)\le \rho_0\}$. Consequently,  the corresponding terms in the last identity of~\eqref{eq:evolveFeasGap} are bounded on the interval~$[0,\tau]$. 

\item Moreover,  Assumption~\ref{assumption:key} and $\rho_0<\rho$ together imply that the function $U\rightarrow\s_m(\Der g(U))$ is bounded away from zero on the set $\{U:\dist(U,\M_b)\le \rho_0\}$. Consequently, the corresponding term in the last identity of~\eqref{eq:evolveFeasGap} is bounded away from zero on the interval~$[0,\tau]$. 

\end{enumerate}
If $G(U_{t_0})=0$ for $t_0\in [0,\tau]$, we find from~\eqref{eq:evolveFeasGap} that $G(U_t)=0$ for every $t\in [t_0,\tau]$. That is, once feasible,~\eqref{eq:gradFlowH} remains feasible afterwards. On the other hand, in view of the above two observations, we see  that  $\der G(U_t)/\der t < 0$ for every $t\in[0, t_0)$, provided that  $\g$ is sufficiently large. We conclude that~$\der G(U_t)/\der t<0$ for every $t\in [0,\tau]$, provided that~$\gamma$ is sufficiently large.
This completes the proof of Lemma~\ref{lem:feasAlongFlow}.

\bibliographystyle{unsrt}
\bibliography{References}

\begin{thebibliography}{10}

\bibitem{davenport2016overview}
Mark~A Davenport and Justin Romberg.
\newblock An overview of low-rank matrix recovery from incomplete observations.
\newblock {\em IEEE Journal of Selected Topics in Signal Processing},
  10(4):608--622, 2016.

\bibitem{gunasekar2017implicit}
Suriya Gunasekar, Blake~E Woodworth, Srinadh Bhojanapalli, Behnam Neyshabur,
  and Nati Srebro.
\newblock Implicit regularization in matrix factorization.
\newblock In {\em Advances in Neural Information Processing Systems}, pages
  6151--6159, 2017.

\bibitem{li2018algorithmic}
Yuanzhi Li, Tengyu Ma, and Hongyang Zhang.
\newblock Algorithmic regularization in over-parameterized matrix sensing and
  neural networks with quadratic activations.
\newblock In {\em Conference On Learning Theory}, pages 2--47. PMLR, 2018.

\bibitem{eftekhari2020implicit}
Armin Eftekhari and Konstantinos Zygalakis.
\newblock Implicit regularization in matrix sensing: A geometric view leads to
  stronger results.
\newblock {\em arXiv preprint arXiv:2008.12091}, 2020.

\bibitem{burer2005local}
Samuel Burer and Renato~DC Monteiro.
\newblock Local minima and convergence in low-rank semidefinite programming.
\newblock {\em Mathematical Programming}, 103(3):427--444, 2005.

\bibitem{chi2019nonconvex}
Yuejie Chi, Yue~M Lu, and Yuxin Chen.
\newblock Nonconvex optimization meets low-rank matrix factorization: An
  overview.
\newblock {\em IEEE Transactions on Signal Processing}, 67(20):5239--5269,
  2019.

\bibitem{boumal2016non}
Nicolas Boumal, Vlad Voroninski, and Afonso Bandeira.
\newblock The non-convex burer-monteiro approach works on smooth semidefinite
  programs.
\newblock In {\em Advances in Neural Information Processing Systems}, pages
  2757--2765, 2016.

\bibitem{arora2018convergence}
Sanjeev Arora, Nadav Cohen, Noah Golowich, and Wei Hu.
\newblock A convergence analysis of gradient descent for deep linear neural
  networks.
\newblock {\em arXiv preprint arXiv:1810.02281}, 2018.

\bibitem{eftekhari2020training}
Armin Eftekhari.
\newblock Training linear neural networks: Non-local convergence and complexity
  results.
\newblock {\em arXiv preprint arXiv:2002.09852}, 2020.

\bibitem{sahin2019inexact}
Mehmet~Fatih Sahin, Ahmet Alacaoglu, Fabian Latorre, Volkan Cevher, et~al.
\newblock An inexact augmented lagrangian framework for nonconvex optimization
  with nonlinear constraints.
\newblock In {\em Advances in Neural Information Processing Systems}, pages
  13965--13977, 2019.

\bibitem{zhang2016understanding}
Chiyuan Zhang, Samy Bengio, Moritz Hardt, Benjamin Recht, and Oriol Vinyals.
\newblock Understanding deep learning requires rethinking generalization.
\newblock {\em arXiv preprint arXiv:1611.03530}, 2016.

\bibitem{belkin2018understand}
Mikhail Belkin, Siyuan Ma, and Soumik Mandal.
\newblock To understand deep learning we need to understand kernel learning.
\newblock {\em arXiv preprint arXiv:1802.01396}, 2018.

\bibitem{hastie2019surprises}
Trevor Hastie, Andrea Montanari, Saharon Rosset, and Ryan~J Tibshirani.
\newblock Surprises in high-dimensional ridgeless least squares interpolation.
\newblock {\em arXiv preprint arXiv:1903.08560}, 2019.

\bibitem{vaswani2019painless}
Sharan Vaswani, Aaron Mishkin, Issam Laradji, Mark Schmidt, Gauthier Gidel, and
  Simon Lacoste-Julien.
\newblock Painless stochastic gradient: Interpolation, line-search, and
  convergence rates.
\newblock In {\em Advances in Neural Information Processing Systems}, pages
  3732--3745, 2019.

\bibitem{candes2010matrix}
Emmanuel~J Candes and Yaniv Plan.
\newblock Matrix completion with noise.
\newblock {\em Proceedings of the IEEE}, 98(6):925--936, 2010.

\bibitem{panageas2016gradient}
Ioannis Panageas and Georgios Piliouras.
\newblock Gradient descent only converges to minimizers: Non-isolated critical
  points and invariant regions.
\newblock {\em arXiv preprint arXiv:1605.00405}, 2016.

\bibitem{jin2017escape}
Chi Jin, Rong Ge, Praneeth Netrapalli, Sham~M Kakade, and Michael~I Jordan.
\newblock How to escape saddle points efficiently.
\newblock {\em arXiv preprint arXiv:1703.00887}, 2017.

\bibitem{li2019non}
Qiuwei Li, Zhihui Zhu, and Gongguo Tang.
\newblock The non-convex geometry of low-rank matrix optimization.
\newblock {\em Information and Inference: A Journal of the IMA}, 8(1):51--96,
  2019.

\bibitem{nocedal2006numerical}
Jorge Nocedal and Stephen Wright.
\newblock {\em Numerical optimization}.
\newblock Springer Science \& Business Media, 2006.

\bibitem{pataki1998rank}
G{\'a}bor Pataki.
\newblock On the rank of extreme matrices in semidefinite programs and the
  multiplicity of optimal eigenvalues.
\newblock {\em Mathematics of operations research}, 23(2):339--358, 1998.

\bibitem{polik2007survey}
Imre P{\'o}lik and Tam{\'a}s Terlaky.
\newblock A survey of the s-lemma.
\newblock {\em SIAM review}, 49(3):371--418, 2007.

\bibitem{boumal2020deterministic}
Nicolas Boumal, Vladislav Voroninski, and Afonso~S Bandeira.
\newblock Deterministic guarantees for burer-monteiro factorizations of smooth
  semidefinite programs.
\newblock {\em Communications on Pure and Applied Mathematics}, 73(3):581--608,
  2020.

\bibitem{mesh}
Surface reconstruction from scattered points cloud (open surfaces).
\newblock
  \url{https://www.mathworks.com/matlabcentral/fileexchange/63731-surface-reconstruction-from-scattered-points-cloud-open-surfaces}.
\newblock MATLAB Central File Exchange. Retrieved January 18, 2021.

\bibitem{cvx}
Michael Grant and Stephen Boyd.
\newblock {CVX}: Matlab software for disciplined convex programming, version
  2.1.
\newblock \url{http://cvxr.com/cvx}, March 2014.

\bibitem{gb08}
Michael Grant and Stephen Boyd.
\newblock Graph implementations for nonsmooth convex programs.
\newblock In V.~Blondel, S.~Boyd, and H.~Kimura, editors, {\em Recent Advances
  in Learning and Control}, Lecture Notes in Control and Information Sciences,
  pages 95--110. Springer-Verlag Limited, 2008.
\newblock \url{http://stanford.edu/~boyd/graph_dcp.html}.

\bibitem{haeffele2015global}
Benjamin~D Haeffele and Ren{\'e} Vidal.
\newblock Global optimality in tensor factorization, deep learning, and beyond.
\newblock {\em arXiv preprint arXiv:1506.07540}, 2015.

\bibitem{geyer2020low}
Kelly Geyer, Anastasios Kyrillidis, and Amir Kalev.
\newblock Low-rank regularization and solution uniqueness in over-parameterized
  matrix sensing.
\newblock In {\em International Conference on Artificial Intelligence and
  Statistics}, pages 930--940, 2020.

\bibitem{chizat2019lazy}
Lenaic Chizat, Edouard Oyallon, and Francis Bach.
\newblock On lazy training in differentiable programming.
\newblock In {\em Advances in Neural Information Processing Systems}, pages
  2937--2947, 2019.

\bibitem{allen2019convergence}
Zeyuan Allen-Zhu, Yuanzhi Li, and Zhao Song.
\newblock A convergence theory for deep learning via over-parameterization.
\newblock In {\em International Conference on Machine Learning}, pages
  242--252. PMLR, 2019.

\bibitem{soltanolkotabi2018theoretical}
Mahdi Soltanolkotabi, Adel Javanmard, and Jason~D Lee.
\newblock Theoretical insights into the optimization landscape of
  over-parameterized shallow neural networks.
\newblock {\em IEEE Transactions on Information Theory}, 65(2):742--769, 2018.

\bibitem{du2018power}
Simon~S Du and Jason~D Lee.
\newblock On the power of over-parametrization in neural networks with
  quadratic activation.
\newblock {\em arXiv preprint arXiv:1803.01206}, 2018.

\bibitem{tanner2013normalized}
Jared Tanner and Ke~Wei.
\newblock Normalized iterative hard thresholding for matrix completion.
\newblock {\em SIAM Journal on Scientific Computing}, 35(5):S104--S125, 2013.

\bibitem{bauch2020rank}
Jonathan Bauch and Boaz Nadler.
\newblock Rank 2r iterative least squares: efficient recovery of
  ill-conditioned low rank matrices from few entries.
\newblock {\em arXiv preprint arXiv:2002.01849}, 2020.

\bibitem{oymak2020towards}
Samet Oymak and Mahdi Soltanolkotabi.
\newblock Towards moderate overparameterization: global convergence guarantees
  for training shallow neural networks.
\newblock {\em IEEE Journal on Selected Areas in Information Theory}, 2020.

\bibitem{lee2013smooth}
John~M Lee.
\newblock Smooth manifolds.
\newblock In {\em Introduction to Smooth Manifolds}, pages 1--31. Springer,
  2013.

\bibitem{boumal2020introduction}
NICOLAS BOUMAL.
\newblock An introduction to optimization on smooth manifolds.
\newblock {\em To appear}, 2020.

\bibitem{bertsekas2014constrained}
D.P. Bertsekas and W.~Rheinboldt.
\newblock {\em Constrained Optimization and Lagrange Multiplier Methods}.
\newblock Computer science and applied mathematics. Elsevier Science, 2014.

\bibitem{lojasiewicz1982trajectoires}
Stanislaw Lojasiewicz.
\newblock Sur les trajectoires du gradient d’une fonction analytique.
\newblock {\em Seminari di geometria}, 1983:115--117, 1982.

\bibitem{kurdyka2000proof}
Krzysztof Kurdyka, Tadeusz Mostowski, and Adam Parusinski.
\newblock Proof of the gradient conjecture of r. thom.
\newblock {\em Annals of Mathematics}, pages 763--792, 2000.

\bibitem{ruszczynski2011nonlinear}
Andrzej Ruszczynski.
\newblock {\em Nonlinear optimization}.
\newblock Princeton university press, 2011.

\bibitem{murty1985some}
Katta~G Murty and Santosh~N Kabadi.
\newblock Some np-complete problems in quadratic and nonlinear programming.
\newblock Technical report, 1985.

\bibitem{arora2019implicit}
Sanjeev Arora, Nadav Cohen, Wei Hu, and Yuping Luo.
\newblock Implicit regularization in deep matrix factorization.
\newblock In {\em Advances in Neural Information Processing Systems}, pages
  7413--7424, 2019.

\bibitem{gao2019parallelizable}
Bin Gao, Xin Liu, and Ya-xiang Yuan.
\newblock Parallelizable algorithms for optimization problems with
  orthogonality constraints.
\newblock {\em SIAM Journal on Scientific Computing}, 41(3):A1949--A1983, 2019.

\bibitem{2008lectures}
S.~Yakovenko and Y.~Ilyashenko.
\newblock {\em Lectures on Analytic Differential Equations}.
\newblock Graduate studies in mathematics. American Mathematical Society, 2008.

\bibitem{bunse1991numerical}
Angelika Bunse-Gerstner, Ralph Byers, Volker Mehrmann, and Nancy~K Nichols.
\newblock Numerical computation of an analytic singular value decomposition of
  a matrix valued function.
\newblock {\em Numerische Mathematik}, 60(1):1--39, 1991.

\end{thebibliography}

\newpage
\appendices

\title{Supplementary Material} 
\author{}
\date{}
\emptythanks
\maketitle

\section{Proof of Lemma \ref{lem:derLambda}}

In order to find an explicit expression for $\lambda(U,U)$ in~\eqref{eq:lambdaDefn}, we first calculate $((\Der g(U))^*)^\dagger$ as follows. Under Assumption~\ref{assumption:operator}.\ref{assumption:badRegime} and after recalling~\eqref{eq:adjO}, note that $((\Der g(U))^*)^\dagger :\R^{d\times p}\rightarrow\R^m$ is specified as
\begin{align}
    & ((\Der g(U))^*)^\dagger [\D] := (K(U))^{\dagger} \A(\D U^\top), 
     \nonumber\\
    & K(U) := [ \langle A_i U, A_j U \rangle ]_{i,j=1}^m \in \R^{m\times m}, \label{eq:defnKernel}
\end{align}
for $\D\in \R^{d\times p}$. 
We then recall the definition of $\lambda(U,U)$ in~\eqref{eq:lambdaDefn} to write that
\begin{align}
    \lambda(U,U) & =  ((\Der g(U))^*)^\dagger [U] \qquad \text{(see \eqref{eq:lambdaDefn})} \nonumber\\
    & = (K(U))^{\dagger} \A(U U^\top),\qquad \text{(see \eqref{eq:defnKernel})}
    \label{eq:lambdaExplicit}
\end{align}
which proves~\eqref{eq:lambdaExplicitLem}. 

We next prove the second claim in Lemma~\ref{lem:derLambda}. 
Recall that Assumption~\ref{assumption:key} holds and let $\rho$ denote the radius of the neighborhood specified in Assumption~\ref{assumption:key}. 
That is,
\begin{align}
    \rank(\Der g(U)) = m, \qquad \text{if } \dist(U,\M_b)
    \le \rho.
    \label{eq:rankDerGFix}
\end{align}
Equivalently, by definition of $(\Der g(U))^*$ in~\eqref{eq:adjO}, the matrices $\{A_i U\}_{i=1}^m$ are linearly independent for every $U$ such that $\dist(U,\M_b)\le  \rho$. From~\eqref{eq:defnKernel} and~\eqref{eq:rankDerGFix}, it immediately follows that 
\begin{align}
    \rank(K(U)) = m, \qquad \text{if }\dist(U,\M_b) \le \rho. \label{eq:rankKConstant}
\end{align}
Note that $K(U)$  in~\eqref{eq:lambdaExplicit} is an analytic function of $U$ in $\R^{d\times p}$. 
By~\eqref{eq:rankKConstant} and the boundedness of $\M_b$ in~\eqref{eq:manifold}, $(K(U))^{-1}$ is also an analytic function of $U$ on the set 
\begin{align}
  \M_{b,\rho}:=\{U:\dist(U,\M_b)<  \rho\}. \label{eq:neighApp}
\end{align}
In view of~\eqref{eq:lambdaExplicit}, $\lambda(U,U)$ is also an analytic function of $U$ on the set $\M_{b,\rho}$, which proves the second claim in Lemma~\ref{lem:derLambda}.

We next prove the third and final claim in Lemma~\ref{lem:derLambda}. 
To compute the derivative of $\lambda(U,U)$ with respect to $U$, we begin by computing the (total) derivatives of $K(U)$ and $(K(U))^{-1}$, see~\eqref{eq:defnKernel}. For $\D\in \R^{d\times p}$, note that the directional derivative of $K(U)$ at $U$ and along the direction $\D$ is given by
\begin{align}
    \Der K(U)[\D] = 2 \l[\l\langle A_i \D, A_j U\r\rangle  \r]_{i,j=1}^m =: 2  \wt{K}(U,\D). \label{eq:derKExplicit}
\end{align}


To compute the directional derivative of $(K(U))^{-1}$ at $U\in \M_{b,\rho}$ and along $\D\in \R^{n\times p}$,  we compute the directional derivative of both sides of the identity $K(U) \cdot (K(U))^{-1} = I_m$, along a direction $\D\in \R^{d\times p}$. That is, 
\begin{align*}
\Der K(U)[\D] \cdot (K(U))^{-1} + K(U) \cdot \Der (K(U))^{-1}[\D] = 0,
\end{align*}
which, after rearranging, yields that 
\begin{align}
    & \Der (K(U))^{-1}[\D] \nonumber\\
    & = - (K(U))^{-1} \cdot \Der K(U)[\D] \cdot (K(U))^{-1} \nonumber\\
    & = -2 (K(U))^{-1} \cdot \wt{K}(U,\D) \cdot (K(U))^{-1}.
    \quad \text{(see \eqref{eq:derKExplicit})}
            \label{eq:derKdaggerFinal}
\end{align}
Having computed in~\eqref{eq:derKdaggerFinal} the directional derivative of $ (K(U))^{-1}$ at $U\in \M_{b,\rho}$, we are now ready to compute the derivative of $\lambda(U,U)$ with respect to $U$ as follows.  
Using the definition of $\lambda(U,U)$ in~\eqref{eq:lambdaExplicit} and  for a direction $\D\in \R^{d\times p}$, the directional derivative of $\lambda(U,U)$ along $\D$ is given by 
\begin{align}
    & \Der \lambda(U,U)[\D] \label{eq:lambdaDerFinalLemma}\\ 
    & = \Der (K(U))^{-1} [\D] \cdot \A(UU^\top) + 2(K(U))^{-1} \A(\D U^\top) \nonumber\\
    & = -2 (K(U))^{-1} \wt{K}(U,\D) (K(U))^{-1} \A(UU^\top) 
    \nonumber\\
    & \qquad + 2 (K(U))^{-1} \A(\D U^\top) 
    \qquad \text{(see \eqref{eq:derKdaggerFinal})}
    \nonumber\\
    & = 2 (K(U))^{-1} \l( -\wt{K}(U,\D) (K(U))^{-1} \A(UU^\top) 
    +  \A(\D U^\top)   \r). \nonumber
\end{align}
This completes the proof of Lemma~\ref{lem:derLambda}.

\section{Proof of Proposition~\ref{prop:lyapunovPre}}
Recalling~(\ref{eq:manifold}), let us fix $U\in \R^{d\times p}$ such that $\dist(U,\M_b) \le  \rho$ and $\|U\|<\xi$, where $\rho$ is the radius of the neighborhood in Assumption~\ref{assumption:key}. Recall also~\eqref{eq:al}. For $\lambda'\in \R^m$, note that the gradient of the augmented Lagrangian with respect to its first argument is specified as 
\begin{align}
    & \nabla_1 L_\g (U,\lambda') \nonumber\\
    & = \nabla f(U) - (\Der g(U))^*[\lambda']  + \g (\Der g(U) )^*[g(U)].
    \label{eq:gradOne}
\end{align}
With the choice of $\lambda' = \lambda(U,U)$ from~\eqref{eq:lambdaDefn}, we rewrite~\eqref{eq:gradOne} as
\begin{align}
   & \nabla_1  L_\g (U,\lambda(U,U))  \nonumber\\
    & = \nabla f(U) - (\Der g(U))^*[\lambda(U,U)] + \g (\Der g(U) )^*[g(U)] 
    \nonumber\\
    & = \nabla f(U) - \l((\Der g(U))^*\circ ((\Der g(U))^*)^\dagger\r) [U]  \nonumber\\
    & \qquad + \g (\Der g(U))^*[g(U)]
    \qquad \text{(see \eqref{eq:lambdaDefn})} \nonumber\\
    & = \nabla f(U) - \l((\Der g(U))^*\circ ((\Der g(U))^*)^\dagger\r) [\nabla f(U)] \nonumber\\
    &\qquad  + \g (\Der g(U))^*[g(U)]
    \qquad \text{(see \eqref{eq:fgDefn})} \nonumber\\
    & = (\mathrm{Id}- (\Der g(U))^*\circ ((\Der g(U))^*)^\dagger ) [\nabla f(U)] \nonumber\\
    & \qquad + \g (\Der g(U))^*[g(U)],
    \label{eq:gradALOne}
\end{align}
where $\mathrm{Id}$ is the shorthand for the identity map. 
Note that the two terms in the last line above are in fact orthogonal to one another; one is in the range of the operator $(\Der g(U))^*$  and the other is orthogonal to  $\range((\Der g(U))^*)$. In particular, $\nabla_1 L_\g(U,\lambda(U,U))=0$ implies that both 
\begin{align}
    & (\mathrm{Id}- (\Der g(U))^*\circ ((\Der g(U))^*)^\dagger ) [\nabla f(U)] = 0,\nonumber\\
    & \text{and } (\Der g(U))^*[g(U)] = 0.
    \label{eq:nullSpaceNotRemoved}
\end{align}
Moreover, recall the earlier assumption that $\dist(U,\M_b)\le  \rho$, where  $\rho$ is the radius of the neighborhood of $\M_b$ in Assumption~\ref{assumption:key}. From this assumption, it follows that $\Der g(U):\R^{d\times p}\rightarrow\R^m$ is a rank-$m$ linear operator.   In particular, the operator $\Der g(U)^*$ has a trivial null-space.  This  observation allows us to simplify the second identity in~\eqref{eq:nullSpaceNotRemoved}. More specifically, we find that both 
\begin{align}
    & (\mathrm{Id}- (\Der g(U))^*\circ ((\Der g(U))^*)^\dagger ) [\nabla f(U)] = 0 \nonumber\\
    &\text{and }  g(U) = 0.
    \label{eq:preGrad}
\end{align}
Note  that $g(U)=0$ above and the earlier assumption that $\|U\|<\xi$ together imply that $U\in \M_b$, see~\eqref{eq:manifold}. In view of~\eqref{eq:projTangent} and~\eqref{eq:manifoldGrad}, we also identify the first expression above as $\nabla_{\M_b}f(U)$. We can therefore rewrite \eqref{eq:preGrad} as
\begin{align}
    \nabla_{\M_b} f(U) = 0 \text{ and } g(U)=0 \text{ and } \|U\|<\xi.
\end{align}
That is, in view of Definition~\ref{defn:fosp}, $U$ is an FOSP of \eqref{eq:bm}. This proves the first item of Proposition~\ref{prop:lyapunovPre}. 

To prove the second item in Proposition~\ref{prop:lyapunovPre}, let $U$ be an FOSP of \eqref{eq:bm}. For $\lambda'\in \R^m$, note that the Hessian of the augmented Lagrangian with respect to its first argument is the bilinear operator specified as 
\begin{align}
    \nabla^2_1 L_\g(U,\lambda')
    & = \nabla^2 f(U) - \sum_{i=1}^m (\lambda'_i - \g g_i(U)) \nabla^2 g_i (U) \nonumber\\
    &\qquad  + \g (\Der g(U))^*\circ \Der g(U) 
    \qquad \text{(see \eqref{eq:al})}
    \nonumber\\
    & = \nabla^2 f(U) - \sum_{i=1}^m \lambda'_i \nabla^2 g_i (U) \nonumber\\
    & \quad + \g (\Der g(U))^*\circ \Der g(U), 
    \quad \text{(see \eqref{eq:fosp})}
    \label{eq:HessianALOne}
\end{align}
where $\lambda_i'$ and $g_i(U)$ are the $i^{\text{th}}$ coordinates of the vectors $\lambda'\in \R^m$ and $g(U)$, respectively. In the second line above, we used the fact that $g(U)=0$ for the FOSP $U$. 
For the choice of $\lambda'=\lambda(U,U)$ from~\eqref{eq:lambdaDefn}, we reach
\begin{align}
    \nabla^2_1 L_\g(U,\lambda(U,U)) 
    & = \nabla^2 f(U) - \sum_{i=1}^m \lambda_i(U,U) \nabla^2 g_i (U)  \nonumber\\
    & \qquad + \g (\Der g(U))^*\circ \Der g(U),
    \label{eq:hessianLambdaOurChoice}
\end{align}
where $\lambda_i(U,U)$ is the $i^{\text{th}}$ coordinate of the vector $\lambda(U,U)$. 
 Let $\P_{\T_U\M_b}$ denote the projection onto the tangent space $\T_U\M_b$. Suppose also that $ \nabla^2_1 L_\g(U,\lambda(U,U))[\D,\D] \ge 0$ for every tangent direction $\D\in \T_U\M_b$. It then follows that 
\begin{align}
    0 & \preccurlyeq \P_{\T_U\M_b}\circ \nabla^2_1 L_\g(U,\lambda(U,U)) \circ \P_{\T_U\M_b}  \nonumber\\
    & = \P_{\T_U\M_b}\circ  \Big( \nabla^2 f(U) - \sum_{i=1}^m \lambda_i(U,U) \nabla^2 g_i (U) \nonumber\\
    & \qquad \quad + \g (\Der g(U))^*\circ \Der g(U) \Big) 
    \circ \P_{\T_U\M_b} 
    \qquad \text{(see \eqref{eq:hessianLambdaOurChoice})}
    \nonumber\\
    & = \P_{\T_U\M_b}\circ  \l( \nabla^2 f(U) - \sum_{i=1}^m \lambda_i(U,U) \nabla^2 g_i (U)  \r)  \circ \P_{\T_U\M_b}
    \nonumber\\
    & = \P_{\T_U\M_b}\circ \nabla^2_{\M_b} f(U) \circ  \P_{\T_U\M_b}.
    \qquad \text{(see \eqref{eq:manifoldHessian})}
    \label{eq:manifoldHessianPSD}
\end{align}
To obtain the second identity above, we used \eqref{eq:normalSpace}, and the orthogonality of tangent and normal spaces. In view of Definition~\ref{defn:sosp}, we conclude from~\eqref{eq:manifoldHessianPSD} that $U$ is an SOSP of \eqref{eq:bm}. This proves the second item in Proposition~\ref{prop:lyapunovPre} and completes the proof of Lemma~\ref{prop:lyapunovPre}.

\section{Proof of Lemma \ref{lem:derhGamma}}


Recall from Lemma~\ref{lem:derLambda} that $\lambda(U,U)$ is an analytic function of $U$ on the  set $\{U:\dist(U,\M_b)< \rho\}$, where $\rho$ is the radius of the neighborhood in Assumption~\ref{assumption:key}.   It follows that~$h_\g$ in~\eqref{eq:hGamma}  is also an analytic function of $U$ in the same set. See also~\eqref{eq:fgDefn} and~\eqref{eq:al} to review the notation used in this paragraph. 

To derive an expression for the derivative of $h_\g$ in~\eqref{eq:DerhGammaExplicit}, we calculate the directional derivative of $h_\g$ at $U$ such that $\dist(U, \M_b)<\rho$ and along the direction $\D\in \R^{d\times p}$, i.e.,
\begin{align*}
    & \Der h_\g(U)[\D] \nonumber\\
    & =
    \Der L_\g(U,\lambda(U,U))[\D] \qquad \text{(see \eqref{eq:hGamma})} \nonumber\\
    & = \Der f(U)[\D] - \langle (\Der g(U))^*[\lambda(U,U)-\g g(U)], \D \rangle \nonumber\\
    & \qquad - \langle (\Der \lambda(U,U))^* [g(U)] , \D \rangle \qquad \text{(see \eqref{eq:al})} \nonumber\\
    & = \langle (I_d - \A^*(\lambda(U,U) ))U, \D\rangle+ \frac{\g}{2 }\langle \A^*\l(  \A(UU^\top) -  b \r) U ,\D  \rangle  \nonumber\\
    &\qquad 
    -\frac{1}{2} \langle (\Der \lambda(U,U))^*[\A(UU^\top) - b ], \D \rangle.
    \,\, \text{(see (\ref{eq:fgDefn}), (\ref{eq:adjO}))} \nonumber
\end{align*}
Above, the derivative of $\lambda(U,U)$ with respect to $U$ was computed in Lemma~\ref{lem:derLambda}.
This completes the proof of Lemma~\ref{lem:derhGamma}.

\section{Proof of Proposition \ref{prop:lyapunov}}
The proof is similar to that of Proposition~\ref{prop:lyapunovPre}. Recalling~(\ref{eq:manifold}), we fix $U\in \R^{d\times p}$ such that $\dist(U,\M_b)\le   \rho'$ and $\|U\|<\xi$. Here, $\rho'>0$ is sufficiently small. To be specific, we assume that $\rho'<  \rho$, where $\rho$ is the radius of the neighborhood in Assumption~\ref{assumption:key}.  Our starting point is the expression for $\nabla h_\g(U)$ in~(\ref{eq:DerhGammaExplicit}). 
By comparing this expression with~(\ref{eq:fgDefn}),~(\ref{eq:defnC}) and~\eqref{eq:projTangent}, we can rewrite~\eqref{eq:DerhGammaExplicit} as 
\begin{align}
    &\nabla h_\g(U)  = \l(\mathrm{Id}-(\Der g(U))^*)\circ ((\Der g(U))^*)^\dagger\r) [\nabla f(U)]\nonumber\\
    & \qquad + \g (\Der g(U))^*[g(U)]  - (\Der \lambda(U,U))^*[g(U)].
    \label{eq:beforeProj}
\end{align}
Note that the second term on the right-hand side above is in the range of the operator $(\Der g(U))^*$, whereas the first term on the right-hand side above is orthogonal to this range. 
For brevity, let $\P_U := ((\Der g(U))^*)\circ ((\Der g(U))^*)^\dagger $ denote the orthogonal projection onto the subspace $\range((\Der g(U))^*)$.
After projecting both sides of~\eqref{eq:beforeProj} onto $\range((\Der g(U))^*)$:
\begin{align}
     & \P_U[\nabla h_\g(U)] \nonumber\\
     & = \g (\Der g(U))^*[g(U)]- \l(\P_U \circ (\Der \lambda(U,U))^*)\r) [g(U)]  \nonumber\\
     & =: O_\g(U) [g(U)].
    \label{eq:FOSPhGamma}
\end{align}
Below we make two observations about~\eqref{eq:FOSPhGamma}. 
\begin{enumerate}[leftmargin=*]
    \item It follows from
    the boundedness of $\M_b$ in~\eqref{eq:manifold} that the neighborhood $\{U: \dist(U,\M_b) \le \rho'\}$ is a compact set. In view of Assumption~\ref{assumption:key}, we therefore have that
    \begin{align}
    \min\l\{ \s_{m}(\Der g(U)) : \dist(U,\M_b) \le  \rho' \r\} >0,
    \label{eq:Oleg1}
\end{align}
where $\rho'$ was specified earlier. 

    \item Recall from the first item above that $\{U:\dist(U,\M_b) \le \rho'\}$ is a compact set. Recall also from Lemma~\ref{lem:derLambda} that $\Der \lambda$ is a continuous function on the this set because, earlier in this appendix, we specified that $\rho'<\rho$.
    Consequently,
    \begin{align}
        & \max\l\{ \l\| \P_U  \circ (\Der \lambda(U,U))^*) \r\|: \dist(U,\M_b)\le  \rho'\r\} \le   \nonumber\\
        & \max \l\{ \l\|   (\Der \lambda(U,U))^*) \r\| : \dist(U,\M_b) \le  \rho'\r\}  < \infty,
        \label{eq:Oleg2}
    \end{align}
    where the first inequality above uses the fact that $\P_U$ is an orthogonal projection. 
\end{enumerate}
In view of~\eqref{eq:Oleg1} and~\eqref{eq:Oleg2}, for sufficiently large $\gamma$, we conclude:  
\begin{align}
    \min\l\{ \s_{m}(O_\g(U)) : \dist(U,\M_b) \le  \rho' \r\} >0.
    \label{eq:justEstablished}
\end{align}
In particular, the operator $O_\g(U)$ has a trivial null space for every $U$ such that $\dist(U,\M_b) \le  \rho'$. 

Next, let us consider $\ol{U}$ such that $\dist(U,\M_b)\le  \rho'$ and $\|\ol{U}\|<\xi$. Suppose also that $\ol{U}$ is an FOSP of $h_\g$, i.e., $\nabla h_\g(\ol{U})=0$. It follows from~\eqref{eq:FOSPhGamma} that $O_\g(\ol{U}) [g(\ol{U})]=0$. Since we just established in~\eqref{eq:justEstablished} that  $O_\g(\ol{U}) $ has a trivial null space, $O_\g(\ol{U}) [g(\ol{U})]=0$ in turn implies that $g(\ol{U})=0$.
Combined with the assumption that $\|\ol{U}\|<\xi$, we reach that $\ol{U}\in \M_b$, see~\eqref{eq:fgDefn} and~\eqref{eq:manifold}.  From this point, by following the same steps as in the proof of Proposition~\ref{prop:lyapunovPre}, we  find that $\ol{U}$ is an FOSP of \eqref{eq:bm}. This proves the first item in Proposition~\ref{prop:lyapunov}.  

To prove the second item in Proposition~\ref{prop:lyapunov}, for the same $\ol{U}$ as in the above paragraph, we can use~(\ref{eq:al}) and~(\ref{eq:hGamma}) to calculate the (Euclidean) Hessian of $h_\g$ at $\ol{U}$ as
\begin{align}
    & \nabla^2 h_\g(\ol{U})  = \nabla^2 f(\ol{U}) - \sum_{i=1}^m (\lambda_i(\ol{U})-\g g_i(\ol{U})) \nabla^2 g_i(\ol{U})  \nonumber\\
    & \qquad - (\Der g(\ol{U}))^*\circ \Der\lambda(\ol{U})- (\Der \lambda(\ol{U}))^* \circ  \Der g(\ol{U})
    \nonumber\\
    & \qquad + \g (\Der g(\ol{U}))^*\circ \Der g(\ol{U}) - \sum_{i=1}^m g_i(\ol{U}) \nabla^2 \lambda_i(\ol{U}) 
    \nonumber\\
   & =  \nabla^2 f(\ol{U}) - \sum_{i=1}^m \lambda_i(\ol{U}) \nabla^2 g_i(\ol{U})  \nonumber\\
&\qquad   - (\Der g(\ol{U}))^*\circ \Der\lambda(\ol{U}) - (\Der \lambda(\ol{U}))^* \circ  \Der g(\ol{U})  \nonumber\\
 &  \qquad + \g (\Der g(\ol{U}))^*\circ \Der g(\ol{U}),  \quad \text{(see \eqref{eq:fosp})}
    \label{eq:exphGammaHessian}
\end{align}
where the the second identity above uses the fact that $\ol{U}$ is an FOSP of \eqref{eq:bm}; in particular, $g(\ol{U})=0$ by~\eqref{eq:fosp}.   

Next, let us assume that $\nabla^2 h_\g(\ol{U}) \succcurlyeq 0$. It follows that $\P_{\T_{\ol{U}} \M_{b}} \circ \nabla^2 h_\g(\ol{U}) \circ \P_{\T_{\ol{U}} \M_{{b}}} \succcurlyeq 0$, where $\P_{\T_{\ol{U}} \M_b}$ is the orthogonal projection onto the tangent space of $\M_b$ at $\ol{U}$:
\begin{align}
    0 & \preccurlyeq \P_{\T_{\ol{U}} \M_{b}} \circ \nabla^2 h_\g(\ol{U}) \circ \P_{\T_{\ol{U}} \M_{b}} \nonumber\\
    & = \P_{\T_{\ol{U}} \M_{b}} \circ \Big( 
    \nabla^2 f(\ol{U}) - \sum_{i=1}^m \lambda_i(\ol{U}) \nabla^2 g_i(\ol{U})  \nonumber\\
    & \qquad - (\Der g(\ol{U}))^*\circ \Der\lambda(\ol{U}) - (\Der \lambda(\ol{U}))^* \circ  \Der g(\ol{U}) 
    \nonumber\\
    & \qquad  + \g (\Der g(\ol{U}))^*\circ \Der g(\ol{U})
    \Big) \circ \P_{\T_{\ol{U}} \M_{b}} 
    \qquad \text{(see \eqref{eq:exphGammaHessian})}
    \nonumber\\
    & = \P_{\T_{\ol{U}} \M_{b}} \circ \Big( 
    \nabla^2 f(\ol{U}) - \sum_{i=1}^m \lambda_i(\ol{U}) \nabla^2 g_i(\ol{U}) \Big) \circ \P_{\T_{\ol{U}} \M_{b}} 
    \nonumber\\
    & = \P_{\T_{\ol{U}} \M_{b}} \circ \nabla_{\M_{b}}^2 f({\ol{U}})  \circ \P_{\T_{\ol{U}} \M_{b}},  \qquad \text{(see \eqref{eq:manifoldHessian})}
\end{align}
where, in the second identity above, we used the fact that the tangent space at $\ol{U}\in \M_b$ is orthogonal to $\range((\Der g(\ol{U}))^*)$, see~\eqref{eq:normalSpace}.  
From the last line above, we  conclude that $\ol{U}$ is also an SOSP of \eqref{eq:bm}, see Definition~\ref{defn:sosp}. This proves the second item and  completes the proof of Proposition~\ref{prop:lyapunov}.

\section{Proof of Lemma \ref{lem:rankInvariance}}\label{sec:proofRankInv}

Throughout the remaining proofs, we will often show the dependence on time~$t$ as a subscript. For example, we will write $U_t$ instead of $U(t)$.
In this particular proof, we assume that the radius $\rho_0$ of the neighborhood specified in Lemma~\ref{lem:rankInvariance} is strictly smaller than $\rho$, where $\rho$ is the radius of the neighborhood in Assumption~\ref{assumption:key} and Lemma~\ref{lem:derLambda}. That is, $\rho_0<\rho$. 
Recall from~(\ref{eq:fgDefn}) and~(\ref{eq:DerhGammaExplicit}) that
\begin{align}
    \nabla h_\g(U_t) & = (I_d-\A^*(\lambda(U_t,U_t))) U_t + \g \A^*(g(U_t)) U_t  \nonumber\\
    & \qquad - (\Der \lambda(U_t,U_t))^*[g(U_t)],
    \label{eq:DerhGammaExplicitRecall}
\end{align}
for every $t\in[0,\tau]$. 
Let us also define
\begin{align}
X_t :=U_t U_t^\top \in \R^{d\times d}, \qquad \text{if } t\in [0,\tau]. 
\label{eq:xFlow}
\end{align}
Note that the above flow in $\R^{d\times d}$ satisfies
\begin{align}
   X_0 &= U_0 U_0^\top, \qquad \text{(see \eqref{eq:xFlow})} \label{eq:xFlowDer}\\
    \dot{X}_t & = \dot{U}_t U_t^\top + U_t \dot{U}_t^\top \qquad \text{(see \eqref{eq:xFlow})} \nonumber\\
    &= - \nabla h_\g(U_t) U_t^\top  - U_t (\nabla h_\g(U_t))^\top 
    \, \text{(see \eqref{eq:gradFlowH})}
    \nonumber\\
    & = - (I_d-\A^*(\lambda(U_t,U_t))) X_t  -\g \A^*(g(U_t)) X_t  \nonumber\\
    & + (\Der \lambda(U_t,U_t))^*[g(U_t)] U_t^\top  - X_t (I_d-\A^*(\lambda(U_t,U_t))) \nonumber\\ 
    & \qquad  - \g X_t \A^*(g(U_t))  + U_t \l((\Der \lambda(U_t,U_t))^*[g(U_t)] \r)^\top,\nonumber
\end{align}
where the last identity above uses (\ref{eq:DerhGammaExplicitRecall}) and \eqref{eq:xFlow}. 
In the last identity above, we also used the fact that $\{A_i\}_{i=1}^m$ are symmetric matrices, see~\eqref{eq:measOp}.  
The next technical result establishes that the flow~\eqref{eq:xFlowDer} has an analytic singular value decomposition~(SVD). 
\begin{lem}[{\sc Analytic SVD}]\label{lem:analyticSVDx} Suppose that the assumptions made in Lemma~\ref{lem:rankInvariance} are fulfilled.  Then the flow~\eqref{eq:xFlowDer} has the analytic SVD
\begin{align}
    X_t \overset{\textup{SVD}}{=} V_t S_t V_t^\top,
    \qquad  \text{if }t\in [0,\tau],
    \label{eq:analyticSVDx}
\end{align}
where $V_t\in \R^{d\times d}$ is an orthonormal basis and the diagonal matrix $S_t\in \R^{d\times d}$ contains the singular values of $X_t$ in no particular order. Moreover, $V_t$ and $S_t$ are analytic functions of $t$ on the interval $[0,\tau]$. 
\end{lem}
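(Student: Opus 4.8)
The plan is to reduce the claim to the classical theorem on the analytic diagonalization of a one-parameter analytic family of real symmetric matrices, after first checking that $t\mapsto X_t$ is indeed such a family.

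First I would show that $t\mapsto U_t$ is real-analytic on $[0,\tau]$. By Lemma~\ref{lem:derhGamma}, the function $h_\gamma$ --- and hence the vector field $\nabla h_\gamma$ --- is real-analytic on the open set $\{U:\dist(U,\M_b)<\rho\}$, where $\rho$ is the radius of the neighborhood in Assumption~\ref{assumption:key}; recall that $\rho_0<\rho$ in this appendix. By the definition of $\tau$ we have $\dist(U_t,\M_b)\le\rho_0<\rho$ for every $t\in[0,\tau]$, and both endpoints $U_0,U_\tau$ lie at distance strictly below $\rho$ from $\M_b$, so the solution of the analytic ordinary differential equation~\eqref{eq:gradFlowH} exists and remains inside the domain of analyticity of $\nabla h_\gamma$ on an open interval containing $[0,\tau]$. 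Invoking the classical fact that an ordinary differential equation with real-analytic right-hand side has a real-analytic solution, I would conclude that $t\mapsto U_t$ is real-analytic there, and therefore that $X_t=U_tU_t^\top$ is a real-analytic curve of symmetric positive semi-definite matrices; see~\eqref{eq:xFlow} and~\eqref{eq:xFlowDer}.

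Second, I would apply Rellich's theorem on the analytic perturbation of symmetric matrices (as in Kato's \emph{Perturbation Theory for Linear Operators}): for a real-analytic one-parameter family $t\mapsto X_t$ of $d\times d$ real symmetric matrices, there exist real-analytic maps $t\mapsto V_t$ with $V_t^\top V_t=I_d$ and $t\mapsto S_t$ with $S_t$ diagonal such that $X_t=V_tS_tV_t^\top$. Since $X_t\succcurlyeq 0$, the diagonal entries of $S_t$ are precisely the singular values of $X_t$, which yields~\eqref{eq:analyticSVDx} together with all the claimed analyticity.

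The step I expect to be the main obstacle --- really a point that must be stated carefully rather than a hard computation --- is the invocation of Rellich's theorem: analyticity of the factorization is incompatible with ordering the singular values, which is exactly why the lemma is phrased with $S_t$ listing the singular values ``in no particular order.'' Forcing the diagonal of $S_t$ to be, say, non-increasing would generically break analyticity at a crossing of two eigenvalue branches and leave only Lipschitz continuity; allowing the branches to cross freely is what makes Rellich's theorem apply. A related subtlety is that this phenomenon is genuinely one-dimensional --- analytic eigendecompositions can fail for analytic families of two or more real parameters --- so it is essential that the family here is parametrized by the single scalar $t$. Everything else is routine bookkeeping, the only nontrivial piece of which (that~\eqref{eq:gradFlowH} does not leave the analyticity domain of $\nabla h_\gamma$ before time $\tau$) was already handled above via $\rho_0<\rho$ and the definition of $\tau$.
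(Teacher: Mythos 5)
Your proposal is correct and follows essentially the same route as the paper: analyticity of $\nabla h_\gamma$ near $\M_b$, analyticity of the ODE solution $U_t$ on $[0,\tau]$, hence analyticity of $X_t=U_tU_t^\top$, and then an analytic spectral factorization. The only cosmetic difference is that you invoke Rellich's theorem on analytic diagonalization of symmetric one-parameter families, whereas the paper cites the analytic SVD result of Bunse-Gerstner et al.; since $X_t$ is symmetric PSD the two are interchangeable here, and your added remarks on unordered branches and the one-parameter restriction correctly explain why the lemma must be phrased the way it is.
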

\begin{proof}
Recall from Lemma~\ref{lem:derhGamma} that  $h_\g(U)$ is an analytic function of $U$ in the set $\{U: \dist(U,\M_b) \le  \rho_0\}$, where~$\rho_0$ was specified in the beginning of this appendix.  By construction,~\eqref{eq:gradFlowH} remains in the $\rho_0$-neighborhood of $\M_b$ on the interval $[0,\tau]$. That is, $\dist(U_t,\M_b) \le \rho_0$ for every $t\in [0,\tau]$.  It therefore follows from Theorem~1.1 in~\cite{2008lectures} that $U_t$ is an analytic function of $t$ on the interval~$[0,\tau]$. Consequently, $X_t = U_t U_t^\top$ is also  an analytic function of $t$ on the interval $[0,\tau]$, see~\eqref{eq:xFlow}. In view of Theorem~1 in~\cite{bunse1991numerical}, it follows that $X_t$  has an analytic SVD, as claimed. 
\end{proof}

By comparing~(\ref{eq:xFlow}) and~(\ref{eq:analyticSVDx}), we record another  simple technical lemma for later use.
\begin{lem}[{\sc Decomposition}]\label{lem:decomposition} Suppose that the assumptions made in Lemma~\ref{lem:rankInvariance} are fulfilled. Then there exist $\{R_t\}_t \subset \R^{d\times p}$ such that
\begin{align}
    U_t = V_t \sqrt{S_t} R_t,
    \qquad  \text{if }t \in [0,\tau],
    \label{eq:uToV}
\end{align}
where the nonzero rows of $R_t\in \R^{d\times p}$ are orthonormal.
\end{lem}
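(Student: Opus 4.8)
\emph{Proof sketch of Lemma~\ref{lem:decomposition}.} The plan is to strip the orthogonal factor $V_t$ off of $U_t$ and then rescale the rows that survive. Fix $t\in[0,\tau]$ and consider $Y_t := V_t^\top U_t \in \R^{d\times p}$. First I would note that, by the analytic SVD $X_t = V_t S_t V_t^\top$ of Lemma~\ref{lem:analyticSVDx}, the identity $X_t = U_t U_t^\top$ in~\eqref{eq:xFlow}, and the orthonormality of $V_t$,
\begin{align}
Y_t Y_t^\top = V_t^\top U_t U_t^\top V_t = V_t^\top X_t V_t = S_t.
\label{eq:YYdiag}
\end{align}
Since $X_t$ is PSD, the entries of $S_t$ are nonnegative and $\sqrt{S_t}$ is a well-defined real diagonal matrix. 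Moreover, as $S_t$ is diagonal, \eqref{eq:YYdiag} says that the rows of $Y_t$ are pairwise orthogonal and that the $i$-th row of $Y_t$ has squared Euclidean norm $S_{t,ii}$, the $i$-th diagonal entry of $S_t$. In particular, the $i$-th row of $Y_t$ vanishes whenever $S_{t,ii}=0$.

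With this in hand, the natural candidate is $R_t := (\sqrt{S_t})^\dagger Y_t = (\sqrt{S_t})^\dagger V_t^\top U_t \in \R^{d\times p}$, where $(\sqrt{S_t})^\dagger$ is the (diagonal) Moore--Penrose pseudo-inverse of $\sqrt{S_t}$. Verifying~\eqref{eq:uToV} then reduces to checking $\sqrt{S_t}R_t = Y_t$: row by row this is immediate when $S_{t,ii}>0$, and when $S_{t,ii}=0$ both sides have zero $i$-th row by the observation of the previous paragraph. Hence $V_t\sqrt{S_t}R_t = V_t Y_t = V_t V_t^\top U_t = U_t$, which is~\eqref{eq:uToV}. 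For the orthonormality of the nonzero rows of $R_t$, I would observe that these are precisely the vectors $S_{t,ii}^{-1/2}(\text{$i$-th row of }Y_t)$ over indices $i$ with $S_{t,ii}>0$; each is a unit vector by~\eqref{eq:YYdiag}, and two distinct such vectors are orthogonal because the rows of $Y_t$ are mutually orthogonal by~\eqref{eq:YYdiag}.

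There is no serious obstacle here; the only point requiring care is the bookkeeping around the vanishing singular values of $X_t$, which is exactly what the pseudo-inverse $(\sqrt{S_t})^\dagger$ handles cleanly. I would also stress that no regularity of $R_t$ in $t$ is claimed or needed: $(\sqrt{S_t})^\dagger$ can be discontinuous at times where the rank of $X_t$ jumps, and only the pointwise existence of the factorization~\eqref{eq:uToV} is used in the subsequent arguments.\hfill\qedsymbol
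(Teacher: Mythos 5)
Your argument is essentially the paper's: you set $Y_t = V_t^\top U_t$ (the paper's $Q_t$), derive $Y_t Y_t^\top = S_t$, and read off that the rows of $Y_t$ are orthogonal with norms $\sqrt{S_{t,ii}}$, normalizing the nonzero ones to get $R_t$. The only cosmetic difference is that you package the zero-singular-value bookkeeping with the pseudo-inverse $(\sqrt{S_t})^\dagger$ rather than the paper's block-submatrix notation, and your closing remark that no continuity of $R_t$ in $t$ is required is accurate.
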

\begin{proof}
Since $V_t$ is an orthonormal basis by Lemma~\ref{lem:analyticSVDx}, we let $U_t= V_t Q_t$ for a matrix $Q_t\in \R^{d\times p}$. It follows from~(\ref{eq:xFlow}) and~(\ref{eq:analyticSVDx}) that $Q_t Q_t^\top = S_t$. 
For notational convenience, suppose that only the first $l$ diagonal entries of the diagonal matrix~$S_t$ are nonzero, for an integer $l\le p$. We let $S_{t,l}\in \R^{l\times l}$ denote the corresponding submatrix of $S_t$.
We can then write that 
\begin{align}
Q_t Q_t^\top = \l[ 
\begin{array}{cc}
    S_{t,l}  & 0_{l\times (d-l)}  \\
   0_{(d-l)\times l}  & 0_{(d-l)\times (d-l)}
\end{array}
\r].
\label{eq:simpleLemma}
\end{align}
It follows from~\eqref{eq:simpleLemma} that 
\begin{equation}
Q_{t,l} Q_{t,l}^\top = S_{t,l}, \qquad Q_{t,l^+}=0,
\label{eq:submatrices}
\end{equation}
where $Q_{t,l}\in \R^{l\times p}$ is the row submatrix of $Q_t$ that corresponds to its first $l$ rows. Similarly, $Q_{t,l^+}\in \R^{(d-l)\times p}$ contains the remaining rows of $Q_t$. It follows from~\eqref{eq:submatrices} that the rows of $Q_{t,l}$ are orthogonal to one another. That is,  $Q_{t,l} = \sqrt{S_{t,l}}R_{t,l} $ for a matrix $R_{t,l}\in \R^{l\times p}$ with orthonormal rows. Here, $\sqrt{S_{t,l}}$ is well-defined because the diagonal matrix $S_{t,l}$ contains  the positive singular values of~$X_t$, see Lemma~\ref{lem:analyticSVDx}. In turn it follows that $Q_t = \sqrt{S_t} R_t$, where we set the remaining row subset of $R_{t}\in \R^{d\times p}$ to zero.  
\end{proof}

In view of Lemma~\ref{lem:analyticSVDx}, we take the derivative with respect to $t$ of both sides of~\eqref{eq:analyticSVDx} to find that 
\begin{align}
    \dot{X}_t = \dot{V}_t S_t V_t^\top + V_t \dot{S}_t V_t^\top + V_t S_t \dot{V}_t^\top, 
    \quad  \text{if } t\in [0,\tau].
\end{align}
By multiplying both sides above by $V_t^\top$ and $V_t$ from left and right, we reach
\begin{align}
    V_t^\top \dot{X}_t V_t & = V_t^\top \dot{V}_t S_t + \dot{S}_t + S_t \dot{V}_t^\top V_t,
    \quad \text{if } t\in [0,\tau].
    \label{eq:preSingEvolveX}
\end{align}
On the right-hand side above, we used the fact that $V_t$ is an orthonormal basis by Lemma~\ref{lem:analyticSVDx}, i.e., $V_t^\top V_t = I_d$. Taking the derivative of both sides of the last identity also yields  
\begin{align}
    \dot{V}_t^\top V_t + V_t^\top \dot{V}_t = 0, 
    \qquad \text{if } t\in [0,\tau].
\end{align}
That is, $V_t^\top \dot{V}_t$ is a skew-symmetric matrix. In particular, both $\dot{V}_t^\top V_t$ and $V_t^\top \dot{V}_t$ are hollow matrices, i.e., both matrices have zero diagonal entries. By taking the diagonal part of both sides of~\eqref{eq:preSingEvolveX}, we thus arrive at 
\begin{align}
    \dot{s}_{t,i} & = v_{t,i}^\top \dot{X}_t v_{t,i},
    \qquad \text{if } t\in [0,\tau],
    \label{eq:singEvolvRawe}
\end{align}
where $s_{t,i}$ is the $i^{\text{th}}$ singular value of $X_t$ and $v_{t,i}\in \R^d$ is the corresponding singular vector.  By substituting above the  expression for $\dot{X}_t$ from~\eqref{eq:xFlowDer}, we find that 
\begin{align}
    \dot{s}_{t,i} 
    & = - 2 s_{t,i} \cdot v_{t,i}^\top \l(I_d - \A^*(\lambda(U_t,U_t)) \r)  v_{t,i} \nonumber\\
    & \qquad - 2 \g s_{t,i}\cdot v_{t,i}^\top \A^*(g(U_t)) v_{t,i} \nonumber\\
    & \qquad + 
    2 \sqrt{s_{t,i}} \cdot v_{t,i}^\top (\Der \lambda(U_t,U_t))^*[g(U_t)] \cdot R_t^\top e_i,
    \label{eq:evolvSing}
\end{align}
for every $t\in [0,\tau]$. Above, we used \eqref{eq:xFlowDer},~(\ref{eq:uToV}) and~(\ref{eq:singEvolvRawe}).
Above, we also used multiple times the fact that $(s_{t,i},v_{t,i})$ is by definition a pair of singular value and its corresponding singular vector for $X_t$. Also, $e_i\in \R^d$ in~\eqref{eq:evolvSing} stands for the $i^{\text{th}}$ canonical vector. That is, only the $i^{\text{th}}$ entry of $e_i$  is nonzero and that entry equals one. 
In view of the evolution of singular values, specified by~\eqref{eq:evolvSing}, it is evident that
\begin{align}
    & \rank(U_t) = \rank(X_t)  \le \rank(X_0) = \rank(U_0), 
\end{align}
for every $t\in [0,\tau]$.
This completes the proof of Lemma~\ref{lem:rankInvariance}. The two identities above follow from~\eqref{eq:xFlow}.

\end{document}